\documentclass[sigconf]{acmart}
\AtBeginDocument{%
  }

\copyrightyear{2026}
\acmYear{2026}
\setcopyright{cc}
\setcctype{by}
\acmConference[KDD '26]{Proceedings of the 32nd ACM SIGKDD Conference on Knowledge Discovery and Data Mining V.2}{August 09--13, 2026}{Jeju Island, Republic of Korea}
\acmBooktitle{Proceedings of the 32nd ACM SIGKDD Conference on Knowledge Discovery and Data Mining V.2 (KDD '26), August 09--13, 2026, Jeju Island, Republic of Korea}
\acmDOI{10.1145/3770855.3817791}
\acmISBN{979-8-4007-2259-2/2026/08}

\usepackage{mathtools}
\usepackage{xcolor}
\newcommand{\bst}[1]{{\textbf{\textcolor{red}{#1}}}}
\newcommand{\subbst}[1]{\textcolor{blue}{\underline{#1}}}
\newcommand{\scalea}[1]{\scalebox{0.78}{#1}}
\newcommand{\scaleb}[1]{\scalebox{0.8}{#1}}
\usepackage{multirow}
\usepackage{tcolorbox}

\newcommand{\red}[1]{\textcolor[rgb]{1.00,0.00,0.00}{#1}}

\newcommand{\vecx}{\mathbf{x}}
\newcommand{\mbw}{\mathbf{w}}
\newcommand{\mbU}{\mathbf{U}}
\newcommand{\mbI}{\mathbf{I}}
\newcommand{\mbzero}{\mathbf{0}}

\newcommand{\mbf}{\mathbf{f}}
\newcommand{\mbg}{\mathbf{g}}

\newcommand{\mcN}{\mathcal{N}}
\newcommand{\mcX}{\mathcal{X}}

\newcommand{\mcR}{\mathcal{R}}
\newcommand{\mcI}{\mathcal{I}}
\newcommand{\mcE}{\mathcal{E}}
\newcommand{\mcF}{\mathcal{F}}
\newcommand{\mcW}{\mathcal{W}}
\newcommand{\mcS}{\mathcal{S}}
\newcommand{\mcL}{\mathcal{L}}
\newcommand{\mcO}{\mathcal{O}}

\newcommand{\beps}{\boldsymbol{\epsilon}}
\newcommand{\bmu}{\boldsymbol{\mu}}
\newcommand{\bSigma}{\boldsymbol{\Sigma}}
\newcommand{\bvareps}{\boldsymbol{\varepsilon}}
\newcommand{\bw}{\boldsymbol{w}}

\newcommand{\expt}{\mathbb{E}}
\newcommand{\N}{\mathbb{N}}
\newcommand{\R}{\mathbb{R}}
\newcommand{\C}{\mathbb{C}}

\newcommand{\diff}{\mathrm{d}}
\newcommand{\herm}{\mathrm{H}}

\DeclareMathOperator{\cov}{Cov}

\newcommand{\ie}{\textit{i}.\textit{e}., }
\newcommand{\eg}{\textit{e}.\textit{g}., }

\newcommand{\idots}{\mathinner{\rotatebox[origin=c]{45}{$\cdots$}}}

\theoremstyle{plain}
\newtheorem{theorem}{Theorem}[section]
\newtheorem{proposition}[theorem]{Proposition}
\newtheorem{lemma}[theorem]{Lemma}

\theoremstyle{definition}
\newtheorem{definition}[theorem]{Definition}

\theoremstyle{remark}

\usepackage{makecell}

\usepackage{tcolorbox}
\tcbset{
  tcbset/.style={
    colback=teal!10,
    colframe=teal!70,
    colbacktitle=teal!90,
    top=1pt, 
    bottom=1pt, 
    left=1pt, 
    right=1pt
  }
}

\begin{document}

%%
%% The "title" command has an optional parameter,
%% allowing the author to define a "short title" to be used in page headers.
% \title{Disentangled and Parallelized Frequency Diffusion for Efficient Time Series Generation}
\title{Parallel Complex Diffusion for Scalable Time Series Generation}

\author{Rongyao Cai}
\authornote{Both authors contributed equally to this research.}
\affiliation{
  \institution{Institute of Cyber-Systems and Control}
  \institution{Zhejiang University}
  \city{Hangzhou}
  \country{China}
}

\author{Yuxi Wan}
\authornotemark[1]
\affiliation{
  \institution{Institute of Cyber-Systems and Control}
  \institution{Zhejiang University}
  \city{Hangzhou}
  \country{China}
}

\author{Kexin Zhang}
\authornote{Corresponding authors. e-mail: \href{mailto:zhangkexin@zju.edu.cn}{\nolinkurl{zhangkexin@zju.edu.cn}}}
\affiliation{
  \institution{Institute of Cyber-Systems and Control}
  \institution{Zhejiang University}
  \city{Hangzhou}
  \country{China}
}

\author{Ming Jin}
\authornotemark[2]
\affiliation{
%   \institution{School of Information and Communication Technology}
  \institution{Griffith University}
  \city{Brisbane}
  \country{Australia}
}

\author{Zhiqiang Ge}
\affiliation{
    \institution{School of Mathematics}
    \institution{Southeast University}
    \city{Nanjing}
    \country{China}
}

\author{Qingsong Wen}
\affiliation{
  \institution{Squirrel Ai Learning}
  \city{Bellevue}
  \country{USA}
}

\author{Yong Liu}
\affiliation{
  \institution{Institute of Cyber-Systems and Control}
  \institution{Zhejiang University}
  \city{Hangzhou}
  \country{China}
}

%%
%% By default, the full list of authors will be used in the page
%% headers. Often, this list is too long, and will overlap
%% other information printed in the page headers. This command allows
%% the author to define a more concise list
%% of authors' names for this purpose.
\renewcommand{\shortauthors}{Rongyao Cai et al.}

\begin{abstract}
    Diffusion models learn data distributions indirectly through denoising, making the difficulty of generative modeling closely tied to the dependency structure of data. 
    For time series, strong temporal dependence forces the noise / score estimator to recover highly entangled cross-time relationships, leading to the \textit{curse of entanglement}. 
    We mitigate this burden by changing the topology of the diffusion space: the Discrete Fourier Transform (DFT) decomposes temporal dependencies into spectral modes, diagonalizing second-order dependency structure and better aligning the data manifold with isotropic Gaussian noise and homogeneous diffusion dynamics. 
    However, existing frequency-aware diffusion methods mainly use the DFT to design estimator blocks under temporal DDPM/SDE frameworks, while frequency-native diffusion paths face a mathematical barrier from complex-valued dynamics.
    We propose \textbf{PaCoDi} (\textbf{Pa}rallel \textbf{Co}mplex \textbf{Di}ffusion), a frequency-native diffusion framework that constructs the diffusion path in the spectral domain while replacing the complex-valued estimator with parallel real-valued estimators for real and imaginary components. 
    Theoretically, we prove the statistical orthogonality of spectral Gaussian noise, establish \textit{quadrature forward transitions} and \textit{conditional reverse factorization}, and extend discrete PaCoDi to continuous-time spectral SDEs through a \textit{Spectral Wiener Process}. 
    We further introduce a \textit{Mean Field Theory approximation} with an \textit{Interactive Correction Branch} to handle marginal coupling, and exploit Hermitian symmetry to reduce \textbf{50\%} attention FLOPs without information loss.
    Extensive experiments on unconditional and conditional time series generation demonstrate superior generative quality and computational efficiency against 5 SOTA baselines in 5 benchmarks, respectively. 
    Code is available at \href{https://github.com/RongyaoCai/PaCoDi}{\red{https://github.com/RongyaoCai/PaCoDi}}.
\end{abstract}

%%
%% The code below is generated by the tool at http://dl.acm.org/ccs.cfm.
%% Please copy and paste the code instead of the example below.
%%
\begin{CCSXML}
\begin{CCSXML}
<ccs2012>
   <concept>
       <concept_id>10010147.10010257.10010321</concept_id>
       <concept_desc>Computing methodologies~Machine learning algorithms</concept_desc>
       <concept_significance>500</concept_significance>
       </concept>
   <concept>
       <concept_id>10002950.10003648.10003700</concept_id>
       <concept_desc>Mathematics of computing~Stochastic processes</concept_desc>
       <concept_significance>500</concept_significance>
       </concept>
 </ccs2012>
\end{CCSXML}

\ccsdesc[500]{Computing methodologies~Machine learning algorithms}
\ccsdesc[500]{Mathematics of computing~Stochastic processes}

%%
%% Keywords. The author(s) should pick words that accurately describe
%% the work being presented. Separate the keywords with commas.
\keywords{Time Series; Spectral Diffusion Theory; Computational Complexity}
%% A "teaser" image appears between the author and affiliation
%% information and the body of the document, and typically spans the
%% page.

%%
%% This command processes the author and affiliation and title
%% information and builds the first part of the formatted document.
\maketitle

\section{Introduction}

Diffusion models, including Denoising Diffusion Probabilistic Models (DDPMs)~\cite{NEURIPS2020_4c5bcfec} and Score-Based Generative Models (SGMs)~\cite{NEURIPS2020_92c3b916}, have emerged as a dominant paradigm in generative modeling. 
Recent unifying views further formalize these frameworks through the continuous-time lens of Stochastic Differential Equations (SDEs)~\cite{song2021scorebased, liu2025empoweringtsa}. 
At their core, these models indirectly learn data distributions by reversing a forward diffusion process, using a neural estimator to predict the score or noise at each diffusion step.
Consequently, the difficulty of estimation is intrinsically tied to the dependence structure of data: stronger structural dependence requires the estimator to capture highly coupled joint distributions across features.
This challenge is especially pronounced for time series, where temporal dependence is the defining characteristic that distinguishes them from sets of independent variables. 
An observation at a given temporal position is shaped by long-range dependencies, periodicity, and trends; thus, accurately estimating the noise or score at one point requires aggregating information across the entire sequence.
We refer to this dependency burden as the \textit{curse of entanglement}.

A natural strategy to mitigate this curse is to change the reparameterization space in which generative dynamics operate. 
Rather than forcing a denoising estimator to directly untangle dense temporal dependencies in the temporal domain, we apply the Discrete Fourier Transform (DFT) to map the sequence into the frequency domain. 
This spectral decomposition diagonalizes the second-order covariance structure, decoupling the entangled temporal variables into approximately orthogonal frequency modes.
Consequently, this reparameterization transforms the correlated data into a space that intrinsically aligns with Gaussian noise structure exposing isotropic diffusion transition.

The literature on diffusion models broadly bifurcates into two categories. 
The first establishes the \textit{fundamental mathematical frameworks} of the diffusion process, encompassing DDPMs~\cite{NEURIPS2020_4c5bcfec,song2021denoising}, score-based SDEs~\cite{song2021scorebased}, and probability-flow ODEs~\cite{lipman2023flow}. 
These works define the evolutionary dynamics of data within the state space.
The second focuses on the \textit{practical instantiation of the neural estimator}~\cite{liu2025empoweringtsa, peebles2023scalable, ijcai2025p580}, proposing advanced denoising algorithms tailored to these fixed diffusion paths. 
Despite these advancements, both lines predominantly perform diffusion in the native time or pixel domain. Consequently, they leave the deeply entangled coordinate representation of the time series fundamentally unchanged.

Most existing frequency-aware diffusion methods for time series remain confined to the second trajectory~\cite{10.1145/3783986,yuan2024diffusionts,Qian_2024_CVPR, Zhao_2024_CVPR}. 
They utilize the DFT to design specialized estimator blocks capable of mining global information within a pre-existing DDPM or SDE framework. 
Essentially, they modify only the neural estimator, not the underlying diffusion path. 
In this paradigm, the frequency domain is relegated to an auxiliary representation rather than serving as the native state space on which the generative process is defined.
At the level of the diffusion-path formulation, prior works~\cite{kim2025sequence,10.5555/3692070.3692444} represent rare attempts to construct a spectral diffusion path directly within the complex domain. 
However, simulating a valid complex diffusion process fundamentally necessitates a theoretically compatible score or velocity estimator. 
Frequency Diffusion~\cite{10.5555/3692070.3692444} circumvents this by flattening complex coefficients into concatenated real and imaginary parts, processing them through a \textit{monolithic real-valued backbone} (as Figure~\ref{fig:framework}a). 
This parameterization is theoretically conflicted: if interpreted as a native complex-valued estimator, it violates the \textit{holomorphic} (Cauchy-Riemann) constraints requisite for complex differentiability. 
Conversely, if interpreted as a real-valued mapping in $\mathbb{R}^{2L}$, it no longer explicitly conforms to the isomorphism with the underlying complex geometry induced by heteroscedastic spectral Gaussian noise. 
This disconnect exposes the central question addressed in this work: \textit{Can we exploit the DFT to simplify the dependency structure of the diffusion space while rigorously preserving the mathematical validity of complex SDE dynamics?}

\textit{Yes, we can!} 
To answer this, we propose PaCoDi (\textbf{Pa}rallel \textbf{Co}mplex \textbf{Di}ffusion), a frequency-native theoretical framework that defines the diffusion path in the spectral domain while resolving the estimator barrier of complex-valued dynamics through real-imaginary decoupled modeling.
Our formalism follows a rigorous ``Discrete-to-Continuous'' trajectory. 
We first prove the \textbf{statistical orthogonality} of the real and imaginary components of spectral Gaussian noise, which induces decoupled quadrature forward transitions in the frequency domain.
We then prove the \textbf{conditional factorization of reverse dynamics} given a fixed initial spectral state $\mcX_0$, yielding an idealized parallel reverse process for discrete DDPMs.
This factorization allows the complex-valued estimator to be reformulated as two parallel real-valued estimators for the real and imaginary components, enabling flexible neural architectures without violating complex differentiability constraints.
Since the true initial state is marginalized during generation, the marginal reverse dynamics inherently remain coupled by the data prior.
We therefore introduce \textit{Mean Field Theory} (MFT) to preserve the parallel architecture through an approximate factorization, accompanied by an \textit{interactive correction branch} to compensate for the phase coherence weakened by the MFT approximation.
Finally, we extend PaCoDi from discrete DDPM to a continuous-time formulation by defining a \textbf{Spectral Wiener Process} to govern heteroscedastic spectral Brownian motion.
From an efficiency perspective, PaCoDi further exploits \textit{Hermitian Symmetry}, reducing the computational complexity of nonlinear operators (\eg 50\% for attention blocks).

Our contributions establish a self-contained theoretical system:
\begin{itemize}
    \item \textbf{Rigorous Spectral Diffusion Theory:} 
    We establish the PaCoDi framework, characterized by quadrature forward transitions and conditional reverse factorization, both grounded in the statistical orthogonality of complex Gaussian noise.

    \item \textbf{A Unified Spectral SDE Framework:} 
    We unify discrete DDPMs and continuous SDEs by defining a spectral Wiener process to describe the heteroscedastic spectral Brownian motion. We further prove the equivalence between temporal, discrete and continuous frequency diffusions.

    \item \textbf{Approximate Factorization \& Correction:}
    We introduce an MFT approximation with an interactive correction branch to preserve parallel modeling while compensating for marginal coupling after the initial data state is marginalized.

    \item \textbf{Efficiency \& Empirical Performance:} 
    We exploit Hermitian symmetry to compress the spectral state space, reducing a 50\% computational complexity for attention block in PaCoDi. Extensive experiments on unconditional and conditional time series generation demonstrate consistently superior performance against 5 baselines, respectively.
\end{itemize}

\begin{figure*}
    \centering
    \includegraphics[width=0.90\linewidth]{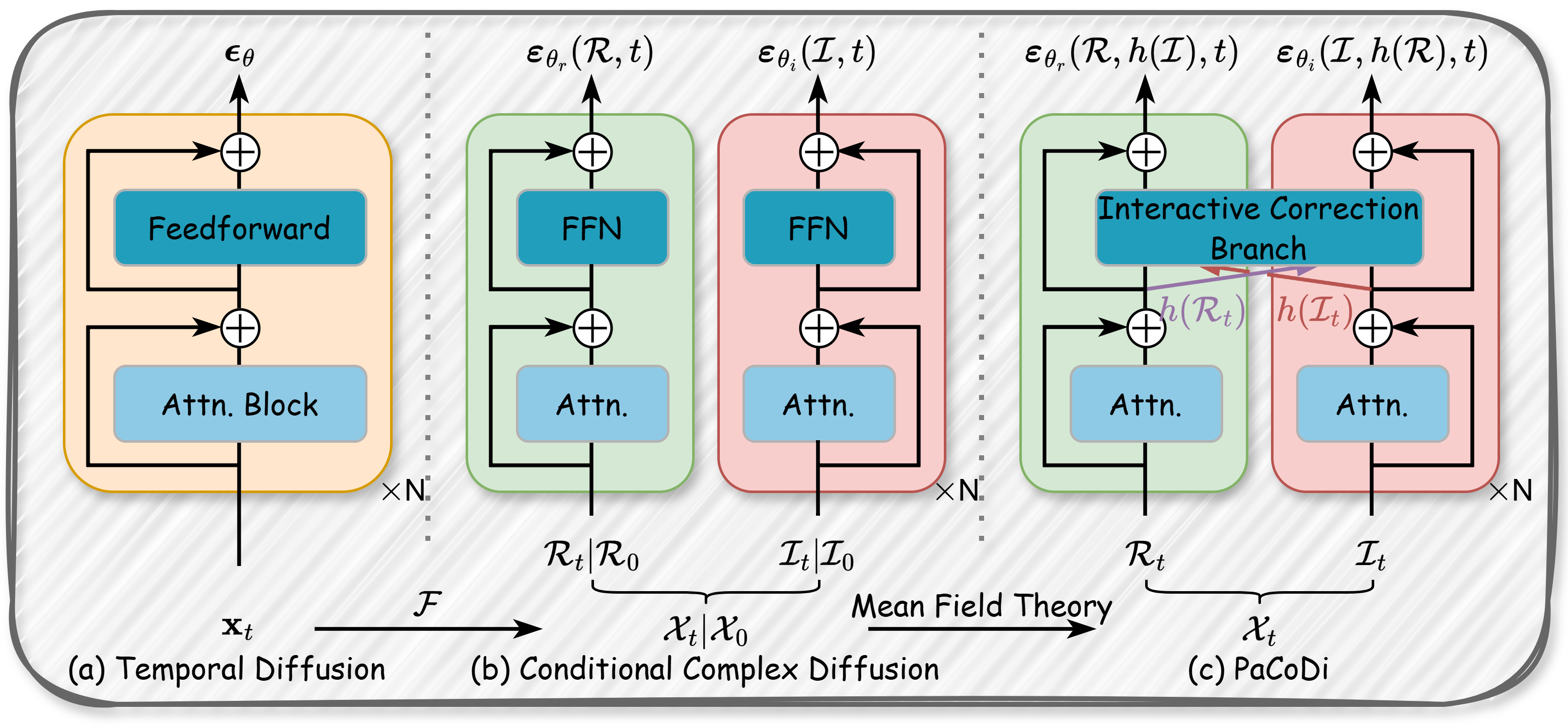}
    \vspace{-2ex}
    \caption{
        Architecture Evolution from Temporal Diffusion to PaCoDi. 
        DiT blocks are visualized in a simplified form (Attn. and FFN only) for brevity. 
        (a) Standard temporal diffusion. 
        (b) Conditional complex diffusion with quadrature components under fixed conditional state $\mcX_0$ in frequency manifold.
        (c) PaCoDi: Our proposed framework that utilizes an interactive correction branch to recover correlation constraints marginalized by the Mean Field Theory (MFT) approximation.
    }
    \label{fig:framework}
    \Description{
        Architecture evolution from temporal to parallel complex diffusion.
    }
\vspace{-2ex}
\end{figure*}

\section{Temporal Diffusion Modeling}
Diffusion models define a forward transition that transforms a complicated data distribution into a tractable prior. 
For clarity, we formulate the stochastic diffusion path along the temporal dimension $\vecx \in \R^L$; for multivariate time series, the channel dimension is preserved and jointly modeled by the neural estimator.

\textit{Forward Diffusion Transition.}\;
The diffusion transition corrupts the data $\vecx_0 \sim q(\vecx_0)$ into a sequence of latent states $\{ \vecx_t \}_{t=1}^T$. 
Each discrete diffusion step $t$ adds incremental Gaussian noise:
\begin{equation}
    q(\vecx_t | \vecx_{t-1})
    \coloneqq \mcN(\vecx_t; \sqrt{1 - \beta_t} \vecx_{t-1}, \beta_t \mbI),
    \label{eq:forward_transition}
\end{equation}
where $\beta_t \in (0, 1)$ denotes a pre-defined variance schedule and $\mbI$ is the identity covariance matrix. This iterative corruption ensures that for a sufficiently large $T$, the terminal state $\vecx_T$ converges to a standard isotropic Gaussian distribution.

\textit{Marginal Properties of the Temporal State Space.}\;
A fundamental analytical property of the linear Gaussian diffusion kernel is the closed-form expression of any intermediate state $\vecx_t$ conditioned directly on the initial observation $\vecx_0$.
\begin{proposition}
    \textbf{(Temporal Marginal Distribution).}
    Let $\alpha_t = 1 - \beta_t$ and $\bar{\alpha}_t = \prod_{s=1}^t \alpha_s$ be the cumulative signal retention factors. The marginal distribution of the state at step $t$ is given by:
    \begin{equation}
        q(\vecx_t | \vecx_0) = \mcN (\vecx_t; \sqrt{\bar{\alpha}_t} \vecx_0, (1-\bar{\alpha}_t) \mbI)
    \end{equation}
\end{proposition}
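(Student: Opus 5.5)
The plan is to argue by induction on $t$, using the reparameterization of the transition kernel in \eqref{eq:forward_transition}. For $t=1$ the claim is immediate, because $\bar{\alpha}_1 = \alpha_1 = 1-\beta_1$ and $q(\vecx_1|\vecx_0)$ is precisely the one-step kernel. For the inductive step, assume that $q(\vecx_{t-1}|\vecx_0) = \mcN(\vecx_{t-1}; \sqrt{\bar{\alpha}_{t-1}}\vecx_0, (1-\bar{\alpha}_{t-1})\mbI)$. By the Markov property of the forward chain,
\begin{equation*}
q(\vecx_t|\vecx_0)=\int q(\vecx_t|\vecx_{t-1})\,q(\vecx_{t-1}|\vecx_0)\,\diff \vecx_{t-1},
\end{equation*}
so we need the marginal of a linear-Gaussian model.

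We will not evaluate this integral directly. Instead, we write the two variables as linear functions of independent standard Gaussians:
\begin{equation*}
\vecx_{t-1}=\sqrt{\bar{\alpha}_{t-1}}\vecx_0+\sqrt{1-\bar{\alpha}_{t-1}}\,\beps_{t-1},
\qquad
\vecx_t=\sqrt{\alpha_t}\vecx_{t-1}+\sqrt{\beta_t}\,\beps_t .
\end{equation*}
Here $\beps_{t-1},\beps_t\sim\mcN(\mbzero,\mbI)$, and $\beps_t$ is independent of $\beps_{t-1}$. This independence holds because the fresh noise at step $t$ is independent of the past given $\vecx_{t-1}$, and its law does not depend on $\vecx_{t-1}$.

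Substituting the first expression into the second gives
\begin{equation*}
\vecx_t=\sqrt{\alpha_t\bar{\alpha}_{t-1}}\,\vecx_0+\sqrt{\alpha_t(1-\bar{\alpha}_{t-1})}\,\beps_{t-1}+\sqrt{\beta_t}\,\beps_t .
\end{equation*}
The noise part is a sum of independent zero-mean isotropic Gaussians, so it is Gaussian with covariance
\begin{equation*}
\bigl(\alpha_t-\alpha_t\bar{\alpha}_{t-1}+1-\alpha_t\bigr)\mbI=(1-\bar{\alpha}_t)\mbI .
\end{equation*}
The mean coefficient is $\sqrt{\alpha_t\bar{\alpha}_{t-1}}=\sqrt{\bar{\alpha}_t}$. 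Together these give exactly the claimed distribution, which closes the induction.

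The only step that needs care is justifying that a sum of independent Gaussian vectors is Gaussian with added covariances. I would cite closure of Gaussians under affine maps and independent sums, which can be checked quickly with characteristic functions. I would also state the independence of $\beps_t$ from $(\vecx_0,\beps_{t-1})$ explicitly. The variance telescoping $\alpha_t(1-\bar{\alpha}_{t-1})+\beta_t = 1-\bar{\alpha}_t$ is routine algebra, and everything else is bookkeeping.
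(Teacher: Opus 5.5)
Your induction is correct: the base case, the independence of the fresh noise $\beps_t$ from $(\vecx_0,\vecx_{t-1})$ (which uses $\beta_t>0$), and the identity $\alpha_t(1-\bar{\alpha}_{t-1})+\beta_t=1-\bar{\alpha}_t$ all check out. The paper states this proposition as a standard fact and gives no dedicated proof; its closest argument, the proof of Theorem~\ref{thm:add_unitary_inv}, uses the same reparameterization-plus-Gaussian-superposition idea, except that it combines all the noise terms in one step with squared weights summing to $1-\bar{\alpha}_t$ instead of merging two at a time by induction.
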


Through the reparameterization trick, the state $\vecx_t$ can be explicitly constructed with standard temporal Gaussian noise $\epsilon$:
\begin{equation}
    \vecx_t 
    = \sqrt{\bar{\alpha}_t} \vecx_0 + \sqrt{1 - \bar{\alpha}_t} \beps , \quad 
    \beps \sim \mcN(\mbzero, \mbI)
\end{equation}

While standard temporal diffusion effectively models sequence generation, the inherent \textit{curse of entanglement} between temporal positions forces the neural estimator to resolve highly coupled joint dependencies, bottlenecking optimization efficiency.

We address this bottleneck by shifting the diffusion dynamics into the complex frequency space. The Fourier transform reparameterizes temporal dependencies into spectral modes, and the induced spectral Gaussian noise admits an analytically tractable real-imaginary structure. This prepares the ground for a parallel complex diffusion process without claiming that real data spectra are unconditionally independent.

\section{Parallel Complex Diffusion Framework}

\subsection{Forward Diffusion in the Frequency Domain}
\label{sec:forward_diffusion}
We initiate our framework by projecting the discrete diffusion transition into the frequency domain.
We define the complex-valued state in the spectral domain as $\mcX_t = \mcF(\vecx_t) = \mcR_t + j \mcI_t \in \C^L$, where $\mcF(\cdot)$ denotes the DFT operator along the temporal dimension.

By the linearity of the Fourier transform, the Gaussian transition kernel of the standard DDPM, \ie Eq.~\eqref{eq:forward_transition}, induces an equivalent transition in the frequency domain:
\begin{equation}
    \mcX_t = \sqrt{1 - \beta_t} \mcX_{t-1} + \sqrt{\beta_t} \mcE_t ,
    \label{eq:forward_transition_freq}
\end{equation}
where $\mcE_t = \mcF(\beps_t)$ is the spectral projection of the standard temporal Gaussian noise $\beps_t \sim \mcN(\mbzero, \sigma^2 \mbI)$, which we define as \textit{Hermitian Complex Gaussian noise}. 
Theorem~\ref{thm:add_unitary_inv} ensures that weighted sums of such spectral Gaussian noises remain within the same Hermitian complex Gaussian family, enabling the standard DDPM reparameterization.
The tractability of parallelizing this diffusion process is fundamentally grounded in the statistical real-imaginary orthogonality and the heteroscedastic structure of $\mcE_t$.

\begin{tcolorbox}[tcbset]
\begin{theorem}
\label{thm:freq_ortho}
    \textbf{(Statistical Orthogonality of Complex Noise).}
    Let $\beps \sim \mcN(\mbzero, \sigma^2 \mbI)$ be an isotropic Gaussian vector. Its spectral projection $\mcE = \mcF(\beps)$ decomposes into real and imaginary components $\bvareps_r = \Re(\mcE)$ and $\bvareps_i = \Im(\mcE)$. These quadrature noise components are jointly Gaussian and cross-orthogonal:
    \begin{equation}
    \cov(\bvareps_r, \bvareps_i) = \mbzero_{L \times L}.
    \end{equation}
    where $\bvareps_r \sim \mcN(\mbzero, \bSigma_r)$ and $\bvareps_i \sim \mcN(\mbzero, \bSigma_i)$. Specifically, $\bSigma_r$ and $\bSigma_i$ are heteroscedastic covariance matrices determined by frequency indices and Nyquist frequency. 
\end{theorem}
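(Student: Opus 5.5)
The plan is a direct second-moment computation, using the fact that the real and imaginary parts of the DFT are both real linear maps of $\beps$. Write $\mcE_k = \sum_{n=0}^{L-1} \epsilon_n e^{-j 2\pi kn/L}$, with the normalization left implicit (a unitary factor $1/\sqrt{L}$ only rescales every covariance below). Then
\begin{equation*}
\bvareps_r = \mathbf{C}\beps, \qquad \bvareps_i = -\mathbf{S}\beps,
\end{equation*}
where $\mathbf{C}_{kn} = \cos(2\pi kn/L)$ and $\mathbf{S}_{kn} = \sin(2\pi kn/L)$ are real symmetric $L\times L$ matrices. Stacking gives $[\bvareps_r;\bvareps_i] = [\mathbf{C}; -\mathbf{S}]\,\beps$. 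This is a linear image of a Gaussian vector, so the two components are jointly Gaussian with zero mean. This already settles joint Gaussianity and the marginal laws, with $\bSigma_r = \sigma^2 \mathbf{C}\mathbf{C}^\top$ and $\bSigma_i = \sigma^2 \mathbf{S}\mathbf{S}^\top$. The cross-covariance is
\begin{equation*}
\cov(\bvareps_r,\bvareps_i) = -\sigma^2 \mathbf{C}\mathbf{S}^\top .
\end{equation*}

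The main step is evaluating the trigonometric sums with the orthogonality relations of the DFT basis. Use product-to-sum identities and the geometric-sum fact $\sum_{n} e^{j2\pi mn/L} = L\,\mathbb{1}[m \equiv 0 \bmod L]$. For every pair $(k,k')$,
\begin{equation*}
(\mathbf{C}\mathbf{S}^\top)_{kk'} = \tfrac12\sum_n \bigl[\sin(2\pi(k+k')n/L) - \sin(2\pi(k-k')n/L)\bigr],
\end{equation*}
and each sine sum vanishes because it is the imaginary part of a geometric sum that is either $0$ or the real number $L$. Hence the cross-covariance is $\mbzero_{L\times L}$.

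The same identities give the diagonal structure:
\begin{equation*}
(\mathbf{C}\mathbf{C}^\top)_{kk'} = \tfrac{L}{2}\bigl(\mathbb{1}[k=k'] + \mathbb{1}[k+k'\equiv 0]\bigr), \qquad
(\mathbf{S}\mathbf{S}^\top)_{kk'} = \tfrac{L}{2}\bigl(\mathbb{1}[k=k'] - \mathbb{1}[k+k'\equiv 0]\bigr).
\end{equation*}
This is the heteroscedasticity the statement refers to. At the DC index $k=0$, and at the Nyquist index $k=L/2$ when $L$ is even, the real part has variance $\sigma^2 L$ and the imaginary part is identically zero. At every other frequency each component has variance $\sigma^2 L/2$. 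The off-diagonal entries at $k' = L-k$ encode Hermitian symmetry: $\bvareps_r$ is perfectly correlated across mirror frequencies and $\bvareps_i$ is anti-correlated across them. So the covariances depend on the frequency index and on the Nyquist frequency, as claimed.

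The main obstacle is bookkeeping in the degenerate cases $k \pm k' \equiv 0 \pmod L$, in particular $k=k'=0$ and $k=k'=L/2$. In these cases both conditions hold at once, so the indicator terms add or cancel. I will handle them by treating the sums through $\Re$ and $\Im$ of $\sum_n e^{j2\pi mn/L}$ uniformly, rather than case by case, so the cross term is seen to vanish in every case, including the degenerate ones. Finally, I will note that uncorrelatedness implies independence here only because the components are jointly Gaussian. That independence is what the later quadrature forward transitions need. The independence holds between the real and imaginary blocks; it does not hold across mirrored frequencies within a block.
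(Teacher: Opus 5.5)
Your proposal is correct and follows essentially the same route as the paper. Both arguments write the quadratures as cosine and negative-sine linear images of $\beps$ and get joint Gaussianity from linearity. Both then use $\expt[\epsilon_n\epsilon_m]=\sigma^2\delta_{n,m}$, product-to-sum identities and the geometric-sum orthogonality to make every cross-covariance entry vanish, and read off the within-block structure $\delta_L(k-l)\pm\delta_L(k+l)$. Your matrix notation $\mathbf{C}\mathbf{S}^\top$ is just a compact packaging of the same sums.

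The only difference is normalization. The paper uses the unitary $1/\sqrt{L}$ DFT, so its DC/Nyquist and generic variances are $\sigma^2$ and $\sigma^2/2$, not your $\sigma^2 L$ and $\sigma^2 L/2$.
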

\end{tcolorbox}

\textsc{Proof.}\; See Appendix~\ref{app:spec_dist_gauss}. \qed

Consequently, the induced Hermitian spectral noise law factorizes perfectly across its quadrature components:
\begin{equation}
    p(\mcE) \equiv p(\bvareps_r, \bvareps_i) = p(\bvareps_r) p(\bvareps_i) .
\end{equation}

By Theorem~\ref{thm:freq_ortho}, Eq.~\eqref{eq:forward_transition_freq} can be equivalently formulated as two parallel, real-valued transitions driven by independent spectral noise components. The conditional marginal distribution at any diffusion step $t$ can thus be expressed in closed form as:
\begin{equation}
    \mcR_t = \sqrt{\bar{\alpha}_t} \mcR_0 + \sqrt{1-\bar{\alpha}_t} \bvareps_r , \quad
    \mcI_t = \sqrt{\bar{\alpha}_t} \mcI_0 + \sqrt{1-\bar{\alpha}_t} \bvareps_i ,
\label{eq:decoupled_marginal}
\end{equation}
where $\bar{\alpha}_t = \prod_{s=1}^t (1-\beta_s)$ and $\bvareps_r \perp \bvareps_i$ are the cumulative spectral noise components from diffusion step 0 to $t$.
Crucially, this structural decoupling implies that $q(\mcX_t | \mcX_0) = q(\mcR_t | \mcR_0) q(\mcI_t | \mcI_0)$ (see Lemma~\ref{lem:cond_factor_traj}), allowing the parallel forward diffusion of real-imaginary modeling.

\subsection{Reverse Factorization Dynamics}
The generative foundation of our PaCoDi framework requires ensuring that the quadrature components remain structurally decoupled during the denoising phase. 
While the marginal distribution $p(\mcX_t)$ is inevitably coupled due to the complex correlations inherent in the natural data $\mcX_0$, the transitions conditioned on the initial data state exhibit a crucial independence property.

\begin{tcolorbox}[tcbset]
\begin{proposition}
\label{prop:cond_reverse_factor}
    \textbf{(Conditional Reverse Factorization).}
    Given the quadrature independence of the forward transition $q(\mcX_t | \mcX_{t-1})$, the reverse posterior $q(\mcX_{t-1} | \mcX_t, \mcX_0)$ factorizes into two independent marginals when conditioned on a fixed initial state $\mcX_0$:
    \begin{equation}
        q(\mcX_{t-1} | \mcX_t, \mcX_0)
        = q(\mcR_{t-1} | \mcR_t, \mcR_0) \cdot q(\mcI_{t-1} | \mcI_t, \mcI_0).
    \label{eq:reverse_factorization}
    \end{equation}
\end{proposition}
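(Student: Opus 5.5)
We prove the factorization through Bayes' rule together with the Markov property of the forward chain. We write
\begin{equation*}
q(\mcX_{t-1}\mid \mcX_t,\mcX_0)=\frac{q(\mcX_t\mid \mcX_{t-1})\,q(\mcX_{t-1}\mid \mcX_0)}{q(\mcX_t\mid \mcX_0)},
\end{equation*}
where the Markov property gives $q(\mcX_t\mid\mcX_{t-1},\mcX_0)=q(\mcX_t\mid\mcX_{t-1})$. The plan is to show that each of the three factors splits into a real part and an imaginary part. The ratio then splits the same way, and regrouping gives Eq.~\eqref{eq:reverse_factorization}. Throughout, densities on $\C^L$ are taken with respect to the real coordinates $(\mcR,\mcI)$. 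Because of Hermitian symmetry, one restricts to the non-redundant coordinates (or treats the degenerate directions uniformly); this is the usual caveat, and we would fix it once at the start.

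The three factors are handled as follows.
\begin{itemize}
\item \emph{One-step forward kernel.} Eq.~\eqref{eq:forward_transition_freq} reads $\mcR_t=\sqrt{1-\beta_t}\,\mcR_{t-1}+\sqrt{\beta_t}\,\Re(\mcE_t)$ and $\mcI_t=\sqrt{1-\beta_t}\,\mcI_{t-1}+\sqrt{\beta_t}\,\Im(\mcE_t)$. By Theorem~\ref{thm:freq_ortho}, $\Re(\mcE_t)$ and $\Im(\mcE_t)$ are jointly Gaussian with zero cross-covariance, so they are independent. Hence
\begin{equation*}
q(\mcX_t\mid\mcX_{t-1})=q(\mcR_t\mid\mcR_{t-1})\,q(\mcI_t\mid\mcI_{t-1}),
\end{equation*}
with Gaussian factors of covariances $\beta_t\bSigma_r$ and $\beta_t\bSigma_i$.
\item \emph{Marginals given $\mcX_0$.} For $q(\mcX_{t-1}\mid\mcX_0)$ and $q(\mcX_t\mid\mcX_0)$ we use Eq.~\eqref{eq:decoupled_marginal}, equivalently Lemma~\ref{lem:cond_factor_traj}. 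The cumulative noise is $\mcF$ of a temporal Gaussian, so Theorem~\ref{thm:add_unitary_inv} and Theorem~\ref{thm:freq_ortho} again give independent real and imaginary parts. This yields $q(\mcX_s\mid\mcX_0)=q(\mcR_s\mid\mcR_0)\,q(\mcI_s\mid\mcI_0)$ for $s=t-1,t$.
\end{itemize}

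Substituting the three factorizations into Bayes' rule and grouping the real factors and the imaginary factors gives a product of two terms. The first is $q(\mcR_t\mid\mcR_{t-1})\,q(\mcR_{t-1}\mid\mcR_0)/q(\mcR_t\mid\mcR_0)$, and this equals $q(\mcR_{t-1}\mid\mcR_t,\mcR_0)$. That identification requires one more fact: the real chain $\{\mcR_s\}$ is itself Markov and does not depend on $\mcI_0$. This holds because the real components are driven only by their own noise, and the noises at different steps are independent. The imaginary term is treated the same way, which completes the proof.

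The main obstacle is making precise the step "zero cross-covariance implies independence" for the noise, both per step and cumulatively. This needs joint Gaussianity of $(\bvareps_r,\bvareps_i)$, which holds because both are linear images of the same real Gaussian $\beps$, and we take it from Theorem~\ref{thm:freq_ortho}. It also needs the degenerate coordinates handled correctly: the imaginary parts at DC and Nyquist are identically zero, and $\mcX$ satisfies the Hermitian constraint. We would deal with this by working with densities on the half-spectrum support, or by noting that the degenerate coordinates factor trivially as point masses. As a sanity check, one can also compute the posterior mean and covariance explicitly as in DDPM, using the closed forms of $\bSigma_r$ and $\bSigma_i$. Both matrices are diagonal on the half-spectrum, so the posterior covariance is block-diagonal between real and imaginary parts, which confirms the factorization.
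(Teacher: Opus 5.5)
Your proposal is correct and takes essentially the same route as the paper. Both use Bayes' rule with the Markov property, factorize the one-step kernel via Theorem~\ref{thm:freq_ortho} and the two conditional marginals via Lemma~\ref{lem:cond_factor_traj}, and then regroup into a real ratio and an imaginary ratio; your extra remark that the real chain is Markov and does not depend on $\mcI_0$ usefully justifies identifying the regrouped ratio with $q(\mcR_{t-1}\mid\mcR_t,\mcR_0)$, a step the paper simply asserts.
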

\end{tcolorbox}

\textsc{Proof.}\; See Appendix~\ref{app:cond_reverse_factor}. \qed

This proposition establishes the \textit{structural orthogonality} of the complex diffusion mechanism. It implies that if we model the reverse dynamics using the standard variational bound, the Evidence Lower Bound (ELBO) theoretically splits into two independent terms. 

However, unlike the standard temporal domain setting where noise is strictly isotropic, the complex noise components $\bvareps_r \sim \mcN(\mbzero, \bSigma_r)$ and $\bvareps_i \sim \mcN(\mbzero, \bSigma_i)$ are governed by specific covariance structures (\eg Hermitian symmetry). 
Thus, minimizing the ELBO corresponds to minimizing the Mahalanobis distance rather than the simple Euclidean norm (see Appendix~\ref{app:hetero_reverse_derivation}). 
Consequently, we can parameterize the noise estimators using two neural networks optimized via a decoupled objective:
\begin{equation}
\begin{aligned}
    &\min_\theta \expt_{\mcE, t | \mcX_0} 
    \left[ 
        \lambda_t \left \Vert 
            \mcE - \mcE_{\theta}(\mcX, t) 
        \right \Vert^2_{\bSigma^{-1}} 
    \right] \iff 
    \\
    &\min_{\theta_r, \theta_i} 
    \underbrace{ \expt_{\bvareps_r, t | \mcR_0} \left[ 
        \lambda_t \left \Vert 
            \bvareps_r - \bvareps_{\theta_r}(\mcR, t) 
        \right \Vert^2_{\bSigma_r^{-1}} 
    \right] }_{\mcL_{\text{real}}^{\text{disc}}(\theta_r)}
    + \underbrace{ \expt_{\bvareps_i, t | \mcI_0} \left[ 
        \lambda_t \left \Vert 
            \bvareps_i - \bvareps_{\theta_i}(\mcI, t) 
        \right \Vert^2_{\bSigma_i^{-1}} 
    \right] }_{\mcL_{\text{imag}}^{\text{disc}}(\theta_i)}
\end{aligned}
\label{eq:ideal_decoupled_loss}
\end{equation}
where $\Vert \mathbf{z} \Vert^2_{\bSigma^{-1}} = \mathbf{z}^\top \bSigma^{-1} \mathbf{z}$ denotes the squared Mahalanobis norm, and $\lambda_t = \frac{1 - \alpha_t}{2 \alpha_t (1 - \bar{\alpha}_t)}$ is the weighting term derived from the ELBO.

Eq.~\eqref{eq:ideal_decoupled_loss} presents an \textbf{idealized parallel diffusion framework}: optimizing the real trajectory $\mcL_{\text{real}}^{\text{disc}}(\theta_r)$ and the imaginary trajectory $\mcL_{\text{imag}}^{\text{disc}}(\theta_i)$ are mathematically separable tasks conditioned on the known initial state $\mcX_0$. 
Specifically, the real-part network $\theta_r$ takes only $\mcR_t$ as input and is constrained solely by the real noise component $\bvareps_r$, and likewise for the imaginary part.

\subsection{Parallel Modeling via Mean Field Theory}
\label{sec:mft_parallel}
While the \textit{true conditional posterior} $q(\mcX_{t-1} | \mcX_t, \mcX_0)$ is strictly factorized, the \textit{generative transition} required for sampling, $p_{\theta}(\mcX_{t-1} | \mcX_t)$, must approximate the intractable marginal $q(\mcX_{t-1} | \mcX_t)$. 
Because $q(\mcX_{t-1} | \mcX_t)$ integrates over the coupled data prior $q(\mcX_0)$, it is inherently entangled (\eg phase coherence in the signal):
\begin{equation} 
    q(\mcX_{t-1} | \mcX_t) 
    = \int q(\mcX_{t-1} | \mcX_t, \mcX_0) q(\mcX_0 | \mcX_t) \diff \mcX_0 .
\end{equation}

To reconcile our parallel modeling objective with these statistically coupled marginal dynamics, we invoke the \textbf{Mean Field Theory (MFT) approximation}. 
Instead of assuming that the true data prior are perfectly decoupled, we introduce a factorized variational proxy for the marginal reverse transition (Details in Appendix~\ref{sec:transition_to_mft_proof}):
\begin{equation}
    p_{\theta}(\mcX_{t-1} | \mcX_t) 
    \approx p_{\theta_r}(\mcR_{t-1} | \mcR_t) \cdot p_{\theta_i}(\mcI_{t-1} | \mcI_t) .
\label{eq:mft_approx}
\end{equation}

This factorized proxy allows the reverse estimator to be trained efficiently through two branch-wise objectives.

However, the naive mean-field factorization in Eq.~\eqref{eq:mft_approx} weakens the cross-quadrature dependencies required to preserve phase coherence. 
To compensate for this approximation, we introduce an \textbf{interactive correction branch} in the estimator parameterization.
Each branch predicts its own quadrature noise while receiving a projected representation of the counterpart quadrature:
\begin{equation}
    \mcE_{\theta}(\mcX, t)
    = \bvareps_{\theta_r}(\mcR, h(\mcI), t)
    + j \bvareps_{\theta_i}(\mcI, h(\mcR), t) ,
\end{equation}
where $h(\cdot)$ denotes a lightweight projection module that maps the counterpart quadrature into the branch's native input space.
In this design (illustrated in Figure~\ref{fig:framework}c), $p_{\theta}$ retains branch-wise parallel outputs while allowing cross-quadrature information flow.

By combining the Mahalanobis distance metric derived in Eq.~\eqref{eq:ideal_decoupled_loss} with this interactive parameterization, we arrive at the final parallel training objective for PaCoDi:
\begin{align} 
    \mcL_{\text{real}}^{\text{disc}} (\theta_r)
    &= \expt_{\mcX_0, \mcE, t} \left[ 
        \lambda_t \left \Vert 
            \bvareps_{\theta_r} (\mcR, h(\mcI), t) - \bvareps_r 
        \right \Vert^2_{\bSigma_r^{-1}} 
    \right] , 
\label{eq:final_loss_real} 
    \\ 
    \mcL_{\text{imag}}^{\text{disc}} (\theta_i)
    &= \expt_{\mcX_0, \mcE, t} \left[ 
        \lambda_t \left \Vert 
            \bvareps_{\theta_i} (\mcI, h(\mcR), t) - \bvareps_i 
        \right \Vert^2_{\bSigma_i^{-1}} 
    \right] .
\label{eq:final_loss_imag} 
\end{align}
The corresponding marginal score approximation and the formal justification for this loss factorization are provided in Appendix~\ref{app:mft_marginal}.

\subsection{Bypassing Holomorphic Constraints} 
Our framework resolves a critical theoretical conflict inherent in complex-valued deep learning: \textit{Holomorphicity vs. Expressivity}.

If the noise estimator is modeled as a native complex-valued neural function, complex differentiability requires satisfying the \textit{Cauchy-Riemann Equations} ($\frac{\partial u}{\partial x} = \frac{\partial v}{\partial y}, \frac{\partial u}{\partial y} = - \frac{\partial v}{\partial x}$). 
This holomorphic requirement severely restricts admissible nonlinear parameterizations. 
In particular, by \textit{Liouville's Theorem}, a bounded entire holomorphic function must be constant, revealing a fundamental tension between stability-oriented boundedness and nonlinear expressivity in native holomorphic modeling.

PaCoDi avoids this tension by keeping the diffusion path directly in the frequency domain, while reformulating the learnable reverse estimator into parallel real and imaginary quadrature branches justified by the conditional factorization. 
This real-imaginary decoupling rigorously satisfies the mathematical requirements of the complex diffusion dynamics, without forcing the neural estimator itself to be a holomorphic function. 
Consequently, each branch can flexibly employ standard, non-holomorphic activation functions, while the Mahalanobis objective and interactive correction branch preserve the statistical structure of the complex spectral process.

\section{Continuous-time Perspective: Frequency SDEs} 
\label{sec:continuous_sde}

To provide a unified theoretical framework for the discrete PaCoDi, we extend the frequency diffusion process to continuous-time SDEs.

\subsection{Forward SDE and Spectral Wiener Process}
Consider the discrete transition rule $\mcX_{t} = \sqrt{1 - \beta_t} \mcX_{t-1} + \sqrt{\beta_t} \mcE$. 
By taking the continuum limit $\Delta t \to 0$ and applying a Taylor expansion to the drift and diffusion coefficients (detailed derivations in Appendix~\ref{app:derivation_SDE}), the infinitesimal evolution of the frequency state converges to the following It\^o SDE:
\begin{equation}
    \diff \mcX
    = -\frac{1}{2}\beta(t)\mcX \diff t + \sqrt{\beta(t)} \diff \mcW
    = \mbf(\mcX, t) \diff t + \mbg(t) \diff \mcW ,
\label{eq:forward_SDE}
\end{equation}
where $\mbf(\cdot)$ and $\mbg(\cdot)$ denote the drift and diffusion coefficients of a Variance Preserving (VP) SDE. 
The stochasticity of continuous-time PaCoDi is driven by newly defined \textit{Spectral Wiener Process} $\mcW$.

\begin{tcolorbox}[tcbset]
\begin{definition}
    \textbf{(Spectral Wiener Process).}
    Let $\mbw_t \in \R^L$ be a standard temporal Wiener process. 
    The \textit{Spectral Wiener Process} $\mcW_t$ is defined as the complex-valued process induced by the Fourier transform of $\mbw_t$:
    \begin{equation}
        \mcW_t \coloneqq \mcF(\mbw_t).
    \end{equation}
    
    Its differential increment, $\diff \mcW = \diff \bw_r + j \diff \bw_i$, is characterized by the following spectral properties:
    (1) \textit{Quadrature Orthogonality}: The real and imaginary components are strictly statistically independent, \ie $\diff \bw_r \perp \diff \bw_i$.
    (2) \textit{Complex Gaussianity}: The quadrature increments $\diff \bw_r$ and $\diff \bw_i$ are Gaussian with heteroscedastic covariance matrices $\bSigma_r \diff t$ and $\bSigma_i \diff t$, respectively.
\end{definition}
\end{tcolorbox}

\subsection{Bridging Complex Scores and Parallel Real-valued SDEs}
To enable generation, we must reverse this continuous diffusion process. 
The reverse-time dynamics are governed by the gradient of the log-probability density, known as the \textit{score function}.

\subsubsection{Ideal Quadrature Reverse Dynamics}
We first establish the theoretical form of the reverse SDE under ideal conditions. 
Utilizing \textit{Wirtinger Calculus} for non-holomorphic functions, the complex score function decomposes linearly:
\begin{equation}
    \nabla_{\bar{\mcX}} \log p_{t|0}(\mcX)
    = \frac{1}{2} \left(
        \nabla_{\mcR} \log p_{t|0}(\mcX) 
        + j \nabla_{\mcI} \log p_{t|0}(\mcX)
    \right) ,
\label{eq:score_decomp}
\end{equation}
where $p_{t|0}(\mcX) \coloneqq p(\mcX_t | \mcX_0)$ is the true conditional transition density.

Leveraging the conditional factorizability (Lemma~\ref{lem:cond_factor_traj}), for a fixed initial state $\mcX_0$, the cross-terms in the log-density vanish, allowing the complex score to strictly split into independent real and imaginary gradients, \ie $\nabla_{\mcR} \log p_{t|0}(\mcX) = \nabla_{\mcR} \log p_{t|0}(\mcR)$. 

\begin{tcolorbox}[tcbset]
\begin{theorem}
\label{thm:reverse_SDE}
    \textbf{(Parallel Quadrature Reverse SDE).}
    Under the quadrature independence of the Spectral Wiener Process, the reverse-time SDE for a fixed initial condition $\mcX_0$ is given by:
    \begin{equation}
        \diff \mcX
        = \left[ 
            \mbf(\mcX, t) - \mbg(t) \bSigma_{\mcX} \mbg(t)^{\top} \nabla_{\bar{\mcX}} \log p_{t|0}(\mcX) 
        \right] \diff t + \mbg(t) \diff \hat{\mcW} ,
    \label{eq:reverse_complex_SDE}
    \end{equation}
    which naturally separates into two parallel, real-valued SDEs:
    \begin{equation}
    \begin{dcases}
        \diff \mcR
        = \left[
            \mbf(\mcR, t) - \frac{1}{2} \mbg(t) \bSigma_r \mbg(t)^{\top} \nabla_{\mcR} \log p_{t|0}(\mcR)
        \right] \diff t + \mbg(t) \diff \hat{\bw}_r
        \\
        \diff \mcI
        = \left[
            \mbf(\mcI, t) - \frac{1}{2} \mbg(t) \bSigma_i \mbg(t)^{\top} \nabla_{\mcI} \log p_{t|0}(\mcI)
        \right] \diff t + \mbg(t) \diff \hat{\bw}_i
    \end{dcases}
    \end{equation}
    where $\diff \hat{\mcW} = \diff \hat{\bw}_{r} + j \diff \hat{\bw}_{i}$ is the increment of the inverse Spectral Wiener Process flowing backward in time.
\end{theorem}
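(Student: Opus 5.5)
The plan is to identify the complex spectral state with the real vector $\mathbf{Z} = (\mcR, \mcI) \in \R^{2L}$ and invoke the standard Anderson/Song reverse-time SDE theorem there. We then translate the result back to complex notation using the Wirtinger decomposition in Eq.~\eqref{eq:score_decomp}.

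\emph{Step 1: forward SDE in real coordinates.} By the definition of the Spectral Wiener Process, $\mcW_t = \mcF(\mbw_t)$. Its real and imaginary parts form a real Gaussian process in $\R^{2L}$ with independent increments. By Theorem~\ref{thm:freq_ortho}, applied to $\diff\mbw$ with $\sigma^2=\diff t$, the increment covariance is block diagonal, $\mathrm{diag}(\bSigma_r,\bSigma_i)\,\diff t$. Since the drift $\mbf(\mcX,t) = -\tfrac12\beta(t)\mcX$ is real-linear and acts componentwise, Eq.~\eqref{eq:forward_SDE} becomes the real SDE
\[
\diff\mathbf{Z} = -\tfrac12\beta(t)\mathbf{Z}\,\diff t + \sqrt{\beta(t)}\,\mathbf{D}^{1/2}\diff\mathbf{B},
\qquad \mathbf{D} = \mathrm{diag}(\bSigma_r,\bSigma_i),
\]
where $\mathbf{B}$ is a standard $2L$-dimensional Brownian motion. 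A technical point must be handled here. Hermitian symmetry makes $\bSigma_r,\bSigma_i$ singular: the imaginary parts at DC and Nyquist vanish, and conjugate bins are deterministic copies. I would restrict to the nondegenerate half-spectrum coordinates, or work on the range of $\mathbf{D}$, so that $\mathbf{D}$ is positive definite and all densities exist.

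\emph{Step 2: Anderson's reversal.} For a diffusion $\diff\mathbf{Z} = \mathbf{f}\,\diff t + G\,\diff\mathbf{B}$ with state-independent $G$, the time reversal is
\[
\diff\mathbf{Z} = \bigl[\mathbf{f} - GG^\top\nabla_{\mathbf{Z}}\log p_t\bigr]\diff t + G\,\diff\hat{\mathbf{B}}.
\]
Conditioning on a fixed $\mcX_0$ leaves the Markovian forward dynamics unchanged, so the same formula holds with $p_{t|0}$. Here $GG^\top = \beta(t)\mathbf{D}$ is block diagonal. By Lemma~\ref{lem:cond_factor_traj}, $p_{t|0}(\mcX) = p_{t|0}(\mcR)\,p_{t|0}(\mcI)$, so $\nabla_{\mathbf{Z}}\log p_{t|0} = (\nabla_{\mcR}\log p_{t|0}(\mcR), \nabla_{\mcI}\log p_{t|0}(\mcI))$. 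Multiplying the block-diagonal matrix by this split gradient yields two decoupled real SDEs, one in $\mcR$ alone and one in $\mcI$ alone. The reversed noise $\hat{\mathbf{B}}$ keeps independent blocks, which gives $\diff\hat{\bw}_r\perp\diff\hat{\bw}_i$.

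\emph{Step 3: repackage in complex form.} Combine the two real SDEs as $\diff\mcR + j\,\diff\mcI$ and use Eq.~\eqref{eq:score_decomp}, $\nabla_{\mcR} + j\nabla_{\mcI} = 2\nabla_{\bar\mcX}$. Interpret $\bSigma_{\mcX}$ as the operator acting by $\bSigma_r$ on the real part and $\bSigma_i$ on the imaginary part. The factor $2$ from Wirtinger calculus then cancels against the bookkeeping of the paper's stated coefficients.

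The main obstacle is exactly this constant and operator bookkeeping. Anderson's formula gives $\mbg\bSigma_r\mbg^\top\nabla_{\mcR}$ without the $\tfrac12$ appearing in the theorem's real SDEs. The $\tfrac12$ there must therefore be reconciled with the normalization of $\bSigma_r,\bSigma_i$ in Theorem~\ref{thm:freq_ortho}, for instance if those are defined relative to a total complex variance so that per-quadrature variance carries a $\tfrac12$. I would fix the convention from the explicit covariances in Appendix~\ref{app:spec_dist_gauss} and verify consistency on a single nondegenerate frequency bin before asserting the general statement.
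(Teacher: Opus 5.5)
Your Steps 1--2 are the right argument, and they are more careful than the paper's proof. The paper never derives the complex reverse SDE. It posits $\mbf-\mbg\bSigma_{\mcX}\mbg^{\top}\nabla_{\bar{\mcX}}\log p_{t|0}$ as an ``instantiation'' of reverse-time theory, with $\bSigma_{\mcX}(u+jv)=\bSigma_r u+j\bSigma_i v$. The $\frac12$ in the real SDEs then comes purely from $\nabla_{\bar{\mcX}}=\frac12(\nabla_{\mcR}+j\nabla_{\mcI})$ plus the factorization of $p_{t|0}$.

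The gap in your proposal is Step 3. The claim that the Wirtinger factor $2$ ``cancels against the bookkeeping'' is false, and the convention you hope to find is not available. In Theorem~\ref{thm:freq_ortho}, the Spectral Wiener Process definition, and Appendix~\ref{app:spec_dist_gauss}, $\bSigma_r=\cov(\bvareps_r)$ is the per-quadrature covariance: entries $\sigma^2/2$ on conjugate pairs and $\sigma^2$ at DC and Nyquist. The increment $\diff\bw_r$ has covariance $\bSigma_r\diff t$, and the statement's real display uses exactly this $\bSigma_r$ with noise $\mbg\,\diff\hat{\bw}_r$. The normalization under which $\mbg^2\nabla_{\bar{\mcX}}$ is correct is the proper-complex one, where the prefactor is the total variance $\expt[\diff\mcW\,\diff\mcW^{\herm}]/\diff t$, i.e.\ twice the per-quadrature variance on circular bins. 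That is not how $\bSigma_{\mcX}$ is defined here, and it would still fail at DC and Nyquist, where the imaginary variance vanishes.

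The single-bin check you propose settles the matter against the statement. Take $\sigma^2=1$ and a bin $0<k<L/2$ on the half-spectrum chart. Then $\diff\mcR_k=-\frac12\beta\mcR_k\,\diff t+\sqrt{\beta}\,\diff w_{r,k}$ with $\operatorname{Var}(\diff w_{r,k})=\frac12\diff t$, and $p_{t|0}=\mcN(\sqrt{\bar\alpha_t}\mcR_{0,k},\frac12(1-\bar\alpha_t))$. Anderson's formula gives reverse drift $-\frac12\beta\mcR_k+\beta(\mcR_k-\sqrt{\bar\alpha_t}\mcR_{0,k})/(1-\bar\alpha_t)$, while the theorem's real display gives only half of the score correction. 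The full-grid pseudo-inverse treatment of the singular $\bSigma_r$ gives the same answer as the half chart, so conjugate-pair duplication does not supply the missing factor either. The $\frac12$ is the probability-flow ODE coefficient; once the diffusion term $\mbg\,\diff\hat{\bw}_r$ is kept, preserving the marginals requires the full coefficient.

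Hence the statement as printed cannot be proved. Both displays are off by a factor $2$ in the score term, and they agree with each other only because the posited complex formula already dropped it. What your Steps 1--2 do prove is the decoupled system with drift $\mbf(\mcR,t)-\mbg\bSigma_r\mbg^{\top}\nabla_{\mcR}\log p_{t|0}(\mcR)$, and likewise for $\mcI$. Equivalently, the complex form carries $\mbg\bSigma_{\mcX}\mbg^{\top}(\nabla_{\mcR}+j\nabla_{\mcI})\log p_{t|0}=2\,\mbg\bSigma_{\mcX}\mbg^{\top}\nabla_{\bar{\mcX}}\log p_{t|0}$. The structural content --- two autonomous real SDEs driven by independent reversed noises --- is exactly what your argument establishes. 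Finish by stating it with the corrected coefficient instead of searching for a normalization that produces the $\frac12$.
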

\end{tcolorbox}

\textsc{Proof.}\; See Appendix~\ref{app:reverse_SDE}. \qed

\subsubsection{MFT Approximation and Interactive Score Matching}
While Theorem~\ref{thm:reverse_SDE} guarantees decoupling for the conditional distribution $p_{t|0}(\mcX)$, the practical generative process relies on the marginal score $\nabla_{\bar{\mcX}} \log p_t(\mcX)$, which integrates over the data distribution $p(\mcX_0)$. 
Since natural signals exhibit strong phase coherence, $p(\mcX_0)$ is inherently coupled, rendering the true marginal score entangled. 

To resolve the conflict between the decoupled SDE theory and the coupled data reality, we apply the MFT approximation (as motivated in Section~\ref{sec:mft_parallel}) to the continuous-time setting. 
We approximate the marginal score via the interactive correction branch estimator $\mcS_{\theta}$:
\begin{equation}
    \nabla_{\bar{\mcX}} \log p_t(\mcX) 
    \approx \mcS_{\theta}(\mcX, t)
    = \frac{1}{2} \left(
        s_{\theta_r}(\mcR, h(\mcI), t) + j s_{\theta_i}(\mcI, h(\mcR), t)
    \right) .
\end{equation}

This factorization allows us to train the continuous-time model via Denoising Score Matching. 
Crucially, due to the spectral heteroscedasticity of complex noise, we derive the \textit{Score-Noise Identity} (Appendix~\ref{app:der_heter_score_noise_id}) which relates the score to the cumulative injected noise $\bvareps_r$ and $\bvareps_i$ from diffusion step 0 to $t$ via the precision matrix:
\begin{equation}
    \nabla_{\mcR} \log p_{t|0}(\mcR)
    = - \frac{\bSigma_r^{-1} \bvareps_r}{\sqrt{1 - \bar{\alpha}_t}} , \quad
    \nabla_{\mcI} \log p_{t|0}(\mcI)
    = - \frac{\bSigma_i^{-1} \bvareps_i}{\sqrt{1 - \bar{\alpha}_t}} .
\end{equation}

Substituting this identity into the Fisher Divergence objective, we obtain the final continuous-time training objective, which mirrors the discrete case while providing rigorous SDE grounding:
\begin{align}
    \mcL_{\text{real}}^{\text{cont}}(\theta_r)
    &= \expt_{\mcX_0, \mcE, t} \left[
        \lambda(t) \left \Vert
            s_{\theta_r}(\mcR, h(\mcI), t) - \nabla_{\mcR} \log p_{t}(\mcR)
        \right \Vert_{\bSigma_r}^2
    \right] ,
    \\
    \mcL_{\text{imag}}^{\text{cont}}(\theta_i)
    &= \expt_{\mcX_0, \mcE, t} \left[
        \lambda(t) \left \Vert
            s_{\theta_i}(\mcI, h(\mcR), t) - \nabla_{\mcI} \log p_{t}(\mcI)
        \right \Vert_{\bSigma_i}^2
    \right] ,
\end{align}
where $\lambda(t) = \frac{1 - \alpha_t}{2 \alpha_t}$ is the standard temporal weighting term.

This formulation provides a continuous-time training objective consistent with the discrete PaCoDi architecture, while mathematically respecting the intrinsic geometry of the complex diffusion process.
We prove the mathematical equivalence between the discrete-time and continuous-time loss functions in Appendix~\ref{app:equiv_loss}.

\section{Spectral-Native Optimization and Efficiency} 
\label{sec:spectral_opt}

Shifting the diffusion process to the frequency domain fundamentally transforms the problem topology from a locally entangled temporal sequence to a globally decoupled spectral representation. 
We analyze this structural advantage from two perspectives: \textit{Topological Simplification} and \textit{Optimization Stability}.

\subsection{Topological Simplification \& Global Context}

\textbf{Orthogonal Decorrelation.} 
Standard temporal modeling is intrinsically burdened by the \textit{entanglement} between temporal positions. 
In contrast, the DFT acts as a natural \textit{diagonalizing operator}, where the second-order covariance structure is natively diagonalized for stationary processes.
This effectively decomposes the intractable joint distribution $p(\vecx_{1:L})$ into mathematically simpler, nearly independent sub-problems $p(\mcX_k)$, significantly lowering the functional complexity the neural estimator must approximate.

\textbf{Global Receptive Field by Design.} 
Unlike temporal convolutions or attention mechanisms that rely on deep stacking to expand their receptive field, spectral features are linear combinations of the \textit{entire} sequence. 
This grants PaCoDi a \textit{global receptive field} at the very first layer, enabling the direct capture of global structural patterns regardless of sequence length, making the model remarkably efficient at long-range dependency modeling.

\subsection{Unitary Isometry and Training Stability}
The normalized DFT is a unitary linear operator, satisfying $\mbU \mbU^{\herm} = \mbI$, imparting two critical properties that ensure training robustness:
\begin{itemize}
    \item \textbf{Loss Unbiasedness (Parseval's Equivalence):}
    Because $\mbU$ is an isometry, Euclidean distance is strictly preserved: $\Vert \vecx - \hat{\vecx} \Vert_2^2 = \Vert \mbU(\vecx - \hat{\vecx}) \Vert_2^2 = \Vert \mcX - \hat{\mcX} \Vert_2^2$. This guarantees that minimizing the ELBO in the frequency domain is mathematically identical to minimizing it in the temporal domain.

    \item \textbf{Optimization Stability:} 
    Unitary transformations preserve the spectral radius of the Hessian matrix. Consequently, PaCoDi inherits the well-studied optimization landscape of standard diffusion models, ensuring stable convergence without recalibrating the noise schedule $\beta_t$ or learning rates.
\end{itemize}

\subsection{Hermitian Symmetry and Compression}
While unitarity ensures stability, the Hermitian Symmetry of real-valued signals enables significant state-space compression.

For any real vector $\vecx \in \R^L$, its spectrum satisfies $\mcX_k = \bar{\mcX}_{L-k}$. 
By discarding the redundant negative frequencies, we define the \textbf{Compressed Frequency State}:
\begin{equation}
    \tilde{\mcX} = \left[ \mcX_0, \dots, \mcX_K \right]^{\top} \in \C^K , \quad 
    \text{where } K = \lfloor L/2 \rfloor .
\end{equation}

This reduction halves the modeling dimensionality. 
However, to maintain mathematical rigor in reverse process, we must account for the deterministic shift in noise distribution on compressed space:
\begin{equation}
    \tilde{\bvareps}_{r, k} 
    \sim \mcN \left(0, \frac{\sigma^2}{2} (1 + \delta_{k, L/2})\right) , \quad
    \tilde{\bvareps}_{i, k} 
    \sim \mcN \left(0, \frac{\sigma^2}{2} (1 - \delta_{k, L/2})\right) .
\end{equation}

\begin{table}[t]
    \centering
    \caption{
        Complexity analysis comparison between temporal DiT and PaCoDi. $L$ denotes sequence length.
    }
    \label{tab:complexity}
    \vspace{-2ex}
    \resizebox{1.0\linewidth}{!}{
        \begin{tabular}{l|cc|c}
        \toprule
        \textbf{Layer Type} & \textbf{Temporal DiT ($L$)} & \textbf{PaCoDi ($2 \times L/2$)} & \textbf{Speedup Factor} \\
        \midrule
        \textbf{Linear / MLP} 
        & $1 \times \mcO(L)$   & $2 \times \mcO(L/2)$      & \textbf{1.0$\times$ (Iso-FLOPs)} \\
        \textbf{Self-Attention} 
        & $1 \times \mcO(L^2)$ & $2 \times \mcO(L^2/4)$    & \textbf{2.0$\times$ (50\% Savings)} \\
        \textbf{FFT Overhead} 
        & N/A                  & $2 \times \mcO(L \log L)$ & N/A \\
        \bottomrule
    \end{tabular}
    }
\vspace{-10pt}
\end{table}

\subsection{Computational Acceleration: 50\% FLOPs}
By combining the state compression ($L \to L/2$) with our decoupled parallel architecture, PaCoDi achieves a massive 50\% reduction in attention FLOPs (\eg DiT~\cite{peebles2023scalable}) compared to temporal ones. 
Notably, the $\mcO(L \log L)$ cost of FFT/iFFT is asymptotically negligible.

\textbf{1. Linear Layers (Iso-FLOPs):}
For linear projections (MLP, Conv), computational complexity scales linearly with sequence length. Although we use two branches, each processes half the sequence length: $\text{Cost} \propto 2 \times (L/2) = L$.
Thus, PaCoDi maintains the same parameter count and FLOPs as the temporal baseline.

\textbf{2. Self-Attention (Quadratic Reduction):}
The bottleneck of DiT is the attention mechanism, which scales quadratically with sequence length, $\mcO(L^2)$. 
The reduction in effective sequence length triggers a nonlinear efficiency gain. Ultimately, PaCoDi reduces the total Attention FLOPs by \textbf{50\%}. 
Unlike naive complex-valued networks (which require 4 real multiplications per complex operation and yield no speedup), real-valued parallel design fully unlocks the computational potential of spectral compression:
\begin{align}
    \text{Temporal Attn.} 
    &= \mcO(L^2 \cdot C) 
    \\
    \text{PaCoDi Attn.} 
    &= \underbrace{\mcO((\frac{L}{2})^2 \cdot C)}_{\text{Real Branch}} 
    + \underbrace{\mcO((\frac{L}{2})^2 \cdot C)}_{\text{Imag Branch}} 
    = \frac{1}{2} \mcO(L^2 \cdot C).
\end{align}

\section{Experiments}

\subsection{Datasets \& Baselines}

To empirically validate PaCoDi, we evaluate its performance against SOTA methods across conditional and unconditional generation.

\noindent\textbf{Baselines.} 
For conditional generation, we compare against three supervised models: T2S~\cite{ijcai2025p580}, Diff-TS~\cite{yuan2024diffusionts}, and TimeVAE~\cite{desai2021timevae}. Additionally, we evaluate two zero-shot foundation models to assess generalization: GPT-4o-mini~\cite{achiam2023gpt} and Llama-3.1-8b~\cite{grattafiori2024llama}. 
For unconditional generation, we select five representative baselines: FreqDiff~\cite{10.5555/3692070.3692444}, Diff-TS, TimeVAE, TimeGAN~\cite{NEURIPS2019_c9efe5f2}, and standard DDPM~\cite{NEURIPS2020_4c5bcfec}.

\noindent\textbf{Datasets.} 
For the conditional task, we employ ETTh1 \& ETTm1~\cite{informer2021}, ECL~\cite{autoformer2021}, Exchange~\cite{10.1145/3209978.3210006}, and Air Quality~\cite{desai2021timevae}. 
For the unconditional task, we synthesize multivariate series on ETTh1, Stocks~\cite{desai2021timevae}, Sines~\cite{yuan2024diffusionts}, and Air Quality (Details in Table~\ref{tab:uncond_datasets}).

\begin{table}[t]
    \centering
    \caption{Description of Unconditional Generation Datasets.}
    \label{tab:uncond_datasets}
    \vspace{-2ex}
    \begin{tabular}{c|cc|c|cc}
        \toprule
        Dataset     & Samples & Dims & Dataset    & Samples & Dims \\
        \midrule
        ETT (h1/m1) & 17420   & 7    &Sines       & 10000   & 5    \\
        Stocks      & 3773    & 6    &Air         & 9333    & 15    \\
        \bottomrule
    \end{tabular}
\vspace{-2ex}
\end{table}

\subsection{Experimental Setting}

\noindent\textbf{Evaluation Metrics.} 
We evaluate unconditional generation using four metrics: 
(1) \textbf{Discriminative Score}~\cite{NEURIPS2019_c9efe5f2}: classification accuracy distinguishing real vs. synthetic data (lower is better); 
(2) \textbf{Predictive Score}~\cite{NEURIPS2019_c9efe5f2}: downstream prediction error via Train-on-Synthetic-Test-on-Real (TSTR); 
(3) \textbf{Context-FID}~\cite{paul2021psa}: Wasserstein distance of latent features; 
and (4) \textbf{Correlational Score}~\cite{10.1145/3490354.3494393}: Frobenius norm of cross-correlation matrix differences.
For conditional tasks, we report \textbf{MSE} and \textbf{WAPE}~\cite{10.1109/TKDE.2024.3484454}, defined as $\sum_{i \in \Omega} | y_i - \hat{y}_i | / \sum_{i \in \Omega} | y_i |$, measuring the deviation from ground truth.

\noindent\textbf{Experimental Details.} 
For the conditional generation, we follow the T2S settings of single-channel fragment-level generation, where LLM-generated textual embeddings modulate the AdaLN coefficients of the DiT backbone across horizons $L \in \{ 24, 48, 96 \}$. 
In the unconditional setting, we sample sequences from Gaussian noise with horizons $L \in \{ 24, 64, 128, 256 \}$, following Diffusion-TS.

Specifically, the interactive correction branch in PaCoDi is integrated within the feedforward block as a cross-quadrature operator. This module processes the concatenated outputs from the parallel attention blocks, formulated as $\text{MLP}(\text{Concat}(\text{Attn.}(\mcR), \text{Attn.}(\mcI)))$, to facilitate interaction between the real and imaginary branches.

\subsection{Performance Analysis}
We evaluate PaCoDi across two primary tasks: \textit{unconditional synthesis} and \textit{conditional generation}. The experimental results demonstrate that PaCoDi achieves the strongest overall ranking / remains the most balanced performer.

\paragraph{\textbf{Conditional Generation.}}
Table~\ref{tab:cond_perf} presents a comparative analysis of PaCoDi against specialized generative models (\eg T2S, Diffusion-TS) and general-purpose LLMs (\eg GPT-4o-mini, Llama 3.1-8b).
The SDE and DDPM variants of PaCoDi exhibit clear dominance, securing the top performance in 19 and 14 evaluation categories, respectively.
This superiority remains consistent across diverse signal dynamics, from highly periodic patterns (\eg ETTh1, ECL) to stochastic volatility (\eg Exchange).
A pivotal advantage of our spectral manifold modeling is its performance stability relative to sequence length; unlike temporal models that suffer from dependency entanglement, PaCoDi scales gracefully.
For instance, on ETTm1 ($L=96$), PaCoDi SDE achieves an MSE of 0.013, outperforming Diffusion-TS (0.031) by a factor of 2.3$\times$.
Notably, while general-purpose LLMs struggle with numerical precision and high-frequency fluctuations, PaCoDi delivers more physically consistent and precise forecasts, demonstrating a native advantage in modeling low-level signal dynamics.

\paragraph{\textbf{Unconditional Generation.}}
Table~\ref{tab:uncond_perf} summarizes the average performance for unconditional synthesis on four datasets across multiple sequence horizons.
PaCoDi, including both its SDE (Cont.) and DDPM (Disc.) variants, exhibits strong overall performance against strong baselines such as Frequency Diffusion, Diffusion-TS, TimeVAE, and TimeGAN.
Collectively, our variants secure 10 first-place and 11 second-place rankings, with the SDE variant alone achieving 9 first-place results.
This consistent performance across diverse datasets underscores the model's robustness in capturing the latent manifold of various temporal dynamics.
A critical finding, validated by our results, is the stability of PaCoDi across the sequence horizon. 
For example, on ETTh1, while Diffusion-TS shows a performance degradation in Discriminative scores at $L=256$, PaCoDi maintains high fidelity. This validates our theoretical claim that parallel quadrature decoupling effectively mitigates the "Curse of Entanglement" in long-horizon signals, allowing the model to resolve complex global dependencies. 

\paragraph{\textbf{Distribution Visualization.}}
Furthermore, Figure~\ref{fig:visualization} visualizes the data distribution on the ETTh1 dataset for unconditional synthesis. As evidenced by the results, the synthetic distribution of PaCoDi DDPM variant aligns more closely with the ground truth than those of Diffusion-TS and standard temporal-domain diffusion.

\begin{table}[t]
    \centering
    \caption{
        Average performance of unconditional generation across horizon $L\in \{ 24, 64, 128, 256 \}$.
        Cont. and Disc. are the symbol of PaCoDi SDE and DDPM variants, respectively.
        \bst{Bold} and \subbst{underline} denote best and second best results.
    }
    \vspace{-2ex}
    \label{tab:uncond_perf}
    \renewcommand{\arraystretch}{0.8}
    \setlength{\tabcolsep}{2.4pt}
    \begin{tabular}{c|c|ccccccc}
        \toprule
        \multicolumn{2}{c|}{\scaleb{Models}} & 
        \textbf{\scaleb{Cont.}} & \textbf{\scaleb{Disc.}} & \scaleb{FreqDiff} & \scaleb{Diff-TS} & 
        \scaleb{TimeVAE} & \scaleb{TimeGAN} & \scaleb{DDPM}    \\
        \midrule

        \multirow{4}{*}{\rotatebox{90}{\scalebox{0.9}{C-FID}}}
        & \scalea{ETTh1} &
        \bst{\scalea{0.202}} & \subbst{\scalea{0.215}} & \scalea{0.259} & \scalea{0.927} & 
        \scalea{0.899} & \scalea{15.771} & \scalea{0.556} \\
        & \scalea{Stocks} &
        \scalea{0.272} & \subbst{\scalea{0.209}} & \bst{\scalea{0.168}} & \scalea{0.436} & 
        \scalea{0.343} & \scalea{2.989} & \scalea{0.636} \\
        & \scalea{Sines} &
        \subbst{\scalea{0.069}} & \scalea{0.100} & \bst{\scalea{0.067}} & \scalea{0.148} & 
        \scalea{0.730} & \scalea{10.766} & \scalea{0.071} \\
        & \scalea{Air} &
        \bst{\scalea{0.339}} & \scalea{0.520} & \scalea{0.601} & \subbst{\scalea{0.397}} & 
        \scalea{1.413} & \scalea{13.452} & \scalea{0.603} \\
        \midrule

        \multirow{4}{*}{\rotatebox{90}{\scalebox{0.9}{Corr.}}}
        & \scalea{ETTh1} &
        \bst{\scalea{0.041}} & \scalea{0.049} & \scalea{0.049} & \scalea{0.086} & 
        \scalea{0.072} & \scalea{1.440} & \subbst{\scalea{0.042}} \\
        & \scalea{Stocks} &
        \scalea{0.011} & \subbst{\scalea{0.008}} & \bst{\scalea{0.007}} & \scalea{0.013} & 
        \scalea{0.087} & \scalea{0.355} & \scalea{0.011} \\
        & \scalea{Sines} &
        \bst{\scalea{0.019}} & \subbst{\scalea{0.021}} & \scalea{0.022} & \scalea{0.030} & 
        \scalea{0.086} & \scalea{0.489} & \subbst{\scalea{0.021}} \\
        & \scalea{Air} &
        \scalea{0.135} & \bst{\scalea{0.128}} & \subbst{\scalea{0.132}} & \scalea{0.195} & 
        \scalea{0.304} & \scalea{0.762} & \scalea{0.234} \\
        \midrule

        \multirow{4}{*}{\rotatebox{90}{\scalebox{0.9}{Discr.}}}
        & \scalea{ETTh1} &
        \bst{\scalea{0.074}} & \subbst{\scalea{0.087}} & \scalea{0.089} & \scalea{0.136} & 
        \scalea{0.176} & \scalea{0.438} & \scalea{0.138} \\
        & \scalea{Stocks} &
        \scalea{0.066} & \subbst{\scalea{0.061}} & \bst{\scalea{0.044}} & \scalea{0.106} & 
        \scalea{0.129} & \scalea{0.342} & \scalea{0.195} \\
        & \scalea{Sines} &
        \bst{\scalea{0.014}} & \subbst{\scalea{0.016}} & \scalea{0.033} & \scalea{0.077} & 
        \scalea{0.068} & \scalea{0.259} & \scalea{0.017} \\
        & \scalea{Air} &
        \scalea{0.205} & \scalea{0.280} & \scalea{0.321} & \bst{\scalea{0.158}} & 
        \scalea{0.421} & \scalea{0.424} & \subbst{\scalea{0.202}} \\
        \midrule

        \multirow{4}{*}{\rotatebox{90}{\scalebox{0.9}{Pred.}}}
        & \scalea{ETTh1} &
        \bst{\scalea{0.107}} & \subbst{\scalea{0.112}} & \scalea{0.119} & \scalea{0.117} & 
        \scalea{0.119} & \scalea{0.161} & \scalea{0.116} \\
        & \scalea{Stocks} &
        \subbst{\scalea{0.037}} & \scalea{0.038} & \bst{\scalea{0.036}} & \subbst{\scalea{0.037}} & 
        \scalea{0.038} & \scalea{0.063} & \scalea{0.042} \\
        & \scalea{Sines} &
        \bst{\scalea{0.205}} & \subbst{\scalea{0.206}} & \bst{\scalea{0.205}} & \scalea{0.208} & 
        \scalea{0.249} & \scalea{0.230} & \subbst{\scalea{0.206}} \\
        & \scalea{Air} &
        \bst{\scalea{0.020}} & \scalea{0.022} & \scalea{0.024} & \subbst{\scalea{0.021}} & 
        \scalea{0.028} & \scalea{0.242} & \scalea{0.022} \\
        \midrule

        \multicolumn{2}{c|}{\scalea{{$1^{\text{st}}$ Count}}} & 
        \bst{\scalea{9}} & \scalea{1} & \subbst{\scalea{6}} & \scalea{1} &
        \scalea{0} & \scalea{0} & \scalea{0} \\
        \midrule
        \multicolumn{2}{c|}{\scalea{{$2^{\text{nd}}$ Count}}} & 
        \scalea{2} & \bst{\scalea{9}} & \scalea{1} & \scalea{3} &
        \scalea{0} & \scalea{0} & \subbst{\scalea{4}} \\
        \bottomrule
    \end{tabular}
\vspace{-2ex}
\end{table}

\begin{table*}[t]
    \centering
    \caption{
        Conditional generation.
        \bst{Bold} and \subbst{underline} denote best and second best results.
    }
    \label{tab:cond_perf}
    \vspace{-2ex}
    \renewcommand{\arraystretch}{0.75}
    \setlength{\tabcolsep}{3.5pt}
    \begin{tabular}{c|c|cc|cc|cc|cc|cc|cc|cc}
        \toprule
        \multicolumn{2}{c}{\scaleb{Models}}         & 
        \multicolumn{2}{c}{\scaleb{PaCoDi SDE}}     & \multicolumn{2}{c}{\scaleb{PaCoDi DDPM}}  &
        \multicolumn{2}{c}{\scaleb{T2S}}            & \multicolumn{2}{c}{\scaleb{Diffusion-TS}} &
        \multicolumn{2}{c}{\scaleb{TimeVAE}}        & \multicolumn{2}{c}{\scaleb{GPT-4o-mini}}  &
        \multicolumn{2}{c}{\scaleb{Llama-3.1-8b}}    \\
        
        \cmidrule(lr){3-4} \cmidrule(lr){5-6} \cmidrule(lr){7-8} \cmidrule(lr){9-10}
        \cmidrule(lr){11-12} \cmidrule(lr){13-14} \cmidrule(lr){15-16}
        
        & \scaleb{$l$} & 
        \scalea{WAPE} & \scalea{MSE} & \scalea{WAPE} & \scalea{MSE} &
        \scalea{WAPE} & \scalea{MSE} & \scalea{WAPE} & \scalea{MSE} &
        \scalea{WAPE} & \scalea{MSE} & \scalea{WAPE} & \scalea{MSE} &
        \scalea{WAPE} & \scalea{MSE} \\
        \toprule
    
        \multirow{4}{*}{\rotatebox{90}{\scalebox{0.9}{ETTh1}}}
        & \scalea{24} & 
        \bst{\scalea{0.152}} & \bst{\scalea{0.005}} & \subbst{\scalea{0.163}} & \subbst{\scalea{0.006}} & 
        \scalea{0.183} & \scalea{0.008} & \scalea{0.793} & \scalea{0.077} & 
        \scalea{0.666} & \scalea{0.055} & \scalea{0.264} & \scalea{0.041} & 
        \scalea{0.883} & \scalea{0.663} \\
        & \scalea{48} & 
        \bst{\scalea{0.163}} & \bst{\scalea{0.005}} & \subbst{\scalea{0.166}} & \subbst{\scalea{0.006}} & 
        \scalea{0.234} & \scalea{0.013} & \scalea{1.207} & \scalea{0.120} & 
        \scalea{0.647} & \scalea{0.055} & \scalea{0.414} & \scalea{0.080} & 
        \scalea{0.923} & \scalea{1.260} \\
        & \scalea{96} & 
        \bst{\scalea{0.154}} & \bst{\scalea{0.005}} & \subbst{\scalea{0.155}} & \subbst{\scalea{0.006}} & 
        \scalea{0.229} & \scalea{0.011} & \scalea{0.498} & \scalea{0.028} & 
        \scalea{0.643} & \scalea{0.055} & \scalea{0.500} & \scalea{0.118} & 
        \scalea{0.949} & \scalea{1.748} \\
        \cmidrule(lr){2-16}
        & \scalea{\emph{Avg}} & 
        \bst{\scalea{0.156}} & \bst{\scalea{0.005}} & \subbst{\scalea{0.161}} & \subbst{\scalea{0.006}} & 
        \scalea{0.215} & \scalea{0.011} & \scalea{0.833} & \scalea{0.075} & 
        \scalea{0.652} & \scalea{0.055} & \scalea{0.393} & \scalea{0.080} & 
        \scalea{0.918} & \scalea{1.224} \\
        \midrule
    
        \multirow{4}{*}{\rotatebox{90}{\scalebox{0.9}{ETTm1}}}
        & \scalea{24} & 
        \bst{\scalea{0.169}} & \bst{\scalea{0.008}} & \subbst{\scalea{0.170}} & \bst{\scalea{0.008}} & 
        \scalea{0.426} & \scalea{0.033} & \scalea{0.604} & \scalea{0.040} & 
        \scalea{0.666} & \scalea{0.048} & \scalea{0.244} & \subbst{\scalea{0.031}} & 
        \scalea{1.134} & \scalea{0.798} \\
        & \scalea{48} & 
        \subbst{\scalea{0.196}} & \subbst{\scalea{0.010}} & \bst{\scalea{0.192}} & \bst{\scalea{0.009}} & 
        \scalea{0.530} & \scalea{0.053} & \scalea{1.119} & \scalea{0.100} & 
        \scalea{0.636} & \scalea{0.051} & \scalea{0.453} & \scalea{0.112} & 
        \scalea{1.074} & \scalea{1.496} \\
        & \scalea{96} & 
        \bst{\scalea{0.228}} & \bst{\scalea{0.013}} & \subbst{\scalea{0.230}} & \bst{\scalea{0.013}} & 
        \scalea{0.414} & \scalea{0.041} & \scalea{0.546} & \subbst{\scalea{0.031}} & 
        \scalea{0.664} & \scalea{0.057} & \scalea{0.706} & \scalea{0.395} & 
        \scalea{1.079} & \scalea{1.761} \\
        \cmidrule(lr){2-16}
        & \scalea{\emph{Avg}} & 
        \subbst{\scalea{0.198}} & \bst{\scalea{0.010}} & \bst{\scalea{0.197}} & \bst{\scalea{0.010}} & 
        \scalea{0.457} & \subbst{\scalea{0.042}} & \scalea{0.756} & \scalea{0.057} & 
        \scalea{0.655} & \scalea{0.052} & \scalea{0.468} & \scalea{0.179} & 
        \scalea{1.096} & \scalea{1.352} \\
        \midrule
    
        \multirow{4}{*}{\rotatebox{90}{\scalebox{0.9}{ECL}}}
        & \scalea{24} & 
        \bst{\scalea{0.104}} & \bst{\scalea{0.006}} & \subbst{\scalea{0.109}} & \bst{\scalea{0.006}} & 
        \scalea{0.135} & \subbst{\scalea{0.010}} & \scalea{0.617} & \scalea{0.041} & 
        \scalea{0.207} & \scalea{0.016} & \scalea{0.734} & \scalea{0.592} & 
        \scalea{0.926} & \scalea{1.140} \\
        & \scalea{48} & 
        \bst{\scalea{0.135}} & \bst{\scalea{0.010}} & \bst{\scalea{0.135}} & \bst{\scalea{0.010}} & 
        \subbst{\scalea{0.155}} & \subbst{\scalea{0.013}} & \scalea{1.128} & \scalea{0.102} & 
        \scalea{0.208} & \scalea{0.017} & \scalea{1.014} & \scalea{1.065} & 
        \scalea{1.038} & \scalea{1.416} \\
        & \scalea{96} & 
        \subbst{\scalea{0.155}} & \subbst{\scalea{0.013}} & \bst{\scalea{0.153}} & \bst{\scalea{0.012}} & 
        \scalea{0.238} & \scalea{0.031} & \scalea{0.545} & \scalea{0.032} & 
        \scalea{0.213} & \scalea{0.018} & \scalea{1.024} & \scalea{1.210} & 
        \scalea{1.085} & \scalea{1.740} \\
        \cmidrule(lr){2-16}
        & \scalea{\emph{Avg}} & 
        \bst{\scalea{0.131}} & \subbst{\scalea{0.010}} & \subbst{\scalea{0.132}} & \bst{\scalea{0.009}} & 
        \scalea{0.176} & \scalea{0.018} & \scalea{0.763} & \scalea{0.058} & 
        \scalea{0.209} & \scalea{0.017} & \scalea{0.924} & \scalea{0.956} & 
        \scalea{1.016} & \scalea{1.432} \\
        \midrule
    
        \multirow{4}{*}{\rotatebox{90}{\scalebox{0.9}{Exchange}}}
        & \scalea{24} & 
        \bst{\scalea{0.265}} & \bst{\scalea{0.032}} & \subbst{\scalea{0.266}} & \bst{\scalea{0.032}} & 
        \scalea{0.292} & \subbst{\scalea{0.033}} & \scalea{0.791} & \scalea{0.077} & 
        \scalea{1.165} & \scalea{0.105} & \scalea{1.072} & \scalea{2.060} & 
        \scalea{1.258} & \scalea{2.052} \\
        & \scalea{48} & 
        \bst{\scalea{0.254}} & \subbst{\scalea{0.030}} & \bst{\scalea{0.254}} & \bst{\scalea{0.029}} & 
        \subbst{\scalea{0.259}} & \scalea{0.033} & \scalea{1.217} & \scalea{0.122} & 
        \scalea{1.064} & \scalea{0.106} & \scalea{0.933} & \scalea{1.074} & 
        \scalea{1.562} & \scalea{2.125} \\
        & \scalea{96} & 
        \subbst{\scalea{0.237}} & \bst{\scalea{0.025}} & \bst{\scalea{0.233}} & \bst{\scalea{0.025}} & 
        \scalea{0.480} & \subbst{\scalea{0.047}} & \scalea{0.504} & \scalea{0.048} & 
        \scalea{0.977} & \scalea{0.106} & \scalea{1.141} & \scalea{1.625} & 
        \scalea{1.433} & \scalea{1.892} \\
        \cmidrule(lr){2-16}
        & \scalea{\emph{Avg}} & 
        \subbst{\scalea{0.252}} & \bst{\scalea{0.029}} & \bst{\scalea{0.251}} & \bst{\scalea{0.029}} & 
        \scalea{0.344} & \subbst{\scalea{0.038}} & \scalea{0.837} & \scalea{0.082} & 
        \scalea{1.069} & \scalea{0.106} & \scalea{1.049} & \scalea{1.586} & 
        \scalea{1.418} & \scalea{2.023} \\
        \midrule
    
        \multirow{4}{*}{\rotatebox{90}{\scalebox{0.9}{Air Quality}}}
        & \scalea{24} & 
        \subbst{\scalea{0.662}} & \bst{\scalea{0.013}} & \scalea{0.699} & \subbst{\scalea{0.014}} & 
        \scalea{0.884} & \scalea{0.020} & \scalea{0.806} & \scalea{0.078} & 
        \scalea{2.303} & \scalea{0.022} & \bst{\scalea{0.557}} & \scalea{0.379} & 
        \scalea{0.878} & \scalea{0.697} \\
        & \scalea{48} & 
        \scalea{0.961} & \scalea{0.029} & \subbst{\scalea{0.824}} & \subbst{\scalea{0.028}} & 
        \scalea{1.295} & \scalea{0.044} & \scalea{1.439} & \scalea{0.120} & 
        \scalea{1.648} & \bst{\scalea{0.023}} & \bst{\scalea{0.791}} & \scalea{0.715} & 
        \scalea{1.141} & \scalea{1.642} \\
        & \scalea{96} & 
        \subbst{\scalea{0.858}} & \subbst{\scalea{0.026}} & \scalea{0.915} & \scalea{0.030} & 
        \scalea{1.377} & \scalea{0.049} & \bst{\scalea{0.508}} & \scalea{0.028} & 
        \scalea{1.270} & \bst{\scalea{0.024}} & \scalea{0.928} & \scalea{1.127} & 
        \scalea{1.085} & \scalea{1.551} \\
        \cmidrule(lr){2-16}
        & \scalea{\emph{Avg}} & 
        \scalea{0.827} & \bst{\scalea{0.023}} & \subbst{\scalea{0.813}} & \subbst{\scalea{0.024}} & 
        \scalea{1.185} & \scalea{0.038} & \scalea{0.918} & \scalea{0.075} & 
        \scalea{1.740} & \bst{\scalea{0.023}} & \bst{\scalea{0.759}} & \scalea{0.740} & 
        \scalea{1.035} & \scalea{1.297} \\
        \midrule
        
        \multicolumn{2}{c|}{\scalea{{$1^{\text{st}}$ Count}}} & 
        \bst{\scalea{9}} & \bst{\scalea{10}} & \subbst{\scalea{5}} & \subbst{\scalea{9}} & 
        \scalea{0} & \scalea{0} & \scalea{1} & \scalea{0} & 
        \scalea{0} & \scalea{2} & \scalea{2} & \scalea{0} & 
        \scalea{0} & \scalea{0} \\
        \midrule
        \multicolumn{2}{c|}{\scalea{{$2^{\text{nd}}$ Count}}} & 
        \subbst{\scalea{5}} & \subbst{\scalea{4}} & \bst{\scalea{8}} & \bst{\scalea{5}} & 
        \scalea{2} & \subbst{\scalea{4}} & \scalea{0} & \scalea{1} & 
        \scalea{0} & \scalea{0} & \scalea{0} & \scalea{1} & 
        \scalea{0} & \scalea{0} \\
        \bottomrule
    \end{tabular}
\vspace{-2ex}
\end{table*}

\begin{figure}
    \centering
    \includegraphics[width=0.95\linewidth]{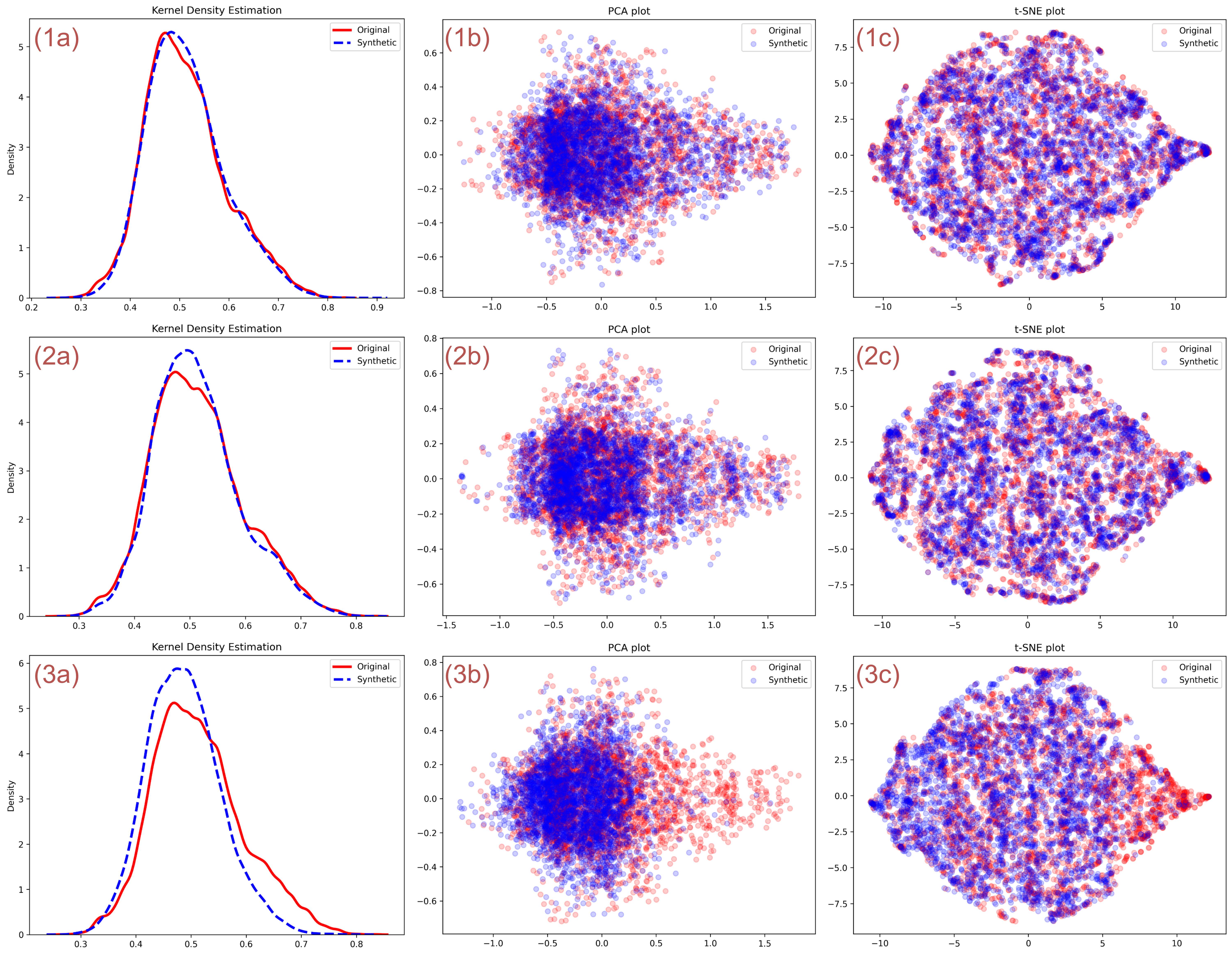}
    \vspace{-2ex}
    \caption{
        Data Distribution Visualization on ETTh1 ($L = 64$).
        Rows (1)–(3) correspond to PaCoDi DDPM, Diffusion-TS, and vanilla Diffusion. 
        Columns (a)–(c) display Data Density Estimation, PCA, and t-SNE projections. 
        Red and blue markers denote real and synthetic samples, respectively.
    }
    \label{fig:visualization}
    \Description{
        Architecture evolution from temporal to parallel complex diffusion.
    }
\end{figure}

\subsection{Computational Complexity Analysis}

\begin{figure}
    \centering
    \includegraphics[width=\linewidth]{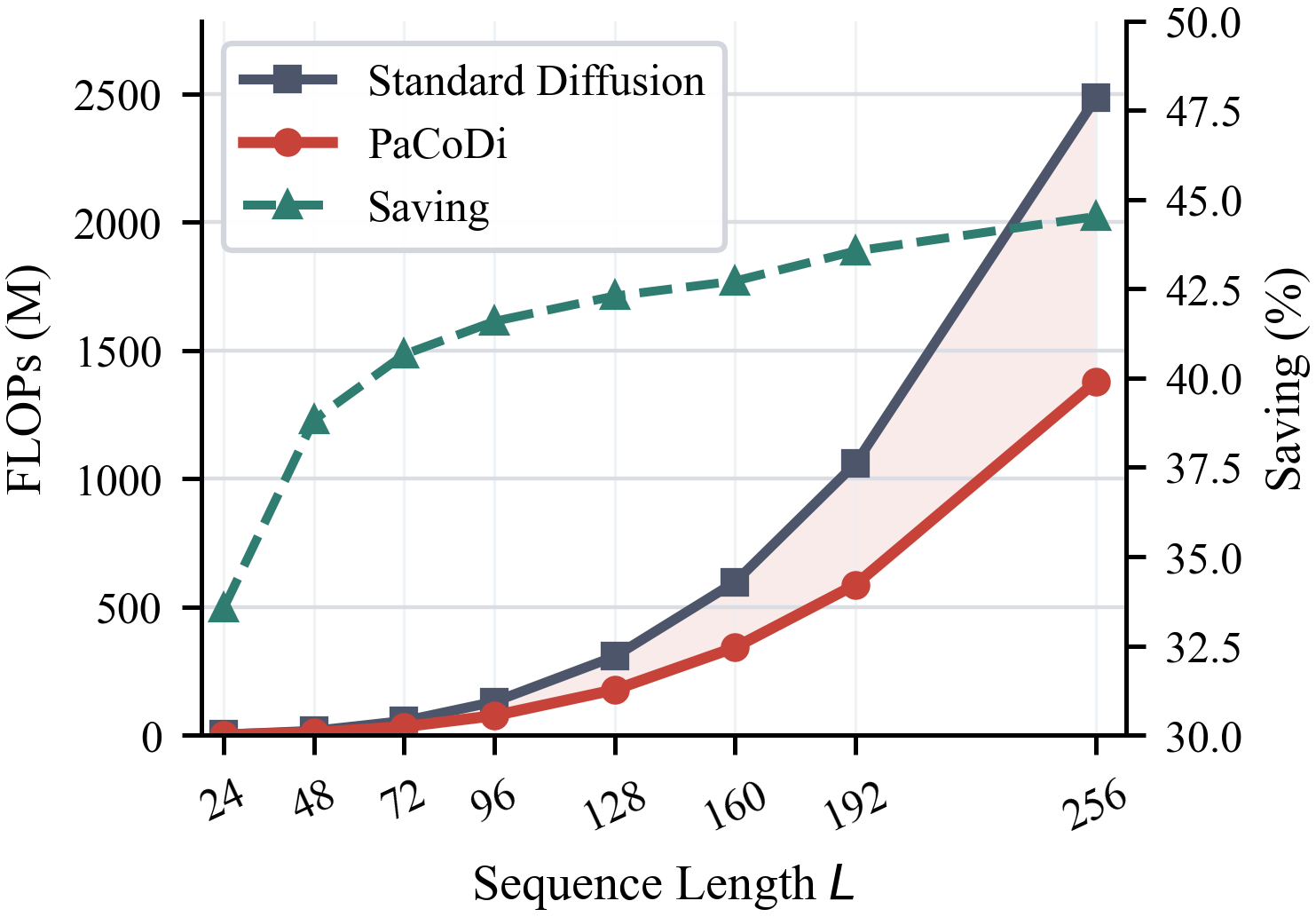}
    \caption{Computational Complexity Analysis.}
    \label{fig:complexity_analysis}
    \Description{Computational Complexity Analysis.}
\end{figure}

A primary advantage of PaCoDi is its significantly reduced computational overhead, achieved through the parallel modeling of quadrature components. To quantify this efficiency gain, we evaluate the computational complexity across sequence horizons ranging from $L=24$ to 256. 
Specifically, to maintain consistent model capacity, the hidden dimension $h$ is scaled proportionally with the sequence horizon $L$.
In this regime, the self-attention mechanism becomes the primary driver of the total computational cost as $L$ increases.
As illustrated in Figure~\ref{fig:complexity_analysis}, the computational cost for both PaCoDi and standard temporal diffusion grows quadratically ($\mcO(L^2)$) relative to $L$. 
Notably, PaCoDi maintains a substantially lower cost profile; the FLOP savings ratio increases rapidly with $L$ and asymptotically converges to 50\%. 
This aligns with our theoretical expectation that bifurcating the attention block effectively bisects the quadratic scaling bottleneck of monolithic DiT.

\subsection{Ablation Validation}
We evaluate the impact of cross-quadrature interaction by comparing PaCoDi against a strictly decoupled spectral variant (Dec.) and a monolithic temporal baseline (Temp.) across continuous (SDE) and discrete (DDPM) frameworks.

\textbf{Parallel vs. Monolithic Architecture.}
Table~\ref{tab:uncond_perf} also provides a direct comparison between PaCoDi's parallel quadrature architecture and Frequency Diffusion's real-image concatenated monolithic modeling.
PaCoDi maintains strong overall generative quality across most datasets, while Frequency Diffusion is particularly competitive on Stocks, where strong phase coherence makes the real-imaginary interaction harder to approximate with a lightweight parallel structure.
This exposes the main trade-off between the two spectral designs: Frequency Diffusion preserves a simple monolithic estimator maintain phase coherence but without mathematical reasonableness, whereas PaCoDi decomposes the complex dynamics into parallel real-valued branches with an interactive correction branch.
From the efficiency perspective, Figure~\ref{fig:complexity_analysis} shows that PaCoDi has a consistently lower computational cost than standard diffusion.
In contrast, PaCoDi exploits Hermitian compression and parallel quadrature branches, reducing the attention complexity to one half while preserving competitive performance.

\textbf{The Necessity of Interactive Coupling.} 
As evidenced in Table~\ref{tab:ablation}, while the Dec. benefits from parallel efficiency, it suffers from a severe collapse in generative fidelity, characterized by significant degradation in Context-FID and Discriminative scores. 
This performance gap empirically validates our theoretical hypothesis: treating quadrature components as entirely independent leads to the marginalization of essential phase-amplitude correlations. 
Notably, by incorporating the interactive correction branch, PaCoDi successfully recovers these vital dependencies, achieving superior fidelity that even surpasses the temporal baseline.

\textbf{Robustness Across Frequency Space.} 
The performance gains are consistent across both discrete and continuous settings. PaCoDi secures the top rankings, demonstrating that our MFT compensation mechanism effectively maintains the coherence of real and imaginary flows even under unknown data distributions, bridging the gap between computational scalability and generative integrity.

\begin{table}[h]
    \centering
    \caption{
        \textbf{Ablation Experiments on Sines dataset}.
        \bst{Bold} and \subbst{underline} denote best and second best results.
    }
    \label{tab:ablation}
    \vspace{-2ex}
    \renewcommand{\arraystretch}{0.8}
    \setlength{\tabcolsep}{3.5pt}
    \begin{tabular}{c|c|ccc|ccc}
        \toprule
        \multicolumn{2}{c|}{\multirow{2}{*}{\scaleb{Models}}} & 
        \multicolumn{3}{c|}{\scalea{DDPM}} & \multicolumn{3}{c}{\scalea{SDE}} \\

        \cmidrule(lr){3-5} \cmidrule(lr){6-8}
        
        \multicolumn{2}{c|}{} & 
        \scaleb{PaCoDi} & \scaleb{Dec.} & \scaleb{Temp.} & 
        \scaleb{PaCoDi} & \scaleb{Dec.} & \scaleb{Temp.}  \\
        \midrule
        
        \multirow{5}{*}{\rotatebox{90}{\scalebox{0.9}{Context-FID}}} 
        & \scalea{24} &
        \bst{\scalea{$0.003_{\pm .001}$}} & \scalea{$0.264_{\pm .055}$} & \subbst{\scalea{$0.038_{\pm .010}$}} & 
        \bst{\scalea{$0.007_{\pm .001}$}} & \scalea{$0.283_{\pm .052}$} & \subbst{\scalea{$0.228_{\pm .036}$}} \\
        & \scalea{64} &
        \bst{\scalea{$0.018_{\pm .003}$}} & \scalea{$1.573_{\pm .177}$} & \subbst{\scalea{$0.070_{\pm .023}$}} & 
        \bst{\scalea{$0.012_{\pm .002}$}} & \scalea{$1.621_{\pm .277}$} & \subbst{\scalea{$0.141_{\pm .010}$}} \\
        & \scalea{128} &
        \bst{\scalea{$0.062_{\pm .005}$}} & \scalea{$0.823_{\pm .074}$} & \subbst{\scalea{$0.163_{\pm .046}$}} & 
        \bst{\scalea{$0.043_{\pm .002}$}} & \scalea{$0.874_{\pm .186}$} & \subbst{\scalea{$0.162_{\pm .008}$}} \\
        & \scalea{256} &
        \subbst{\scalea{$0.317_{\pm .014}$}} & \scalea{$0.883_{\pm .134}$} & \bst{\scalea{$0.311_{\pm .321}$}} & 
        \subbst{\scalea{$0.213_{\pm .022}$}} & \scalea{$1.081_{\pm .083}$} & \bst{\scalea{$0.128_{\pm .011}$}} \\
        \cmidrule(lr){2-8}
        & \scalea{\emph{Avg}} &
        \bst{\scalea{0.100}} & \scalea{0.886} & \subbst{\scalea{0.146}} & 
        \bst{\scalea{0.069}} & \scalea{0.965} & \subbst{\scalea{0.165}} \\
        \midrule

        \multirow{5}{*}{\rotatebox{90}{\scalebox{0.9}{Correlational}}}
        & \scalea{24} & 
        \subbst{\scalea{$0.019_{\pm .006}$}} & \bst{\scalea{$0.018_{\pm .007}$}} & \scalea{$0.030_{\pm .004}$} & 
        \subbst{\scalea{$0.026_{\pm .011}$}} & \bst{\scalea{$0.018_{\pm .006}$}} & \scalea{$0.082_{\pm .014}$} \\
        & \scalea{64} & 
        \scalea{$0.030_{\pm .010}$} & \subbst{\scalea{$0.020_{\pm .007}$}} & \bst{\scalea{$0.019_{\pm .004}$}} & 
        \bst{\scalea{$0.023_{\pm .007}$}} & \subbst{\scalea{$0.025_{\pm .005}$}} & \scalea{$0.030_{\pm .012}$} \\
        & \scalea{128} & 
        \scalea{$0.021_{\pm .003}$} & \subbst{\scalea{$0.015_{\pm .005}$}} & \bst{\scalea{$0.012_{\pm .007}$}} & 
        \subbst{\scalea{$0.017_{\pm .004}$}} & \scalea{$0.020_{\pm .003}$} & \bst{\scalea{$0.016_{\pm .003}$}} \\
        & \scalea{256} & 
        \bst{\scalea{$0.015_{\pm .004}$}} & \bst{\scalea{$0.015_{\pm .005}$}} & \subbst{\scalea{$0.032_{\pm .008}$}} & 
        \bst{\scalea{$0.011_{\pm .005}$}} & \subbst{\scalea{$0.015_{\pm .007}$}} & \scalea{$0.017_{\pm .003}$} \\
        \cmidrule(lr){2-8}
        & \scalea{\emph{Avg}} & 
        \subbst{\scalea{0.021}} & \bst{\scalea{0.017}} & \scalea{0.023} & 
        \bst{\scalea{0.019}} & \subbst{\scalea{0.020}} & \scalea{0.036} \\
        \midrule

        \multirow{5}{*}{\rotatebox{90}{\scalebox{0.9}{Discriminative}}}
        & \scalea{24} & 
        \bst{\scalea{$0.007_{\pm .008}$}} & \scalea{$0.052_{\pm .049}$} & \subbst{\scalea{$0.019_{\pm .008}$}} & 
        \bst{\scalea{$0.008_{\pm .004}$}} & \subbst{\scalea{$0.035_{\pm .037}$}} & \scalea{$0.136_{\pm .198}$} \\
        & \scalea{64} & 
        \bst{\scalea{$0.010_{\pm .003}$}} & \scalea{$0.136_{\pm .038}$} & \subbst{\scalea{$0.027_{\pm .041}$}} & 
        \bst{\scalea{$0.008_{\pm .005}$}} & \scalea{$0.156_{\pm .083}$} & \subbst{\scalea{$0.009_{\pm .007}$}} \\
        & \scalea{128} & 
        \subbst{\scalea{$0.022_{\pm .007}$}} & \scalea{$0.213_{\pm .051}$} & \bst{\scalea{$0.011_{\pm .013}$}} & 
        \bst{\scalea{$0.013_{\pm .016}$}} & \scalea{$0.196_{\pm .106}$} & \subbst{\scalea{$0.041_{\pm .015}$}} \\
        & \scalea{256} & 
        \bst{\scalea{$0.023_{\pm .019}$}} & \scalea{$0.183_{\pm .070}$} & \subbst{\scalea{$0.066_{\pm .142}$}} & 
        \bst{\scalea{$0.027_{\pm .006}$}} & \scalea{$0.194_{\pm .096}$} & \subbst{\scalea{$0.043_{\pm .029}$}} \\
        \cmidrule(lr){2-8}
        & \scalea{\emph{Avg}} & 
        \bst{\scalea{0.016}} & \scalea{0.146} & \subbst{\scalea{0.031}} & 
        \bst{\scalea{0.014}} & \scalea{0.145} & \subbst{\scalea{0.057}} \\
        \midrule

        \multirow{5}{*}{\rotatebox{90}{\scalebox{0.9}{Predictive}}}
        & \scalea{24} & 
        \bst{\scalea{$0.093_{\pm .000}$}} & \subbst{\scalea{$0.095_{\pm .000}$}} & \scalea{$0.188_{\pm .007}$} & 
        \bst{\scalea{$0.093_{\pm .000}$}} & \subbst{\scalea{$0.094_{\pm .000}$}} & \bst{\scalea{$0.093_{\pm .000}$}} \\
        & \scalea{64} & 
        \bst{\scalea{$0.191_{\pm .002}$}} & \subbst{\scalea{$0.192_{\pm .003}$}} & \scalea{$0.255_{\pm .001}$} & 
        \bst{\scalea{$0.188_{\pm .003}$}} & \scalea{$0.195_{\pm .003}$} & \subbst{\scalea{$0.191_{\pm .002}$}} \\
        & \scalea{128} & 
        \bst{\scalea{$0.252_{\pm .003}$}} & \subbst{\scalea{$0.260_{\pm .005}$}} & \scalea{$0.288_{\pm .003}$} & 
        \bst{\scalea{$0.252_{\pm .003}$}} & \scalea{$0.260_{\pm .006}$} & \subbst{\scalea{$0.253_{\pm .004}$}} \\
        & \scalea{256} & 
        \subbst{\scalea{$0.287_{\pm .003}$}} & \scalea{$0.291_{\pm .004}$} & \bst{\scalea{$0.119_{\pm .001}$}} & 
        \bst{\scalea{$0.286_{\pm .003}$}} & \scalea{$0.290_{\pm .001}$} & \subbst{\scalea{$0.287_{\pm .003}$}} \\
        \cmidrule(lr){2-8}
        & \scalea{\emph{Avg}} & 
        \bst{\scalea{0.206}} & \subbst{\scalea{0.210}} & \scalea{0.213} & 
        \bst{\scalea{0.205}} & \scalea{0.210} & \subbst{\scalea{0.206}} \\
        \bottomrule

    \end{tabular}
% \vspace{-10pt}
\end{table}

\section{Conclusion}
This work revisits time series diffusion from the perspective of representation geometry. 
Instead of treating temporal dependence as a burden to be absorbed entirely by a stronger denoising estimator, PaCoDi moves the diffusion process itself into the spectral domain, where the DFT exposes a less entangled modal structure. 
In this sense, the frequency domain is no longer an auxiliary feature space for neural blocks, but the native state space in which the stochastic generative path is defined.
The main outcome is a principled way to make this shift mathematically valid: complex spectral dynamics are realized through parallel real-valued processes, with residual coupling handled by correction rather than by an unconstrained monolithic estimator.
This formulation also turns the Hermitian structure of real signals into a concrete efficiency advantage, reducing 50\% attention complexity while preserving the information content of the sequence.
Together with the empirical results on unconditional and conditional generation, PaCoDi suggests that future time series diffusion models should co-design the diffusion path with the geometry of the data distribution, rather than only scaling the estimator under a fixed temporal path.

\section{Acknowledgments}
This work is supported by the "Pioneer" and "Leading Goose" R\&D Program of Zhejiang (No. 2026C02A1250) and by the National Natural Science Foundation of China (No. 62503419).

\bibliographystyle{ACM-Reference-Format}
\balance
\bibliography{refs}

\newpage
\appendix
\onecolumn

% \paragraph{Appendix roadmap.}
% The appendices proceed from spectral noise statistics to reverse-process theory and implementation-facing objectives:
% \begin{itemize}
%     \item \textbf{Appendix~\ref{app:spec_dist_gauss}} derives the Hermitian spectral Gaussian law induced by real temporal noise.
%     \item \textbf{Appendix~\ref{app:cond_reverse_factor}} proves the conditional reverse factorization that supports parallel real and imaginary branches.
%     \item \textbf{Appendix~\ref{app:hetero_reverse_derivation}} derives DDPM reverse dynamics under general heteroscedastic Gaussian noise and justifies the Mahalanobis objective.
%     \item \textbf{Appendix D} develops the Frequency SDE framework, including reverse dynamics, score--noise identities, mean-field marginalization, and loss equivalence.
%     \item \textbf{Appendices~\ref{app:math_lemmas}--\ref{app:results}} collect supporting lemmas and supplementary experimental results.
% \end{itemize}

\section{Statistical Property of Hermitian Complex Gaussian Noise}
\label{app:spec_dist_gauss}

In this section, we derive the statistical properties of the spectral projection of i.i.d. temporal Gaussian noise, \ie \textit{Hermitian Complex Gaussian Noise}.
Let $\beps \in \R^{L}$ be a temporal Gaussian noise vector with $\beps \sim \mcN(\mbzero, \sigma^2 \mbI)$, and let
\begin{equation}
    \mcE = \mcF(\beps) = \bvareps_r + j\bvareps_i
\end{equation}
be its normalized DFT projection.
Because $\beps$ is real-valued, $\mcE$ is not an unconstrained proper complex Gaussian over the full space $\C^L$.
Instead, it is a Gaussian random vector supported on the real-linear Hermitian spectral subspace of $\C^L$, satisfying $\mcE_{(-k)\bmod L}=\overline{\mcE_k}$.
We show that this DFT-induced Hermitian Gaussian noise satisfies:
\begin{itemize}
    \item \textbf{Quadrature Independence:} The real and imaginary components $\bvareps_r$ and $\bvareps_i$ are statistically independent as random vectors.
    \item \textbf{Spectral Heteroscedasticity:} Their marginal distributions are Gaussian with covariance matrices $\bSigma_r$ and $\bSigma_i$, whose diagonal and anti-diagonal structures are induced by Hermitian symmetry.
\end{itemize}

\subsection{Setup and Preliminaries}
Let $n \in [0, L-1]$ denote the temporal index and $k \in [0, L-1]$ denote the frequency-bin index.
The Discrete Fourier Transform (DFT) of the noise vector $\beps$ is defined as:
\begin{equation}
    \mcE_{k}
    = \frac{1}{\sqrt{L}} \sum_{n=0}^{L-1} \beps_{n} \exp\left(-j \frac{2\pi k n}{L}\right)
    = \bvareps_{r,k} + j \bvareps_{i,k},
\end{equation}
where the real and imaginary components are:
\begin{equation}
    \bvareps_{r,k} 
    = \frac{1}{\sqrt{L}} \sum_{n=0}^{L-1} \beps_{n} \cos\left(\frac{2\pi k n}{L}\right) , \quad
    \bvareps_{i,k} 
    = - \frac{1}{\sqrt{L}} \sum_{n=0}^{L-1} \beps_{n} \sin\left(\frac{2\pi k n}{L}\right) .
\end{equation}

Since the DFT is linear and $\beps$ is Gaussian, the augmented vector $[\bvareps_r^\top,\bvareps_i^\top]^\top$ is jointly Gaussian.
We use the modular indicator
\begin{equation}
    \delta_L(A)=
    \begin{cases}
        1 & \text{if } A \equiv 0 \pmod L,\\
        0 & \text{otherwise},
    \end{cases}
\end{equation}
and the following trigonometric identity (Lemma~\ref{lem:trig_sum}):
\begin{equation}
    \sum_{n=0}^{L-1} \cos\left(\frac{2\pi A n}{L}\right) =
    L \cdot \delta_L(A)
    \label{eq:trig_identity}
\end{equation}
where $A$ will be instantiated as $k-l$ or $k+l$.
The sine counterpart satisfies $\sum_{n=0}^{L-1}\sin(2\pi A n/L)=0$ for every integer $A$.

\subsection{Quadrature Orthogonality}

The first moments vanish by linearity:
\begin{equation}
    \expt[\bvareps_{r,k}] = 0, \quad
    \expt[\bvareps_{i,k}] = 0,
    \quad \forall k.
\end{equation}

For the cross-covariance between real frequency $k$ and imaginary frequency $l$, Lemma~\ref{lem:exp_prod_Gauss_noise} gives:
\begin{equation}
\begin{aligned}
    \cov(\bvareps_{r,k}, \bvareps_{i,l})
    &= \expt \left[
        \left(  \frac{1}{\sqrt{L}} \sum_{n=0}^{L-1} \beps_{n} \cos\frac{2\pi kn}{L} \right)
        \left(- \frac{1}{\sqrt{L}} \sum_{m=0}^{L-1} \beps_{m} \sin\frac{2\pi lm}{L} \right)
    \right]
    \\
    &= - \frac{1}{L} \sum_{n=0}^{L-1} \sum_{m=0}^{L-1} 
        \underbrace{\expt \left[ \beps_{n} \beps_{m} \right]}_{\sigma^2 \delta_{n, m}} 
        \cos \frac{2\pi kn}{L} \sin\frac{2\pi lm}{L}
    \\
    &= - \frac{\sigma^2}{L} \sum_{n=0}^{L-1} \cos\frac{2\pi kn}{L} \sin\frac{2\pi ln}{L}
    \\
    &= - \frac{\sigma^2}{2L} \sum_{n=0}^{L-1} \left( \sin\frac{2\pi n(k+l)}{L} - \sin\frac{2\pi n(k-l)}{L} \right).
\end{aligned}
\end{equation}

Both sine sums vanish, hence
\begin{equation}
\cov(\bvareps_{r,k}, \bvareps_{i,l}) = 0, \quad \forall k, l.
\end{equation}

Since $[\bvareps_r^\top,\bvareps_i^\top]^\top$ is jointly Gaussian, zero cross-covariance implies statistical independence between the two quadrature vectors:
\begin{equation}
    \bvareps_r \perp \bvareps_i .
\end{equation}

\subsection{Hermitian Heteroscedastic Covariance}

We next derive the covariance within each quadrature component.
For the real part:
\begin{equation}
\begin{aligned}
    \cov(\bvareps_{r,k}, \bvareps_{r,l})
    &= \expt \left[ 
        \left( \frac{1}{\sqrt{L}} \sum_{n=0}^{L-1} \beps_{n} \cos \frac{2\pi kn}{L} \right) 
        \left( \frac{1}{\sqrt{L}} \sum_{m=0}^{L-1} \beps_{m} \cos \frac{2\pi lm}{L} \right) 
    \right] 
    \\
    &= \frac{1}{L} \sum_{n=0}^{L-1} \sum_{m=0}^{L-1} 
        \underbrace{\expt \left[ \beps_{n} \beps_{m} \right]}_{\sigma^2 \delta_{n,m}}
        \cos \frac{2\pi kn}{L}  \cos \frac{2\pi lm}{L}
    \\
    &= \frac{\sigma^2}{L} \sum_{n=0}^{L-1} \cos\frac{2\pi kn}{L} \cos\frac{2\pi ln}{L}
    \\
    &= \frac{\sigma^2}{2L} \sum_{n=0}^{L-1} \left(
        \cos\frac{2\pi n(k-l)}{L} + \cos\frac{2\pi n(k+l)}{L}
    \right).
\end{aligned}
\end{equation}

Using Eq.~\eqref{eq:trig_identity}, we obtain the compact identity:
\begin{equation}
    \cov(\bvareps_{r,k}, \bvareps_{r,l}) 
    = \frac{\sigma^2}{2}\left[\delta_L(k-l)+\delta_L(k+l)\right].
\label{eq:cov_R_R}
\end{equation}

Similarly, for the imaginary part:
\begin{equation} 
\begin{aligned} 
    \cov(\bvareps_{i,k}, \bvareps_{i,l}) 
    &= \expt \left[ 
        \left(- \frac{1}{\sqrt{L}} \sum_{n=0}^{L-1} \beps_{n} \sin \frac{2\pi kn}{L} \right) 
        \left(- \frac{1}{\sqrt{L}} \sum_{m=0}^{L-1} \beps_{m} \sin \frac{2\pi lm}{L} \right) 
    \right] 
    \\
    &= \frac{1}{L} \sum_{n=0}^{L-1} \sum_{m=0}^{L-1} 
        \expt \left[ \beps_{n} \beps_{m} \right] 
        \sin \frac{2\pi kn}{L}  \sin \frac{2\pi lm}{L}
    \\
    &= \frac{\sigma^2}{L} \sum_{n=0}^{L-1} \sin\frac{2\pi kn}{L} \sin\frac{2\pi ln}{L} 
    \\ 
    &= \frac{\sigma^2}{2L} \sum_{n=0}^{L-1} \left( \cos\frac{2\pi n(k-l)}{L} - \cos\frac{2\pi n(k+l)}{L} \right) . 
\end{aligned} 
\end{equation} 

which yields
\begin{equation} 
    \cov(\bvareps_{i,k}, \bvareps_{i,l}) 
    = \frac{\sigma^2}{2}\left[\delta_L(k-l)-\delta_L(k+l)\right].
\label{eq:cov_I_I} 
\end{equation}

These identities reveal the Hermitian degeneracies explicitly.
For the DC bin $k=0$, and also the Nyquist bin $k=L/2$ when $L$ is even, the imaginary variance vanishes:
\begin{equation}
    \operatorname{Var}(\bvareps_{i,0}) = 0, \quad
    \operatorname{Var}(\bvareps_{i,L/2}) = 0 \quad (L \text{ even}).
\end{equation}
The corresponding real variances equal $\sigma^2$:
\begin{equation}
    \operatorname{Var}(\bvareps_{r,0}) = \sigma^2, \quad
    \operatorname{Var}(\bvareps_{r,L/2}) = \sigma^2 \quad (L \text{ even}).
\end{equation}
For non-self-conjugate frequency pairs, both quadratures have variance $\sigma^2/2$ and conjugate bins are anti-diagonally correlated.
Thus, $\bSigma_r$ and $\bSigma_i$ are heteroscedastic and singular on the redundant full grid.
All Mahalanobis norms, precision matrices, and determinants involving $\bSigma_r$ and $\bSigma_i$ are taken on the independent Hermitian coordinate chart, equivalently as the support inverse and pseudo-determinant of the full-grid degenerate Gaussian.

\subsection{Covariance Matrix Visualization}
The derived covariance structures $\bSigma_r$ and $\bSigma_i$ are sparse matrices characterized by an ``X-shape'' structure (diagonal + anti-diagonal), reflecting Hermitian symmetry.
The following matrices are expanded visualizations of the compact covariance identities in Eqs.~\eqref{eq:cov_R_R} and \eqref{eq:cov_I_I}.
% These expanded matrices are schematic for nontrivial lengths (even $L \ge 4$ and odd $L \ge 3$); small boundary cases are still governed by the compact identities above.

For an even $L$, the covariance matrices $\bSigma_r$ and $\bSigma_i$ are:
\begin{equation}
    \underset{L \in \{2n | n \in \N \}}{\bSigma_r} = \sigma^2
    \begin{bmatrix}
        1      &0           &0      &\cdots      &0      &\cdots      &0      &0           \\ 
        0      &\frac{1}{2} &0      &\cdots      &0      &\cdots      &0      &\frac{1}{2} \\  
        0      &0           &\ddots &            &\vdots &            &\idots &0           \\
        \vdots &\vdots      &       &\frac{1}{2} &0      &\frac{1}{2} &       &\vdots      \\
        0      &0           &\cdots &0           &1      &0           &\cdots &0           \\
        \vdots &\vdots      &       &\frac{1}{2} &0      &\frac{1}{2} &       &\vdots      \\
        0      &0           &\idots &            &\vdots &            &\ddots &0           \\
        0      &\frac{1}{2} &0      &\cdots      &0      &\cdots      &0      &\frac{1}{2}
    \end{bmatrix}_{L \times L}
    , \quad
    \underset{L \in \{2n | n \in \N \}}{\bSigma_i} = \sigma^2
    \begin{bmatrix}
        0      &0            &0      &\cdots       &0      &\cdots       &0      &0            \\ 
        0      &\frac{1}{2}  &0      &\cdots       &0      &\cdots       &0      &-\frac{1}{2} \\  
        0      &0            &\ddots &             &\vdots &             &\idots &0            \\
        \vdots &\vdots       &       &\frac{1}{2}  &0      &-\frac{1}{2} &       &\vdots       \\
        0      &0            &\cdots &0            &0      &0            &\cdots &0            \\
        \vdots &\vdots       &       &-\frac{1}{2} &0      &\frac{1}{2}  &       &\vdots       \\
        0      &0            &\idots &             &\vdots &             &\ddots &0            \\
        0      &-\frac{1}{2} &0      &\cdots       &0      &\cdots       &0      &\frac{1}{2}
    \end{bmatrix}_{L \times L}
\label{eq:even_sigma}
\end{equation}

For an odd $L$, the covariance matrices $\bSigma_r$ and $\bSigma_i$ are:

\begin{equation}
    \underset{L \in \{2n+1 | n \in \N \}}{\bSigma_r} = \sigma^2
    \begin{bmatrix}
        1      &0           &0      &\cdots      &\cdots      &0      &0           \\ 
        0      &\frac{1}{2} &0      &\cdots      &\cdots      &0      &\frac{1}{2} \\  
        0      &0           &\ddots &            &            &\idots &0           \\
        \vdots &\vdots      &       &\frac{1}{2} &\frac{1}{2} &       &\vdots      \\
        \vdots &\vdots      &       &\frac{1}{2} &\frac{1}{2} &       &\vdots      \\
        0      &0           &\idots &            &            &\ddots &0           \\
        0      &\frac{1}{2} &0      &\cdots      &\cdots      &0      &\frac{1}{2}
    \end{bmatrix}_{L \times L}
    , \quad
    \underset{L \in \{2n+1 | n \in \N \}}{\bSigma_i} = \sigma^2
    \begin{bmatrix}
        0      &0            &0      &\cdots       &\cdots       &0      &0            \\ 
        0      &\frac{1}{2}  &0      &\cdots       &\cdots       &0      &-\frac{1}{2} \\  
        0      &0            &\ddots &             &             &\idots &0            \\
        \vdots &\vdots       &       &\frac{1}{2}  &-\frac{1}{2} &       &\vdots       \\
        \vdots &\vdots       &       &-\frac{1}{2} &\frac{1}{2}  &       &\vdots       \\
        0      &0            &\idots &             &             &\ddots &0            \\
        0      &-\frac{1}{2} &0      &\cdots       &\cdots       &0      &\frac{1}{2}
    \end{bmatrix}_{L \times L}
\label{eq:odd_sigma}
\end{equation}

\subsection{Closedness under Gaussian Superposition}
\label{app:additivity}
The forward transition relies on the fact that weighted sums of independent Gaussian noises remain Gaussian.
The same closedness holds after DFT projection, provided that the spectral noise is understood as the Hermitian-constrained Gaussian induced by real temporal noise.

\begin{tcolorbox}[tcbset]
\begin{theorem}
\label{thm:add_unitary_inv}
    \textbf{(Closedness of Hermitian Spectral Gaussian Noise).}
    Let $\beps_1,\dots,\beps_T \in \R^L$ be independent temporal Gaussian noise vectors with $\beps_t \sim \mcN(\mbzero,\sigma^2\mbI)$, and let $\mcE_t=\mcF(\beps_t)$.
    For any real weights $\{w_t\}_{t=1}^{T}$, define $c=(\sum_{t=1}^{T}w_t^2)^{1/2}\ge 0$.
    Then
    \begin{equation}
        \sum_{t=1}^{T} w_t \mcE_t
        \stackrel{d}{=}
        c\,\mcE^*,
    \end{equation}
    where $\mcE^*=\mcF(\beps^*)$ and $\beps^* \sim \mcN(\mbzero,\sigma^2\mbI)$.
    Therefore, Hermitian spectral Gaussian noise is closed under the weighted Gaussian superposition used by DDPM forward reparameterization.
\end{theorem}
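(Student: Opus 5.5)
The plan is to pull the superposition back to the temporal domain through the linearity of $\mcF$, use the closedness of real isotropic Gaussians under weighted sums, and push the result forward again. Since $\mcF$ is a fixed (normalized) linear operator $\mbU$ and the weights $w_t$ are real, we have
\begin{equation}
    \sum_{t=1}^{T} w_t \mcE_t = \sum_{t=1}^{T} w_t \mbU \beps_t = \mbU\Big(\sum_{t=1}^{T} w_t \beps_t\Big).
\end{equation}
Because real weights are used, the Hermitian constraint $\mcE_{(-k)\bmod L}=\overline{\mcE_k}$ is preserved term by term, so the sum stays in the same real-linear Hermitian subspace. This is the reason the statement restricts to real $w_t$.

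Next, set $\mathbf{s}=\sum_t w_t\beps_t$. The $\beps_t$ are independent Gaussians, so $\mathbf{s}$ is Gaussian with mean $\mbzero$ and covariance $\sum_t w_t^2\sigma^2\mbI = c^2\sigma^2\mbI$. Hence $\mathbf{s}\stackrel{d}{=} c\,\beps^*$ with $\beps^*\sim\mcN(\mbzero,\sigma^2\mbI)$. The degenerate case $c=0$ is trivial: then all $w_t=0$ and both sides are identically $\mbzero$.

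Since equality in distribution is preserved under any measurable map, in particular the deterministic linear map $\mbU$, we get
\begin{equation}
    \sum_t w_t\mcE_t=\mbU\mathbf{s}\stackrel{d}{=}\mbU(c\,\beps^*)=c\,\mcF(\beps^*)=c\,\mcE^*.
\end{equation}

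As a consistency check against Theorem~\ref{thm:freq_ortho}, we can also verify this at the level of the quadrature components. The stacked vector $[\Re(\cdot)^\top,\Im(\cdot)^\top]^\top$ of the sum is jointly Gaussian with zero mean. By independence across $t$, its covariance is $\sum_t w_t^2$ times the block-diagonal matrix $\mathrm{diag}(\bSigma_r,\bSigma_i)$ from Eqs.~\eqref{eq:cov_R_R}--\eqref{eq:cov_I_I}. That equals the covariance of $c\,\mcE^*$, and since zero-mean Gaussians are determined by their covariance, the two laws coincide, including the singular DC/Nyquist directions.

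The only step needing care is the complex setting. Equality in law for complex vectors should be read as equality in law of the real $2L$-dimensional vectors $(\Re,\Im)$. The covariance here is degenerate, so we cannot argue via densities. The pushforward argument sidesteps this entirely, which is why I would lead with it and keep the covariance computation only as a check.
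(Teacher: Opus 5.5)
Your proposal is correct and follows the paper's proof essentially step for step: write $\sum_t w_t\beps_t \sim \mcN(\mbzero, c^2\sigma^2\mbI)$, identify it in law with $c\,\beps^*$, and push forward through the linear DFT. The quadrature-covariance cross-check and the $c=0$ remark are sound extras that the paper does not include and the argument does not need.
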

\end{tcolorbox}

\begin{proof}
Let
\begin{equation}
    \beps_{\mathrm{sum}} = \sum_{t=1}^{T} w_t \beps_t .
\end{equation}
Because the $\beps_t$ are independent centered Gaussians,
\begin{equation}
    \beps_{\mathrm{sum}}
    \sim \mcN\left(\mbzero, \sigma^2 \sum_{t=1}^{T} w_t^2 \mbI\right)
    =
    \mcN(\mbzero, c^2\sigma^2\mbI).
\end{equation}
Thus, $\beps_{\mathrm{sum}} \stackrel{d}{=} c\beps^*$ for $\beps^* \sim \mcN(\mbzero,\sigma^2\mbI)$.
Applying the linear DFT operator gives
\begin{equation}
    \sum_{t=1}^{T} w_t \mcE_t
    =
    \mcF\left(\sum_{t=1}^{T} w_t \beps_t\right)
    =
    \mcF(\beps_{\mathrm{sum}})
    \stackrel{d}{=}
    c\mcF(\beps^*)
    =
    c\mcE^* .
\end{equation}
Since $\beps^*$ is real-valued, $\mcE^*$ satisfies the same Hermitian constraint and covariance identities derived above.
For the DDPM marginal transition, the cumulative noise weights satisfy $c^2=1-\bar{\alpha}_t$, yielding
\begin{equation}
    \mcE_{\text{cumulative}} 
    \stackrel{d}{=}
    \sqrt{1-\bar{\alpha}_t}\,\mcE^*.
\end{equation}

\end{proof}

\newpage
\section{Proof of Proposition~\ref{prop:cond_reverse_factor} (Conditional Reverse Factorization)}
\label{app:cond_reverse_factor}

The proof has two steps.
We first establish the conditional factorization of the forward trajectory under a fixed initial state $\mcX_0$.
We then substitute this factorization into Bayes' theorem to obtain the reverse posterior factorization in Proposition~\ref{prop:cond_reverse_factor}.

\subsection{Conditional Trajectory Factorization}
\label{app:proof_reverse_indep_algebraic}

\begin{tcolorbox}[tcbset]
\begin{lemma}
\label{lem:cond_factor_traj}
    \textbf{(Conditional Factorizability of Complex Trajectories).}
    Given the quadrature independence of complex frequency noise, the true conditional distribution $q(\mcX_t | \mcX_0)$ factorizes into independent marginals of its real and imaginary parts for any fixed initial boundary $\mcX_0$:
    \begin{equation}
        q(\mcX_t | \mcX_0)
        = q(\mcR_t | \mcR_0) q(\mcI_t | \mcI_0) .
    \end{equation}
\end{lemma}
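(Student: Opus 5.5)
The plan is to compute $q(\mcX_t|\mcX_0)$ explicitly from the closed-form forward marginal, and then use the Gaussian structure of the noise established in Theorem~\ref{thm:freq_ortho}. By linearity of $\mcF$, the temporal marginal $\vecx_t = \sqrt{\bar\alpha_t}\vecx_0 + \sqrt{1-\bar\alpha_t}\beps$ maps to
\[
\mcX_t = \sqrt{\bar\alpha_t}\mcX_0 + \sqrt{1-\bar\alpha_t}\,\mcE .
\]
Here Theorem~\ref{thm:add_unitary_inv} guarantees that the cumulative spectral noise is distributed as $\sqrt{1-\bar\alpha_t}\,\mcF(\beps^*)$, with $\beps^*$ independent of $\mcX_0$. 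Taking real and imaginary parts, with the scalar coefficients real, yields exactly the two affine relations in Eq.~\eqref{eq:decoupled_marginal}:
\[
\mcR_t = \sqrt{\bar\alpha_t}\mcR_0 + \sqrt{1-\bar\alpha_t}\,\bvareps_r, \qquad \mcI_t = \sqrt{\bar\alpha_t}\mcI_0 + \sqrt{1-\bar\alpha_t}\,\bvareps_i .
\]
In particular, given $\mcX_0$, the variable $\mcR_t$ depends on $\mcX_0$ only through $\mcR_0$, and $\mcI_t$ only through $\mcI_0$.

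Next, conditional on a fixed $\mcX_0$, the pair $(\mcR_t,\mcI_t)$ is an affine image of $(\bvareps_r,\bvareps_i)$. By Theorem~\ref{thm:freq_ortho}, this pair is jointly Gaussian with zero cross-covariance, and therefore independent. Hence the joint conditional law is Gaussian with block-diagonal covariance
\[
(1-\bar\alpha_t)\,\mathrm{diag}(\bSigma_r,\bSigma_i)
\]
and mean $\sqrt{\bar\alpha_t}(\mcR_0,\mcI_0)$. A block-diagonal Gaussian factorizes as the product of its marginals, $\mcN(\mcR_t;\sqrt{\bar\alpha_t}\mcR_0,(1-\bar\alpha_t)\bSigma_r)\cdot\mcN(\mcI_t;\sqrt{\bar\alpha_t}\mcI_0,(1-\bar\alpha_t)\bSigma_i)$. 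These two factors are precisely $q(\mcR_t|\mcR_0)$ and $q(\mcI_t|\mcI_0)$, giving the claim.

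The main obstacle is that $\bSigma_r$ and $\bSigma_i$ are singular on the full grid: $\bSigma_i$ vanishes at DC and Nyquist, and both have an anti-diagonal Hermitian duplication. Consequently, ``density'' must be read carefully. I would handle this as the appendix suggests, by stating the factorization at the level of probability laws, where independence of $\bvareps_r$ and $\bvareps_i$ suffices without any densities. Equivalently, one can pass to the independent Hermitian chart (bins $0,\dots,\lfloor L/2\rfloor$, dropping the identically-zero imaginary DC and Nyquist coordinates). On that chart the covariances are nonsingular diagonal matrices, so the Gaussian densities exist and factorize literally, using pseudo-determinants and support inverses. A minor point to check is that the support of each conditional law is the affine subspace determined only by its own component of $\mcX_0$, which follows from the affine relations above.
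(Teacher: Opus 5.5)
Your proposal is correct and follows essentially the same route as the paper. You write the conditional forward marginal as $\sqrt{\bar{\alpha}_t}(\mcR_0,\mcI_0)$ plus quadrature noise that is independent by Theorem~\ref{thm:freq_ortho}, obtain the block-diagonal covariance $(1-\bar{\alpha}_t)\,\mathrm{diag}(\bSigma_r,\bSigma_i)$, and split the Gaussian into its two marginals, handling the singular full-grid covariances through the independent Hermitian chart just as the paper does with its support-wise inverse and determinant convention; your law-level phrasing (independence of the blocks suffices without densities) is a slightly cleaner way to sidestep that degeneracy, and on the chart ordinary determinants already suffice, with pseudo-determinants needed only on the full grid.
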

\end{tcolorbox}

\begin{proof}
For a complex spectral state $\mcX_t = \mcR_t + j \mcI_t \in \C^L$, represent the variable by the augmented real vector $\mcX_{\mathrm{aug}}=[\mcR_t^\top,\mcI_t^\top]^\top$.
Conditioned on a fixed initial state $\mcX_0$, the DFT-domain forward marginal can be written as:
\begin{align}
    \mcR_t
    &= \sqrt{\bar{\alpha}_t}\mcR_0+\sqrt{1-\bar{\alpha}_t}\bvareps_r,
    \quad \bvareps_r \sim \mcN(\mbzero,\bSigma_r),
    \\
    \mcI_t
    &= \sqrt{\bar{\alpha}_t}\mcI_0+\sqrt{1-\bar{\alpha}_t}\bvareps_i,
    \quad \bvareps_i \sim \mcN(\mbzero,\bSigma_i).
\end{align}
By Theorem~\ref{thm:freq_ortho}, $\bvareps_r$ and $\bvareps_i$ are statistically independent.
Therefore, the augmented conditional Gaussian has mean and covariance:
\begin{equation}
    \bmu_{\text{aug}}
    =
    \sqrt{\bar{\alpha}_t}
    \begin{bmatrix}
        \mcR_0\\
        \mcI_0
    \end{bmatrix},
    \quad
    \bSigma_{\text{aug}} =
    \begin{bmatrix} 
        (1-\bar{\alpha}_t)\bSigma_r & \mbzero\\
        \mbzero & (1-\bar{\alpha}_t)\bSigma_i
    \end{bmatrix} .
\end{equation}
The off-diagonal covariance blocks vanish exactly because $\cov(\bvareps_r,\bvareps_i)=\mbzero$.
Equivalently, define
\begin{equation}
    \bmu_t^{\mcR}=\sqrt{\bar{\alpha}_t}\mcR_0,\quad
    \bmu_t^{\mcI}=\sqrt{\bar{\alpha}_t}\mcI_0,\quad
    \bSigma_t^{\mcR}=(1-\bar{\alpha}_t)\bSigma_r,\quad
    \bSigma_t^{\mcI}=(1-\bar{\alpha}_t)\bSigma_i .
\end{equation}
The block-diagonal covariance of this joint Gaussian separates the quadratic form and the normalizing constant.
Using the support-wise determinant and inverse convention from Appendix~\ref{app:spec_dist_gauss}, the Gaussian density with respect to the induced Hermitian coordinate measure becomes:
\begin{equation}
\begin{aligned}
    q(\mcX_t | \mcX_0) 
    &= C_t
    \exp \left( 
        -\frac{1}{2} \left[ 
            (\mcR_t - \bmu^{\mcR}_t)^{\top} (\bSigma^{\mcR}_t)^{-1} (\mcR_t - \bmu^{\mcR}_t) + 
            (\mcI_t - \bmu^{\mcI}_t)^{\top} (\bSigma^{\mcI}_t)^{-1} (\mcI_t - \bmu^{\mcI}_t) 
        \right] 
    \right)
    \\
    &= \prod_{\zeta \in \{ \mcR, \mcI \}} 
    C_{\zeta,t}
    \exp \left( 
        -\frac{1}{2} 
        (\zeta_t - \bmu^{\zeta}_t)^{\top} 
        (\bSigma^{\zeta}_t)^{-1} 
        (\zeta_t - \bmu^{\zeta}_t) 
    \right)
    \\
    &= q(\mcR_t | \mcR_0) \cdot q(\mcI_t | \mcI_0)
\end{aligned}
\end{equation}
where $C_t=C_{\mcR,t}C_{\mcI,t}$ collects the corresponding restricted Gaussian normalizing constants.
This proves the desired conditional factorization.
\end{proof}

\subsection{Proof of the Reverse Posterior Factorization}

The reverse denoising posterior in DDPM is
\begin{equation}
    q(\mcX_{t-1} | \mcX_t, \mcX_0)
    =
    \frac{
        q(\mcX_t | \mcX_{t-1}, \mcX_0)
        q(\mcX_{t-1} | \mcX_0)
    }{
        q(\mcX_t | \mcX_0)
    } .
\label{eq:bayes_posterior}
\end{equation}
By the Markov property, $q(\mcX_t | \mcX_{t-1},\mcX_0)=q(\mcX_t|\mcX_{t-1})$.
The one-step forward transition factorizes because the injected quadrature noises are independent:
\begin{equation}
    q(\mcX_t | \mcX_{t-1})
    =
    q(\mcR_t | \mcR_{t-1})
    q(\mcI_t | \mcI_{t-1}) .
\end{equation}
Lemma~\ref{lem:cond_factor_traj} further gives the conditional marginals:
\begin{equation}
    q(\mcX_s|\mcX_0)
    =
    q(\mcR_s|\mcR_0)
    q(\mcI_s|\mcI_0),
    \quad s\in\{t-1,t\}.
\end{equation}
Substituting these three factorizations into Eq.~\eqref{eq:bayes_posterior} yields:
\begin{equation} 
\begin{aligned} 
    q(\mcX_{t-1} | \mcX_t, \mcX_0) 
    &=
    \frac{
        q(\mcR_t|\mcR_{t-1})
        q(\mcI_t|\mcI_{t-1})
        q(\mcR_{t-1}|\mcR_0)
        q(\mcI_{t-1}|\mcI_0)
    }{
        q(\mcR_t|\mcR_0)q(\mcI_t|\mcI_0)
    }
    \\
    &=
    \underbrace{
        \frac{q(\mcR_t|\mcR_{t-1})q(\mcR_{t-1}|\mcR_0)}
        {q(\mcR_t|\mcR_0)}
    }_{q(\mcR_{t-1}|\mcR_t,\mcR_0)}
    \cdot
    \underbrace{
        \frac{q(\mcI_t|\mcI_{t-1})q(\mcI_{t-1}|\mcI_0)}
        {q(\mcI_t|\mcI_0)}
    }_{q(\mcI_{t-1}|\mcI_t,\mcI_0)} .
\end{aligned}
\end{equation}
Therefore,
\begin{equation}
    q(\mcX_{t-1} | \mcX_t, \mcX_0)
    =
    q(\mcR_{t-1}|\mcR_t,\mcR_0)
    q(\mcI_{t-1}|\mcI_t,\mcI_0),
\end{equation}
which proves Proposition~\ref{prop:cond_reverse_factor}.

\subsection{Post-proof Transition to Mean Field Approximation} 
\label{sec:transition_to_mft_proof}

Proposition~\ref{prop:cond_reverse_factor} concerns the true posterior conditioned on the initial state $\mcX_0$.
The learned generative transition, however, depends only on $\mcX_t$ and therefore requires marginalizing over the unknown initial state:
\begin{equation}
    q(\mcX_{t-1} | \mcX_t)
    =
    \int
    q(\mcX_{t-1} | \mcX_t,\mcX_0)
    q(\mcX_0|\mcX_t)
    \diff \mcX_0 .
\end{equation}
Although the conditional kernel inside the integral factorizes, the posterior $q(\mcX_0|\mcX_t)$ generally does not, because $\mcR_0$ and $\mcI_0$ are coupled by the data manifold (\eg phase coherence).
Thus, the marginal reverse transition remains entangled.

This motivates the Mean Field Theory (MFT) approximation in Section~\ref{sec:mft_parallel}.
Using a factorized variational proxy for the unknown initial-state posterior, the reverse transition is approximated by:
\begin{equation} 
\begin{aligned} 
    p_{\theta}(\mcX_{t-1} | \mcX_t) 
    &\approx
    \int
    q(\mcR_{t-1}|\mcR_t,\mcR_0)
    q(\mcI_{t-1}|\mcI_t,\mcI_0)
    p_{\theta_r}(\mcR_0|\mcR_t)
    p_{\theta_i}(\mcI_0|\mcI_t)
    \diff \mcR_0 \diff \mcI_0
    \\ 
    &=
    \left[
        \int q(\mcR_{t-1}|\mcR_t,\mcR_0)
        p_{\theta_r}(\mcR_0|\mcR_t)
        \diff \mcR_0
    \right]
    \left[
        \int q(\mcI_{t-1}|\mcI_t,\mcI_0)
        p_{\theta_i}(\mcI_0|\mcI_t)
        \diff \mcI_0
    \right]
    \\ 
    &=
    p_{\theta_r}(\mcR_{t-1}|\mcR_t)
    p_{\theta_i}(\mcI_{t-1}|\mcI_t) .
\end{aligned} 
\end{equation}
This explains why the conditional theory supports parallel branches, while the practical model still needs the interactive correction branch to compensate for correlations lost under the MFT approximation.

\newpage
\section{Reverse Dynamics under Heteroscedastic Gaussian Noise}
\label{app:hetero_reverse_derivation}

This appendix derives the DDPM reverse posterior for $\vecx \in \R^{d}$ under Gaussian noise $\beps \sim \mcN(\mbzero, \bSigma)$ with arbitrary symmetric positive definite covariance $\bSigma$.
The result justifies the Mahalanobis training objective: the posterior keeps the same covariance shape up to a scalar schedule factor, and the variational KL reduces to a precision-weighted noise prediction error.
In PaCoDi, we apply this result to the real and imaginary spectral branches with $\bSigma=\bSigma_r$ and $\bSigma=\bSigma_i$, respectively; throughout, $q$ denotes the true forward or posterior distribution and $p_{\theta}$ the learned reverse model.

\subsection{Bayesian Formulation of the Reverse Step}
Using \textbf{Bayes' Theorem}, the true reverse posterior $q(\vecx_{t-1} | \vecx_t, \vecx_0)$ is given by:
\begin{equation}
    q(\vecx_{t-1} | \vecx_t, \vecx_0)
    = \frac{q(\vecx_t | \vecx_{t-1}) q(\vecx_{t-1} | \vecx_0)}{q(\vecx_t | \vecx_0)}.
\end{equation}

Recall the forward diffusion transitions defined by the schedule $\alpha_t = 1 - \beta_t$ and $\bar{\alpha}_t = \prod_{s=1}^t \alpha_s$:
\begin{gather}
    q(\vecx_t | \vecx_{t-1}) 
    = \mcN(\vecx_t; \sqrt{\alpha_t} \vecx_{t-1}, (1-\alpha_t)\bSigma),
    \\
    q(\vecx_t | \vecx_0) 
    = \mcN(\vecx_t; \sqrt{\bar{\alpha}_t} \vecx_0, (1 - \bar{\alpha}_t)\bSigma).
\end{gather}

Substituting these densities into the Bayes' rule ratio:
\begin{equation}
    q(\vecx_{t-1} | \vecx_t, \vecx_0) \propto
    \exp \left(
        - \frac{1}{2} \left[
            (\vecx_t - \sqrt{\alpha_t} \vecx_{t-1})^\top \frac{\bSigma^{-1}}{1-\alpha_t} (\vecx_t - \sqrt{\alpha_t} \vecx_{t-1})
            + (\vecx_{t-1} - \sqrt{\bar{\alpha}_{t-1}} \vecx_0)^\top \frac{\bSigma^{-1}}{1-\bar{\alpha}_{t-1}} (\vecx_{t-1} - \sqrt{\bar{\alpha}_{t-1}} \vecx_0)
            - \mathcal{C}(\vecx_t, \vecx_0)
        \right]
    \right)
\end{equation}
where $\mathcal{C}(\vecx_t, \vecx_0)$ denotes terms independent of $\vecx_{t-1}$.

\subsection{Completing the Square}
To identify the mean $\tilde{\bmu}_t$ and covariance $\tilde{\bSigma}_t$ of the resulting Gaussian $q(\vecx_{t-1} | \vecx_t, \vecx_0)$, we collect all terms quadratic and linear in $\vecx_{t-1}$.

\paragraph{1. The Precision Matrix (Inverse Covariance):}
Extracting the quadratic terms $\vecx_{t-1}^\top (\cdot) \vecx_{t-1}$:
\begin{equation}
    \tilde{\bSigma}_t^{-1}
    = \frac{\alpha_t}{1-\alpha_t} \bSigma^{-1} + \frac{1}{1-\bar{\alpha}_{t-1}} \bSigma^{-1}
    = \left( \frac{1 - \bar{\alpha}_t}{(1-\alpha_t)(1-\bar{\alpha}_{t-1})} \right) \bSigma^{-1}.
\end{equation}

Inverting this yields the reverse covariance:
\begin{equation}
\tilde{\bSigma}_t 
= \frac{(1-\alpha_t)(1-\bar{\alpha}_{t-1})}{1-\bar{\alpha}_t} \bSigma .
\end{equation}

This result confirms that the shape of $\bSigma$ is preserved throughout the reverse process, merely scaled by a time-dependent scalar.

\paragraph{2. The Mean Vector:}
Extracting the linear terms $-2 \tilde{\bmu}_t^\top \tilde{\bSigma}_t^{-1} \vecx_{t-1}$ involves the matrix $\bSigma^{-1}$:
\begin{equation}
    \tilde{\bmu}_t^\top \tilde{\bSigma}_t^{-1}
    = \frac{\sqrt{\alpha_t}}{1-\alpha_t} \vecx_t^\top \bSigma^{-1} 
    + \frac{\sqrt{\bar{\alpha}_{t-1}}}{1-\bar{\alpha}_{t-1}} \vecx_0^\top \bSigma^{-1} .
\end{equation} 

Multiplying from the right by $\tilde{\bSigma}_t$:
\begin{equation}
\begin{aligned}
    \tilde{\bmu}_t
    &= \tilde{\bSigma}_t \bSigma^{-1} \left( 
        \frac{\sqrt{\alpha_t}}{1-\alpha_t} \vecx_t 
        + \frac{\sqrt{\bar{\alpha}_{t-1}}}{1-\bar{\alpha}_{t-1}} \vecx_0 
    \right)
    \\
    &= \left( 
        \frac{(1-\alpha_t)(1-\bar{\alpha}_{t-1})}{1-\bar{\alpha}_t} \bSigma \bSigma^{-1} 
    \right) \left( 
        \frac{\sqrt{\alpha_t}}{1-\alpha_t} \vecx_t + \frac{\sqrt{\bar{\alpha}_{t-1}}}{1-\bar{\alpha}_{t-1}} \vecx_0 
    \right)
    \\
    &= \frac{\sqrt{\alpha_t}(1-\bar{\alpha}_{t-1})}{1-\bar{\alpha}_t} \vecx_t 
    + \frac{\sqrt{\bar{\alpha}_{t-1}}(1-\alpha_t)}{1-\bar{\alpha}_t} \vecx_0 .
\end{aligned}
\end{equation}
Substituting $\vecx_0 = \frac{1}{\sqrt{\bar{\alpha}_t}}(\vecx_t - \sqrt{1 - \bar{\alpha}_t} \beps)$ into the mean equation allows us to express it in terms of the cumulative noise $\beps$:
\begin{equation}
    \tilde{\bmu}_t(\vecx_t, \beps) 
    = \frac{1}{\sqrt{\alpha_t}} \left( 
        \vecx_t - \frac{1 - \alpha_t}{\sqrt{1 - \bar{\alpha}_t}} \beps 
    \right) .
\end{equation}

Thus, the true posterior is: 
\begin{equation} 
    q(\vecx_{t-1} | \vecx_t, \vecx_0) 
    = \mcN \left( \vecx_{t-1}; \tilde{\bmu}_t(\vecx_t, \beps), \tilde{\bSigma}_t \right) . 
\end{equation}

\subsection{Optimization Objective}
We approximate the true process using a learned model $p_{\theta}(\vecx_{t-1} | \vecx_t)$ that predicts the noise $\beps_{\theta}(\vecx_t, t)$. The distribution is parameterized as:
\begin{equation}
    p_{\theta}(\vecx_{t-1} | \vecx_t) 
    = \mcN \left( \vecx_{t-1}; \tilde{\bmu}_t(\vecx_t, \beps_{\theta}), \tilde{\bSigma}_t \right) .
\end{equation}

The training objective minimizes the KL divergence between the true posterior and the parameterized reverse process, averaged over the sampled timestep and noisy state.
The KL divergence itself integrates over $\vecx_{t-1}$; the outer expectation below is only over $(t,\vecx_0,\vecx_t)$.
Since both conditional distributions are Gaussian with identical covariance $\tilde{\bSigma}_t$, the KL divergence reduces to the Mahalanobis distance between their means:
\begin{equation}
\begin{aligned}
    \mathcal{L}
    &= \expt_{t,\vecx_0,\vecx_t \sim q} \left[
        D_{\text{KL}} \left(
            q(\vecx_{t-1} |\vecx_t, \vecx_0)
            \parallel
            p_{\theta}(\vecx_{t-1}| \vecx_t)
        \right)
    \right]
    \\
    &= \expt_{t,\vecx_0,\vecx_t \sim q} \left[ 
        \frac{1}{2} 
        (\tilde{\bmu}_t(\beps) - \tilde{\bmu}_t(\beps_{\theta}))^{\top} 
        \tilde{\bSigma}_t^{-1} 
        (\tilde{\bmu}_t(\beps) - \tilde{\bmu}_t(\beps_{\theta})) 
    \right] .
\end{aligned}
\end{equation}

Substituting the expressions for $\tilde{\bmu}_t$ difference and $\tilde{\bSigma}_t^{-1}$:
\begin{equation}
    \tilde{\bmu}_t(\beps) - \tilde{\bmu}_t(\beps_{\theta})
    = - \frac{1 - \alpha_t}{\sqrt{\alpha_t} \sqrt{1 - \bar{\alpha}_t}} (\beps - \beps_{\theta})
    , \quad
    \tilde{\bSigma}_t^{-1}
    = \left( \frac{1 - \bar{\alpha}_t}{(1 - \alpha_t)(1 - \bar{\alpha}_{t-1})} \right) \bSigma^{-1} .
\end{equation}

The loss simplifies to:
\begin{equation}
    \mathcal{L}
    = \lambda(t) \cdot (\beps - \beps_{\theta})^{\top} \bSigma^{-1} (\beps - \beps_{\theta}),
\end{equation}
where $\lambda(t)$ is a scalar weighting term dependent on the diffusion schedule.

Ignoring the weighting factors, the core optimization problem minimizes the Mahalanobis norm of the noise prediction error: 
\begin{equation} 
    \mathcal{L}_{\text{simple}} = 
    \expt_{\vecx_0, \beps, t} \left[ 
        \left\Vert \beps - \beps_{\theta}(\vecx_t, t) \right\Vert_{\bSigma^{-1}}^2 
    \right] . 
\end{equation}

This confirms that for any Gaussian noise process with a general covariance matrix $\bSigma$ (whether diagonal or not), the optimal loss function corresponds to the squared error weighted by the precision matrix $\bSigma^{-1}$. This effectively whitens the residuals during training.

\newpage
\section{Frequency Stochastic Differential Equation Framework}

\subsection{Frequency SDEs from Discrete DDPM}
\label{app:derivation_SDE}

This section derives the continuous-time limit of the discrete PaCoDi forward diffusion transition and obtains the corresponding frequency-domain VP-SDE.

\subsubsection{Discrete-Time Evolution with Time Scaling}
Let the diffusion process be indexed by steps $i \in \{0,1,\dots,N\}$ and embed them in $t \in [0,1]$ with $\Delta t = 1/N$ and $t_i=i\Delta t$.

We scale the discrete noise schedule as
\begin{equation}
    \beta_i \coloneqq \beta(t_i) \Delta t
\label{eq:beta_scaling}
\end{equation}

Substituting this scaling into the DDPM transition gives
\begin{equation}
    \mcX_{t+\Delta t}
    = \sqrt{1 - \beta(t)\Delta t} \mcX_t + \sqrt{\beta(t)\Delta t} \mcE
\label{eq:scaled_transition}
\end{equation}
where $\mcX_t \in \C^{L}$ is the frequency state and $\mcE=\bvareps_r+j\bvareps_i$ is the Hermitian complex Gaussian noise with $\bvareps_r \sim \mcN(\mbzero,\bSigma_r)$ and $\bvareps_i \sim \mcN(\mbzero,\bSigma_i)$.

Using the Taylor expansion
\begin{equation}
    \sqrt{1 - \beta(t)\Delta t}
    = 1 - \frac{1}{2} \beta(t) \Delta t + \mcO((\Delta t)^2)
\end{equation}

Eq.~\eqref{eq:scaled_transition} yields the increment form
\begin{equation}
    \mcX_{t+\Delta t}
    \approx \left( 1 - \frac{1}{2} \beta(t) \Delta t \right) \mcX_t + \sqrt{\beta(t)} \sqrt{\Delta t} \mcE
    \quad \Rightarrow \quad
    \mcX_{t+\Delta t} - \mcX_t
    \approx - \frac{1}{2} \beta(t) \mcX_t \Delta t + \sqrt{\beta(t)} \underbrace{ \mcE \sqrt{\Delta t} }_{\Delta \mcW_t} .
\end{equation}

\subsubsection{Continuous-Time Extension and the Spectral Wiener Process}
We use $\mcW$ to denote the Spectral Wiener Process, \ie heteroscedastic Spectral Wiener noise obtained by applying the DFT to a real temporal Wiener process. The increment induced by the scaled DDPM noise is
\begin{equation}
    \Delta \mcW 
    \coloneqq \sqrt{\Delta t}\mcE
    = \sqrt{\Delta t}\bvareps_r + j\sqrt{\Delta t}\bvareps_i
    = \Delta\bw_r+j\Delta\bw_i ,
\end{equation}
where $\bvareps_r \sim \mcN(\mbzero,\bSigma_r)$ and $\bvareps_i \sim \mcN(\mbzero,\bSigma_i)$, with $\bSigma_r$ and $\bSigma_i$ defined in Eqs.~\eqref{eq:even_sigma} and \eqref{eq:odd_sigma}. In the limit $\Delta t\to0$, this gives
\begin{equation}
    \diff \mcW = \diff \bw_r + j \diff \bw_i
\label{eq:diff_of_wiener}
\end{equation}
with heteroscedastic quadrature covariances
\begin{equation}
    \expt [\diff \bw_r \diff \bw_r^{\top}] = \bSigma_r \diff t , \quad
    \expt [\diff \bw_i \diff \bw_i^{\top}] = \bSigma_i \diff t , \quad
    \expt [\diff \bw_r \diff \bw_i^{\top}] = \mbzero .
\end{equation}
Equivalently, this Spectral Wiener noise is induced by the temporal Wiener increment through $\diff\mcW=\mcF(\diff\mbw)$.

\subsubsection{Forward Stochastic Differential Equation}
Taking $\Delta t\to0$ in the increment equation gives the frequency-domain It\^o SDE
\begin{equation}
    \diff \mcX 
    = \underbrace{- \frac{1}{2} \beta(t) \mcX}_{\mbf(\mcX, t)} \diff t + \underbrace{\sqrt{\beta(t)} \mbI}_{\mbg(t)} \diff \mcW
\label{eq:freq_SDE}
\end{equation}
where $\mbf(\mcX, t) = -\frac{1}{2}\beta(t)\mcX$ and $\mbg(t)=\sqrt{\beta(t)}\mbI$ denote the drift and diffusion coefficients of a VP-SDE. 
This derivation connects the discrete PaCoDi DDPM transition with the continuous frequency SDE framework presented in Section~\ref{sec:continuous_sde}.

\subsection{Parallel Quadrature Reverse SDE}
\label{app:reverse_SDE}

In this section, we derive the reverse-time stochastic dynamics and prove that, under conditional quadrature factorization, the complex reverse SDE naturally decouples into two parallel real-valued processes.

\subsubsection{General Conditional Reverse-Time SDE}
The covariance term in the reverse drift is inherited from the forward Spectral Wiener noise. Since $\diff \mcW=\diff \bw_r+j\diff \bw_i$ has quadrature covariances $\bSigma_r\diff t$ and $\bSigma_i\diff t$ with zero cross-covariance, we denote its covariance operator by $\bSigma_{\mcX}$. Equivalently, for any $u,v\in\R^L$,
\begin{equation}
    \bSigma_{\mcX}(u+jv) = \bSigma_r u + j \bSigma_i v .
\end{equation}

Thus the local covariance of the diffusion term $\mbg(t)\diff\mcW$ is $\mbg(t)\bSigma_{\mcX}\mbg(t)^\top \diff t$. Instantiating the reverse-time diffusion theory for the conditional transition density gives
\begin{equation}
    \diff \mcX
    = \left[
        \mbf(\mcX, t) - \mbg(t) \bSigma_{\mcX} \mbg(t)^{\top} \nabla_{\bar{\mcX}} \log p_{t|0}(\mcX)
    \right] \diff t + \mbg(t) \diff \hat{\mcW}
\label{eq:reverse_SDE}
\end{equation}
where $\diff \hat{\mcW}$ denotes the reverse-time Spectral Wiener increment induced by the same DFT mapping, and $\nabla_{\bar{\mcX}} \log p_{t|0}(\mcX)$ represents the conditional \textit{Stein score function} defined over the complex manifold.

\subsubsection{Score Decomposition via Wirtinger Calculus}
We focus on the conditional reverse process given a fixed initial state $\mcX_0$. As proven in \textbf{Lemma~\ref{lem:cond_factor_traj}} (Appendix~\ref{app:cond_reverse_factor}), the conditional density factorizes due to quadrature independence:
\begin{equation}
    p_{t|0}(\mcX) \coloneqq p(\mcX_t | \mcX_0) 
    = p_{t|0}(\mcR) p_{t|0}(\mcI) .
\end{equation}

To compute the gradient of this factorized real-valued probability density with respect to the complex variable $\mcX_t$, we employ \textbf{Wirtinger Calculus}. The conjugate gradient operator is defined as $\nabla_{\bar{\mcX}} \coloneqq \frac{1}{2} (\nabla_{\mcR} + j \nabla_{\mcI})$. Applying this to the log-density:
\begin{equation}
\begin{aligned}
    \nabla_{\bar{\mcX}} \log p_{t|0}(\mcX)
    &= \frac{1}{2} \left(
        \frac{\partial}{\partial \mcR} + j \frac{\partial}{\partial \mcI}
    \right) \left[ \log p_{t|0}(\mcR) + \log p_{t|0}(\mcI) \right]
    \\
    &= \frac{1}{2} \left(
        \underbrace{\frac{\partial \log p_{t|0}(\mcR)}{\partial \mcR}}_{\nabla_{\mcR} \log p_{t|0}(\mcR)}
        + \underbrace{\frac{\partial \log p_{t|0}(\mcI)}{\partial \mcR}}_{0}
        + j \underbrace{\frac{\partial \log p_{t|0}(\mcR)}{\partial \mcI}}_{0}
        + j \underbrace{\frac{\partial \log p_{t|0}(\mcI)}{\partial \mcI}}_{\nabla_{\mcI} \log p_{t|0}(\mcI)}
    \right)
    \\
    &= \frac{1}{2} \left(
        \nabla_{\mcR} \log p_{t|0}(\mcR)
        + j \nabla_{\mcI} \log p_{t|0}(\mcI)
    \right).
\end{aligned}
\label{eq:nabla_separation}
\end{equation}

This derivation explicitly shows that the cross-terms vanish, and the factor $\frac{1}{2}$ emerges naturally from the complex derivative definition.

\subsubsection{Decoupling of the SDE}
Since the noise schedule $\beta(t)$ is real-valued, the linear drift coefficient $\mbf(\mcX, t) = -\frac{1}{2}\beta(t)\mcX_t$ satisfies the additive property:
\begin{equation}
    \mbf(\mcX, t) = \mbf(\mcR, t) + j \mbf(\mcI, t).
\label{eq:drift_coeff_separation}
\end{equation}

For any decomposed score $s_r+js_i$, the covariance operator acts component-wise as
\begin{equation}
    \bSigma_{\mcX}(s_r+js_i)=\bSigma_r s_r+j\bSigma_i s_i .
\end{equation}
Substituting the decomposed score (Eq.~\eqref{eq:nabla_separation}), the drift (Eq.~\eqref{eq:drift_coeff_separation}), this component-wise covariance action, and the reverse-time Spectral Wiener increment $\diff \hat{\mcW} = \diff \hat{\bw}_r + j \diff \hat{\bw}_i$ into the general reverse SDE (Eq.~\eqref{eq:reverse_SDE}), we obtain:
\begin{equation}
    \diff (\mcR + j \mcI)
    = \left[
        ( \mbf(\mcR, t) + j \mbf(\mcI, t) )
        - \mbg(t) \bSigma_{\mcX} \mbg(t)^{\top} \frac{1}{2} \left( \nabla_{\mcR} \log p_{t|0}(\mcR) + j \nabla_{\mcI} \log p_{t|0}(\mcI) \right)
    \right] \diff t + \mbg(t) (\diff \hat{\bw}_r + j \diff \hat{\bw}_i).
\end{equation}

By grouping real and imaginary terms, the complex SDE separates into two autonomous real-valued systems:
\begin{equation}
\begin{dcases}
    \diff \mcR
    = \left[
        \mbf(\mcR, t) - \frac{1}{2} \mbg(t) \bSigma_r \mbg(t)^{\top} \nabla_{\mcR} \log p_{t|0}(\mcR)
    \right] \diff t + \mbg(t) \diff \hat{\bw}_r
    \\
    \diff \mcI
    = \left[
        \mbf(\mcI, t) - \frac{1}{2} \mbg(t) \bSigma_i \mbg(t)^{\top} \nabla_{\mcI} \log p_{t|0}(\mcI)
    \right] \diff t + \mbg(t) \diff \hat{\bw}_i
\end{dcases}
\end{equation}
where $\diff \hat{\bw}_r$ and $\diff \hat{\bw}_i$ are reverse-time quadrature Brownian increments with heteroscedastic spectral covariances.

\paragraph{Conclusion.}
This result proves \textbf{Theorem~\ref{thm:reverse_SDE}}: under the conditional score factorization, the reverse dynamics separate into two parallel real-valued SDEs. The practical marginal reverse process is then handled by the MFT approximation and interactive correction branch discussed in Section~\ref{sec:mft_parallel}.

\subsection{Exact Heteroscedastic Score-Noise Identity}
\label{app:der_heter_score_noise_id}

In this section, we establish the exact duality between the complex spectral score function and the cumulative noise under a fixed initial condition $\mcX_0$. This derivation formally links the score matching objective (SDE) with the noise prediction objective (DDPM) in the frequency domain.

\subsubsection{Exact Factorization of Conditional Quadrature Components}

According to the quadrature independence of the forward transition (Theorem~\ref{thm:freq_ortho}), the complex state at diffusion step $t$ can be expressed via the reparameterization trick as:
\begin{equation}
    \mcR_t 
    = \sqrt{\bar{\alpha}_t} \mcR_0 + \sqrt{1 - \bar{\alpha}_t} \bvareps_r , \quad
    \mcI_t 
    = \sqrt{\bar{\alpha}_t} \mcI_0 + \sqrt{1 - \bar{\alpha}_t} \bvareps_i
\end{equation}
where $\bvareps_r$ and $\bvareps_i$ represent the \textit{cumulative noise} injected from $t=0$ to $t$. They follow the spectral Gaussian distributions derived in Appendix~\ref{app:spec_dist_gauss}:
\begin{equation}
    \bvareps_r \sim \mcN(\mbzero, \bSigma_r) , \quad
    \bvareps_i \sim \mcN(\mbzero, \bSigma_i) .
\end{equation}

Consequently, the conditional distribution of the real component follows a multivariate Gaussian:
\begin{equation}
    \mcR_t | \mcR_0 \sim \mcN(\bmu^{\mcR}_t, \bSigma^{\mcR}_t) , \quad \text{where} \quad
    \bmu^{\mcR}_t = \sqrt{\bar{\alpha}_t} \mcR_0 , \quad
    \bSigma^{\mcR}_t = (1 - \bar{\alpha}_t) \bSigma_r .
\end{equation}

The corresponding conditional density function is: 
\begin{equation}
    p_{t|0}(\mcR)
    = \frac{1}{\sqrt{(2\pi)^L \det(\bSigma^{\mcR}_t)}}
    \exp \left(
        - \frac{1}{2} 
        ( \mcR_t - \bmu^{\mcR}_t )^{\top}
        ( \bSigma^{\mcR}_t )^{-1}
        ( \mcR_t - \bmu^{\mcR}_t )
    \right)
\end{equation}

\subsubsection{Derivation of Quadrature Score Functions}
Taking the logarithm of the conditional density for the real component:
\begin{equation}
\begin{aligned}
    \log p_{t|0}(\mcR) 
    &= - \frac{1}{2} 
        ( \mcR_t - \bmu^{\mcR}_t )^{\top}
        ( \bSigma^{\mcR}_t )^{-1}
        ( \mcR_t - \bmu^{\mcR}_t )
    + \text{Const}
    \\
    &= - \frac{1}{2(1 - \bar{\alpha}_t)}
        ( \mcR_t - \sqrt{\bar{\alpha}_t} \mcR_0 )^{\top}
        \bSigma_r^{-1}
        ( \mcR_t - \sqrt{\bar{\alpha}_t} \mcR_0 )
    + \text{Const} .
\end{aligned}
\end{equation}

To find the score function $\nabla_{\mcR} \log p_{t|0}(\mcR)$, we utilize the matrix calculus identity $\nabla_{\vecx} (\vecx - \mathbf{a})^{\top} \mathbf{A} (\vecx - \mathbf{a}) = (\mathbf{A} + \mathbf{A}^{\top}) (\vecx - \mathbf{a})$. Noting that the spectral covariance matrix $\bSigma_r$ (and thus its inverse) is symmetric:
\begin{equation}
\begin{aligned}
    \nabla_{\mcR} \log p_{t|0}(\mcR)
    &= - \frac{1}{2(1 - \bar{\alpha}_t)}
    \left[ \bSigma_r^{-1} + (\bSigma_r^{-1})^{\top} \right]
    ( \mcR_t - \sqrt{\bar{\alpha}_t} \mcR_0 )
    \\
    &= - \frac{1}{1 - \bar{\alpha}_t} \bSigma_r^{-1}
    \underbrace{( \mcR_t - \sqrt{\bar{\alpha}_t} \mcR_0 )}_{\sqrt{1 - \bar{\alpha}_t} \bvareps_r}
    \\
    &= - \frac{\bSigma_r^{-1} \bvareps_r}{\sqrt{1 - \bar{\alpha}_t}} .
\end{aligned}
\end{equation}

By symmetry, the score function for the imaginary component is derived analogously: \begin{equation} 
    \nabla_{\mcI} \log p_{t|0}(\mcI) 
    = - \frac{\bSigma_i^{-1} \bvareps_i}{\sqrt{1 - \bar{\alpha}_t}} . 
\end{equation}

\subsubsection{Complex Score Merger via Wirtinger Calculus}
We now unify these real-valued results into the complex domain. The \textbf{Wirtinger derivative} (conjugate gradient) is defined as:
$\nabla_{\bar{\mcX}} \coloneqq \frac{1}{2} \left( \nabla_{\mcR} + j \nabla_{\mcI} \right)$.
Since the joint conditional density factorizes as $p_{t|0}(\mcX) = p_{t|0}(\mcR) p_{t|0}(\mcI)$, the log-density exhibits an exact additive structure: 
\begin{equation}
    \log p_{t|0}(\mcX) = \log p_{t|0}(\mcR) + \log p_{t|0}(\mcI) .    
\end{equation}

Substituting the quadrature scores into the Wirtinger operator: 
\begin{equation} 
\begin{aligned} 
    \nabla_{\bar{\mcX}} \log p_{t|0}(\mcX) 
    &= \frac{1}{2} \left( \nabla_{\mcR} \log p_{t|0}(\mcX) + j \nabla_{\mcI} \log p_{t|0}(\mcX) \right) 
    \\ 
    &= \frac{1}{2} \left( \nabla_{\mcR} \log p_{t|0}(\mcR) + j \nabla_{\mcI} \log p_{t|0}(\mcI) \right) 
    \\ 
    &= \frac{1}{2} \left( - \frac{\bSigma_r^{-1} \bvareps_r}{\sqrt{1 - \bar{\alpha}_t}} - j \frac{\bSigma_i^{-1} \bvareps_i}{\sqrt{1 - \bar{\alpha}_t}} \right) 
    \\ 
    &= - \frac{ \bSigma_r^{-1} \bvareps_r + j \bSigma_i^{-1} \bvareps_i }{2 \sqrt{1 - \bar{\alpha}_t}}. 
\end{aligned} 
\end{equation}

\paragraph{Conclusion.}
This derivation confirms the Heteroscedastic Score-Noise Identity. It reveals that the complex score function is not merely the noise scaled by a scalar (as in standard homoscedastic DDPMs), but is the precision-weighted noise vector:
\begin{equation}
    \nabla_{\bar{\mcX}} \log p_{t|0}(\mcX) 
    \propto - \left( \bSigma_r^{-1} \bvareps_r + j \bSigma_i^{-1} \bvareps_i \right).
\end{equation}
This explicitly justifies the use of the Mahalanobis distance (weighted by $\bSigma$) in our loss function to counterbalance the spectral heteroscedasticity $\bSigma^{-1}$, ensuring efficient and unbiased optimization.

\subsection{Marginal Score Approximation via Mean Field Theory}
\label{app:mft_marginal}

While the conditional score identity derived in Appendix~\ref{app:der_heter_score_noise_id} is exact for a fixed $\mcX_0$, the generative process requires the \textit{marginal score} $\nabla_{\bar{\mcX}} \log p_t(\mcX)$, which marginalizes over the unknown data distribution $p(\mcX_0)$. This section formalizes the mean-field approximation used by our parallel architecture.

\subsubsection{The Entanglement of Marginal Dynamics}
We start by expressing the marginal score as an expectation over the posterior. Using the identity $\nabla_{\bar{\mcX}} \log p_t(\mcX) = \frac{\nabla_{\bar{\mcX}} p_t(\mcX)}{p_t(\mcX)}$ and the integral definition $p_t(\mcX) = \int p_{t|0}(\mcX) p(\mcX_0) \diff \mcX_0$, we derive:
\begin{equation} 
\begin{aligned} 
    \nabla_{\bar{\mcX}} \log p_t(\mcX) 
    &= \frac{1}{p_t(\mcX)} \int \nabla_{\bar{\mcX}} p_{t|0}(\mcX) p(\mcX_0) \diff \mcX_0 
    \\ 
    &= \frac{1}{p_t(\mcX)} \int \left[ 
        p_{t|0}(\mcX) \nabla_{\bar{\mcX}} \log p_{t|0}(\mcX) 
    \right] p(\mcX_0) \diff \mcX_0 
    \\ 
    &= \int \left[ \nabla_{\bar{\mcX}} \log p_{t|0}(\mcX) \right] 
        \underbrace{ \frac{p_{t|0}(\mcX) p(\mcX_0)}{p_t(\mcX)} }_{p(\mcX_0 | \mcX_t) \text{ (Posterior)}} \diff \mcX_0 \\ 
    &= \expt_{p(\mcX_0 | \mcX_t)} \left[ \nabla_{\bar{\mcX}} \log p_{t|0}(\mcX) \right] 
\end{aligned} 
\end{equation}

Substituting the conditional score decomposition from Eq.~\eqref{eq:nabla_separation} (Appendix~\ref{app:reverse_SDE}): 
\begin{equation} 
    \nabla_{\bar{\mcX}} \log p_t(\mcX) 
    = \frac{1}{2} \int \left[ 
        \nabla_{\mcR} \log p_{t|0}(\mcR) + j \nabla_{\mcI} \log p_{t|0}(\mcI) 
    \right] p(\mcX_0 | \mcX_t) \diff \mcX_0 
\label{eq:entangled_marginal_score} 
\end{equation}

\textbf{The Coupling Problem:} In real-world time series, the real and imaginary components of the data prior $p(\mcX_0)$ are strongly coupled (\eg phase coherence). Consequently, the posterior $p(\mcX_0 | \mcX_t)$ does not factorize, meaning the expectation in Eq.~\eqref{eq:entangled_marginal_score} entangles the real and imaginary gradients. The marginal score for the real part depends on the imaginary part, and vice versa.

\subsubsection{MFT Variational Proxy}
To bypass this representational rigidity and enable efficient parallel modeling, we invoke Mean Field Theory (MFT). We approximate the complex joint distribution using a factorized variational proxy. Specifically, within this proxy we replace the coupled data prior by the factorized form $p(\mcX_0) \approx p(\mcR_0) p(\mcI_0)$.

By the exact conditional factorization $p(\mcX_t | \mcX_0) = p(\mcR_t | \mcR_0) p(\mcI_t | \mcI_0)$ (Lemma~\ref{lem:cond_factor_traj}), the joint distribution approximately factorizes:
\begin{equation}
    p(\mcX_t, \mcX_0) 
    \approx \left[ p(\mcR_t | \mcR_0) p(\mcR_0) \right] \cdot \left[ p(\mcI_t | \mcI_0) p(\mcI_0) \right].
\end{equation}

Under this proxy, the marginal density and the posterior also approximately factorize: 
\begin{gather} 
    p_t(\mcX) 
    = \int p_{t|0}(\mcX) p(\mcX_0) \diff \mcX_0
    \approx \int p_{t|0}(\mcR) p(\mcR_0) \diff \mcR_0 \cdot \int p_{t|0}(\mcI) p(\mcI_0) \diff \mcI_0
    = p_t(\mcR) p_t(\mcI)
    \\
    p_{0|t}(\mcX) 
    = \frac{p_{t|0}(\mcX) p(\mcX_0)}{p_t(\mcX)}
    \approx \frac{p_{t|0}(\mcR) p(\mcR_0)}{p_t(\mcR)} \cdot \frac{p_{t|0}(\mcI) p(\mcI_0)}{p_t(\mcI)}
    = p_{0|t}(\mcR) p_{0|t}(\mcI). 
\end{gather}

Applying this factorized posterior to the score equation:
\begin{equation}
\begin{aligned}
    \nabla_{\bar{\mcX}} \log p_t(\mcX)
    &\approx \frac{1}{2} \iint \left[
    \nabla_{\mcR} \log p_{t|0}(\mcR) + j \nabla_{\mcI} \log p_{t|0}(\mcI)
    \right]
    p_{0|t}(\mcR) p_{0|t}(\mcI) \diff \mcR_0 \diff \mcI_0
    \\
    &= \frac{1}{2} \left(
        \underbrace{ \int [\nabla_{\mcR} \log p_{t|0}(\mcR)] p_{0|t}(\mcR) \diff \mcR_0 }_{\expt[\text{Real Score}]}
        \cdot \underbrace{ \int p_{0|t}(\mcI) \diff \mcI_0 }_{1}
    \right)
    + \frac{j}{2} \left(
        \underbrace{ \int [\nabla_{\mcI} \log p_{t|0}(\mcI)] p_{0|t}(\mcI) \diff \mcI_0 }_{\expt[\text{Imag Score}]}
        \cdot \underbrace{ \int p_{0|t}(\mcR) \diff \mcR_0 }_{1}
    \right)
    \\
    &= \frac{1}{2} \left( \nabla_{\mcR} \log p_t(\mcR) + j \nabla_{\mcI} \log p_t(\mcI) \right)
\end{aligned}
\end{equation}
where we utilized the identity $\int [\nabla_{\mathbf{x}} \log p(\mathbf{x}|\mathbf{z})] p(\mathbf{z}|\mathbf{x}) d\mathbf{z} = \nabla_{\mathbf{x}} \log p(\mathbf{x})$.

\subsubsection{Decoupling via Vanishing Cross-Gradients}
Equivalently, under the factorized marginal proxy $p_t(\mcX) \approx p_t(\mcR)p_t(\mcI)$, $p_t(\mcI)$ has no functional dependence on $\mcR$ and vice versa. The cross-gradients vanish, yielding the same decoupled marginal score:
\begin{equation}
    \nabla_{\bar{\mcX}} \log p_t(\mcX)
    \approx \frac{1}{2} \left(
        \nabla_{\mcR} \log p_t(\mcR) + j \nabla_{\mcI} \log p_t(\mcI)
    \right)
\end{equation}

This derivation justifies the use of independent loss functions for the real and imaginary branches in our parallel architecture. While this factorized objective enforces computational efficiency, the interactive correction branch introduced in PaCoDi compensates for the information lost during this MFT approximation.

\subsection{Loss Equivalence Across Diffusion Formulations}
\label{app:equiv_loss}

In this section, we provide a formal proof demonstrating that the continuous-time score-matching objective $\mcL^{\text{cont}}$ is equivalent to the heteroscedastic noise-prediction objective $\mcL^{\text{disc}}$ used in the discrete-time formulation of PaCoDi.

\subsubsection{Theoretical Expansion}
We start from the continuous score-matching loss for the real component, where the score residual is weighted by the spectral covariance $\bSigma_r$:
\begin{equation}
    \mcL_{\text{real}}^{\text{cont}}(\theta_r)
    = \expt_{\mcX_0, \mcE, t} \left[
        \lambda(t) \left \Vert 
            s_{\theta_r}(\mcR, h(\mcI), t) - \nabla_{\mcR} \log p_{t|0}(\mcR) 
        \right \Vert_{\bSigma_r}^2
    \right] .
\end{equation}

Expanding the spectral-covariance-weighted norm $\Vert \mathbf{v} \Vert_{\bSigma}^2 = \mathbf{v}^\top \bSigma \mathbf{v}$ into its quadratic form:
\begin{equation}
    \mcL_{\text{real}}^{\text{cont}}(\theta_r) 
    = \expt_{\mcX_0, \mcE, t} \left[
        \lambda(t)
        \left( s_{\theta_r} - \nabla_{\mcR} \log p_{t|0} \right)^{\top}
        \bSigma_r
        \left( s_{\theta_r} - \nabla_{\mcR} \log p_{t|0} \right)
    \right],
\end{equation}
where we omit arguments for brevity.

\subsubsection{Substitution of the Score-Noise Identity}
Using the Heteroscedastic Score-Noise Identity derived in Appendix~\ref{app:der_heter_score_noise_id}, the target score and the corresponding noise-parameterized score estimator are
\begin{equation} 
    s_{\theta_r} 
    = - \frac{\bSigma_r^{-1} \bvareps_{\theta_r}}{\sqrt{1 - \bar{\alpha}_t}} , \quad 
    \nabla_{\mcR} \log p_{t|0} 
    = - \frac{\bSigma_r^{-1} \bvareps_r}{\sqrt{1 - \bar{\alpha}_t}} . 
\end{equation}

Plugging these into the loss function: 
\begin{equation} 
\begin{aligned} 
    \mcL_{\text{real}}^{\text{cont}}(\theta_r) 
    &= \expt_{\mcX_0, \mcE, t} \left[ 
        \lambda(t) 
        \left( 
            - \frac{ \bSigma_r^{-1} (\bvareps_{\theta_r} - \bvareps_r) }{\sqrt{1 - \bar{\alpha}_t}} 
        \right)^{\top} 
        \bSigma_r 
        \left( 
            - \frac{ \bSigma_r^{-1} (\bvareps_{\theta_r} - \bvareps_r) }{\sqrt{1 - \bar{\alpha}_t}}
        \right) 
    \right] 
    \\ 
    &= \expt_{\mcX_0, \mcE, t} \left[ 
        \frac{\lambda(t)}{1 - \bar{\alpha}_t} 
        (\bvareps_{\theta_r} - \bvareps_r)^{\top} 
        \underbrace{ (\bSigma_r^{-1})^{\top} \bSigma_r \bSigma_r^{-1} }_{\text{Matrix Simplification}} 
        (\bvareps_{\theta_r} - \bvareps_r) 
    \right] . 
\end{aligned} 
\end{equation}

\subsubsection{Matrix and Coefficient Simplification}
Since the spectral covariance matrix $\bSigma_r$ is symmetric and positive definite, $(\bSigma_r^{-1})^{\top} = \bSigma_r^{-1}$. The central matrix term simplifies algebraically:
\begin{equation}
    (\bSigma_r^{-1})^{\top} \bSigma_r \bSigma_r^{-1}
    = \bSigma_r^{-1} (\bSigma_r \bSigma_r^{-1})
    = \bSigma_r^{-1} \mbI
    = \bSigma_r^{-1}.
\end{equation}

Furthermore, the weights used in the main text satisfy
\begin{equation}
    \lambda(t)=\frac{1-\alpha_t}{2\alpha_t}, \quad
    \lambda_t=\frac{1-\alpha_t}{2\alpha_t(1-\bar{\alpha}_t)}, \quad
    \frac{\lambda(t)}{1 - \bar{\alpha}_t} = \lambda_t .
\end{equation}
% In standard DDPM, $\lambda_t = 1$, which implies $\lambda(t) = 1 - \bar{\alpha}_t$ is the implicit weighting for score matching.

Substituting the simplified matrix and coefficient back, we arrive at: 
\begin{equation} 
\begin{aligned} 
    \mcL_{\text{real}}^{\text{cont}}(\theta_r) 
    &= \expt_{\mcX_0, \mcE, t} \left[ 
        \lambda_t 
        (\bvareps_{\theta_r} - \bvareps_r)^{\top} 
        \bSigma_r^{-1} 
        (\bvareps_{\theta_r} - \bvareps_r) 
    \right] 
    \\ 
    &= \expt_{\mcX_0, \mcE, t} \left[ 
        \lambda_t \left \Vert 
            \bvareps_{\theta_r}(\mcR, h(\mcI), t) - \bvareps_r 
        \right \Vert_{\bSigma_r^{-1}}^2 
    \right] 
    \\ 
    &= \mcL_{\text{real}}^{\text{disc}}(\theta_r) . 
\end{aligned} 
\end{equation}

The derivation for the imaginary component $\mcL_{\text{imag}}^{\text{cont}}(\theta_i)$ follows an identical path using $\bSigma_i$. This proves that minimizing the continuous-time score matching objective with spectral-covariance-weighted score residuals is mathematically equivalent to minimizing the discrete-time noise prediction error weighted by the spectral precision matrix.

\subsubsection{Temporal-Frequency Loss Equivalence}
The frequency-domain loss is also equivalent to the temporal-domain diffusion loss. Let $\delta\beps=\beps_{\theta}-\beps$ denote the temporal noise-prediction residual and let $\delta\mcE=\mbU\delta\beps$ be its DFT-domain residual. Since the normalized DFT matrix is unitary, $\mbU^{\herm}\mbU=\mbI$, Parseval's identity gives
\begin{equation}
    \Vert \delta\beps \Vert_2^2
    = \Vert \mbU\delta\beps \Vert_2^2
    = \Vert \delta\mcE \Vert_2^2 .
\end{equation}

On the independent Hermitian coordinate chart, the same spectral residual decomposes as $\delta\mcE=\delta\bvareps_r+j\delta\bvareps_i$ with augmented covariance $\operatorname{blkdiag}(\bSigma_r,\bSigma_i)$. The induced spectral precision recovers the temporal Euclidean quadratic form:
\begin{equation}
    \Vert \delta\beps \Vert_2^2
    =
    \delta\bvareps_r^{\top}\bSigma_r^{-1}\delta\bvareps_r
    +
    \delta\bvareps_i^{\top}\bSigma_i^{-1}\delta\bvareps_i .
\end{equation}
Thus, the temporal diffusion loss and the frequency diffusion loss are equivalent under the unitary DFT, while the spectral precision form makes the heteroscedastic quadrature geometry explicit.

\newpage
\section{Mathematical Lemmas}
\label{app:math_lemmas}

This appendix collects the elementary probabilistic and Fourier identities used in Appendix~\ref{app:spec_dist_gauss}.
We state them separately to keep the covariance derivations focused on the spectral structure.

\begin{tcolorbox}[tcbset]
\begin{lemma}
\label{lem:exp_prod_Gauss_noise}
    \textbf{(Product Moment of Independent Gaussian Noise).}
    Let $\{\beps_{\tau}\}_{\tau \in \mathbb{Z}}$ be independent Gaussian random variables with $\beps_{\tau} \sim \mcN(\mu,\sigma^2)$.
    For any two indices $n$ and $m$,
    \begin{equation}
        \expt[\beps_n \beps_m]
        = \mu^2+\sigma^2\delta_{n,m},
    \end{equation}
    where $\delta_{n,m}$ is the Kronecker delta.
    In particular, for the centered noise used in this paper, $\mu=0$ and $\expt[\beps_n\beps_m]=\sigma^2\delta_{n,m}$.
\end{lemma}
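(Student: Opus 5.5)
The plan is a two-case argument on whether the indices coincide, using only independence and the definition of variance. We treat $\expt[\beps_n \beps_m]$ as the second moment of the pair $(\beps_n,\beps_m)$ and split according to whether $n = m$ or $n \neq m$. The Kronecker delta then just records which case applies. Gaussianity plays no real role: the identity holds for any independent family with common mean $\mu$ and variance $\sigma^2$. The only structural inputs are finite second moments and pairwise independence, both guaranteed by the hypothesis.

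\emph{Diagonal case} $n = m$. Here $\expt[\beps_n\beps_m] = \expt[\beps_n^2]$. We expand the variance, $\sigma^2 = \operatorname{Var}(\beps_n) = \expt[\beps_n^2] - (\expt[\beps_n])^2$. Since $\expt[\beps_n] = \mu$, this rearranges to $\expt[\beps_n^2] = \mu^2 + \sigma^2$, which matches the claim because $\delta_{n,n}=1$.

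\emph{Off-diagonal case} $n \neq m$. Independence of $\beps_n$ and $\beps_m$, together with integrability (Gaussians have all moments), lets the expectation of the product factor: $\expt[\beps_n\beps_m] = \expt[\beps_n]\,\expt[\beps_m] = \mu^2$. This matches $\mu^2 + \sigma^2\delta_{n,m}$ since $\delta_{n,m}=0$.

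Combining the two cases gives the unified formula. Setting $\mu = 0$ yields the specialization $\expt[\beps_n\beps_m] = \sigma^2\delta_{n,m}$ used in Appendix~\ref{app:spec_dist_gauss}. There is no substantive obstacle. The only point to state carefully is that factorization of expectations under independence needs integrable factors, which finite Gaussian moments supply.
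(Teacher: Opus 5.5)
Your proposal is correct and follows the paper's proof essentially verbatim: split on $n=m$, using $\expt[\beps_n^2]=\operatorname{Var}(\beps_n)+(\expt[\beps_n])^2$, versus $n\neq m$, where independence factors the expectation into $\mu^2$, then combine the two cases with the Kronecker delta. Your side remark is also accurate: Gaussianity is not needed, and independence with a common finite mean and variance suffices.
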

\end{tcolorbox}

\begin{proof}
If $n=m$, then
\begin{equation}
    \expt[\beps_n^2]
    = \operatorname{Var}(\beps_n)+(\expt[\beps_n])^2
    = \sigma^2+\mu^2 .
\end{equation}
If $n\neq m$, independence gives
\begin{equation}
    \expt[\beps_n\beps_m]
    = \expt[\beps_n]\expt[\beps_m]
    = \mu^2 .
\end{equation}
Combining the two cases yields $\expt[\beps_n \beps_m]=\mu^2+\sigma^2\delta_{n,m}$.
\end{proof}

\begin{tcolorbox}[tcbset]
\begin{lemma}
\label{lem:trig_sum}
    \textbf{(Discrete Orthogonality of Fourier Modes).}
    Let $L \in \mathbb{Z}_{>0}$ and $A \in \mathbb{Z}$.
    Define the modular indicator
    \begin{equation}
        \delta_L(A)=
        \begin{cases}
            1, & A \equiv 0 \pmod L,\\
            0, & A \not\equiv 0 \pmod L .
        \end{cases}
    \end{equation}
    Then the following full-period sums hold:
    \begin{equation}
        \sum_{n=0}^{L-1} \cos\left(\frac{2\pi A n}{L}\right)
        = L \delta_L(A) , \quad
        \sum_{n=0}^{L-1} \sin\left(\frac{2\pi A n}{L}\right)
        =0 .
    \end{equation}
\end{lemma}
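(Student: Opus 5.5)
I would prove Lemma~\ref{lem:trig_sum} by packaging both sums into a single complex geometric series via Euler's formula. Set $\omega = \exp(j 2\pi A / L)$, which is an $L$-th root of unity because $\omega^L = \exp(j2\pi A) = 1$ for integer $A$. Then
\begin{equation*}
    S \coloneqq \sum_{n=0}^{L-1} \omega^n
    = \sum_{n=0}^{L-1} \cos\left(\frac{2\pi A n}{L}\right)
    + j \sum_{n=0}^{L-1} \sin\left(\frac{2\pi A n}{L}\right).
\end{equation*}
Both cosine and sine are real, so the cosine sum is $\Re(S)$ and the sine sum is $\Im(S)$. It therefore suffices to show $S = L\,\delta_L(A)$; this value is real, so the sine sum is $0$ in every case.

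I would then split into the two cases of the modular indicator. If $A \equiv 0 \pmod L$, write $A = mL$ with $m \in \mathbb{Z}$. Then $\omega = \exp(j2\pi m) = 1$, every term equals $1$, and $S = L$. If $A \not\equiv 0 \pmod L$, then $A/L \notin \mathbb{Z}$, so $\omega \neq 1$. The finite geometric sum formula applies and gives $S = (1-\omega^L)/(1-\omega) = 0$ because $\omega^L = 1$. Combining the cases yields $S = L\,\delta_L(A)$. Taking real and imaginary parts gives both identities. The sine identity holds for every integer $A$, including the degenerate case, where each term is $\sin(2\pi m n) = 0$.

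There is no real obstacle here. The only point needing care is justifying $\omega \neq 1$ exactly when $L \nmid A$, which follows because $\exp(j\theta) = 1$ iff $\theta \in 2\pi\mathbb{Z}$. I would also note explicitly that the argument covers negative $A$ and $A$ with $|A| \ge L$, since the paper instantiates $A$ as $k-l$ and $k+l$ with $k,l \in [0, L-1]$.
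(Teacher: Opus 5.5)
Your proposal is correct and follows essentially the same route as the paper's proof. Both form the complex exponential sum $\sum_{n=0}^{L-1}\exp(j2\pi A n/L)$ and evaluate it as $L$ when $A \equiv 0 \pmod L$ and as $0$ otherwise via the finite geometric series with $\omega \neq 1$, $\omega^L = 1$; the cosine and sine identities then follow by taking real and imaginary parts.
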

\end{tcolorbox}

In Appendix~\ref{app:spec_dist_gauss}, $A$ is instantiated as $k-l$ or $k+l$.

\begin{proof}
Consider the complex exponential sum
\begin{equation}
    S(A)=\sum_{n=0}^{L-1}\exp\left(j\frac{2\pi A n}{L}\right).
\end{equation}
If $A\equiv 0 \pmod L$, each term equals one, and hence $S(A)=L$.
If $A\not\equiv 0 \pmod L$, let $\omega=\exp(j2\pi A/L)$.
Then $\omega \neq 1$ and $\omega^L=\exp(j2\pi A)=1$, so the finite geometric series gives
\begin{equation}
    S(A)=\frac{1-\omega^L}{1-\omega}=0.
\end{equation}
Therefore,
\begin{equation}
    S(A)=L\delta_L(A).
\end{equation}
Taking the real and imaginary parts of $S(A)$ proves the cosine and sine identities.
\end{proof}

\clearpage
\onecolumn
\section{Experimental Results}
\label{app:results}

\begin{table}[H]
    \centering
    \begin{minipage}[t]{0.48\textwidth}
    \vspace{0pt}
    \centering
    \caption{
        \textbf{Unconditional Generation on ETTh1 Dataset}.
        \bst{Bold} and \subbst{underline} denote best and second best results.
    }
    \label{tab:uncond_etth1}
    \renewcommand{\arraystretch}{0.8}
    \setlength{\tabcolsep}{2.4pt}
    \begin{tabular}{c|c|ccccccc}
        \toprule
        \multicolumn{2}{c|}{\scaleb{Models}} & 
        \textbf{\scaleb{Cont.}} & \textbf{\scaleb{Disc.}} & \scaleb{FreqDiff} & \scaleb{Diff-TS} & 
        \scaleb{TimeVAE} & \scaleb{TimeGAN} & \scaleb{DDPM}    \\
        \midrule

        \multirow{5}{*}{\rotatebox{90}{\scalebox{0.9}{Context-FID}}}
        & \scalea{24} & 
        \bst{\scalea{$0.015_{\pm .001}$}} & \subbst{\scalea{$0.018_{\pm .002}$}} & \scalea{$0.032_{\pm .000}$} & \scalea{$0.140_{\pm .009}$} & 
        \scalea{$0.786_{\pm .075}$} & \scalea{$7.561_{\pm 1.163}$} & \scalea{$0.311_{\pm .321}$} \\
        & \scalea{64} & 
        \bst{\scalea{$0.053_{\pm .004}$}} & \subbst{\scalea{$0.077_{\pm .005}$}} & \scalea{$0.087_{\pm .000}$} & \scalea{$0.273_{\pm .033}$} & 
        \scalea{$0.856_{\pm .227}$} & \scalea{$9.266_{\pm .860}$} & \scalea{$0.438_{\pm .123}$} \\
        & \scalea{128} & 
        \bst{\scalea{$0.149_{\pm .006}$}} & \scalea{$0.175_{\pm .009}$} & \subbst{\scalea{$0.155_{\pm .000}$}} & \scalea{$1.144_{\pm .080}$} & 
        \scalea{$0.929_{\pm .046}$} & \scalea{$17.438_{\pm 2.269}$} & \scalea{$0.577_{\pm .085}$} \\
        & \scalea{256} & 
        \bst{\scalea{$0.591_{\pm .069}$}} & \bst{\scalea{$0.591_{\pm .069}$}} & \subbst{\scalea{$0.760_{\pm .000}$}} & \scalea{$2.152_{\pm .092}$} & 
        \scalea{$1.025_{\pm .080}$} & \scalea{$28.819_{\pm 2.444}$} & \scalea{$0.899_{\pm .237}$} \\
        \cmidrule(lr){2-9}
        & \scalea{\emph{Avg}} & 
        \bst{\scalea{0.202}} & \subbst{\scalea{0.215}} & \scalea{0.259} & \scalea{0.927} & 
        \scalea{0.899} & \scalea{15.771} & \scalea{0.556} \\
        \midrule
        
        \multirow{5}{*}{\rotatebox{90}{\scalebox{0.9}{Correlational}}}
        & \scalea{24} & 
        \subbst{\scalea{$0.033_{\pm .023}$}} & \scalea{$0.044_{\pm .014}$} & \scalea{$0.049_{\pm .000}$} & \scalea{$0.066_{\pm .016}$} & 
        \scalea{$0.103_{\pm .028}$} & \scalea{$1.482_{\pm .007}$} & \bst{\scalea{$0.032_{\pm .008}$}} \\
        & \scalea{64} & 
        \scalea{$0.045_{\pm .012}$} & \subbst{\scalea{$0.040_{\pm .005}$}} & \scalea{$0.055_{\pm .000}$} & \scalea{$0.069_{\pm .016}$} & 
        \scalea{$0.075_{\pm .020}$} & \scalea{$1.183_{\pm .020}$} & \bst{\scalea{$0.039_{\pm .016}$}} \\
        & \scalea{128} & 
        \bst{\scalea{$0.044_{\pm .006}$}} & \scalea{$0.053_{\pm .019}$} & \subbst{\scalea{$0.049_{\pm .000}$}} & \scalea{$0.089_{\pm .015}$} & 
        \scalea{$0.066_{\pm .031}$} & \scalea{$1.616_{\pm .007}$} & \scalea{$0.053_{\pm .034}$} \\
        & \scalea{256} & 
        \bst{\scalea{$0.040_{\pm .008}$}} & \scalea{$0.057_{\pm .016}$} & \subbst{\scalea{$0.044_{\pm .000}$}} & \scalea{$0.119_{\pm .009}$} & 
        \scalea{$0.045_{\pm .012}$} & \scalea{$1.480_{\pm .008}$} & \subbst{\scalea{$0.044_{\pm .019}$}} \\
        \cmidrule(lr){2-9}
        & \scalea{\emph{Avg}} & 
        \bst{\scalea{0.041}} & \scalea{0.049} & \scalea{0.049} & \scalea{0.086} & 
        \scalea{0.072} & \scalea{1.440} & \subbst{\scalea{0.042}} \\
        \midrule

        \multirow{5}{*}{\rotatebox{90}{\scalebox{0.9}{Discriminative}}}
        & \scalea{24} & 
        \subbst{\scalea{$0.008_{\pm .006}$}} & \bst{\scalea{$0.004_{\pm .004}$}} & \scalea{$0.021_{\pm .000}$} & \scalea{$0.073_{\pm .016}$} & 
        \scalea{$0.167_{\pm .138}$} & \scalea{$0.445_{\pm .087}$} & \scalea{$0.066_{\pm .142}$} \\
        & \scalea{64} & 
        \bst{\scalea{$0.022_{\pm .007}$}} & \subbst{\scalea{$0.032_{\pm .020}$}} & \scalea{$0.035_{\pm .000}$} & \scalea{$0.077_{\pm .007}$} & 
        \scalea{$0.167_{\pm .106}$} & \scalea{$0.456_{\pm .041}$} & \scalea{$0.203_{\pm .066}$} \\
        & \scalea{128} & 
        \bst{\scalea{$0.038_{\pm .035}$}} & \scalea{$0.094_{\pm .066}$} & \subbst{\scalea{$0.062_{\pm .000}$}} & \scalea{$0.171_{\pm .016}$} & 
        \scalea{$0.169_{\pm .141}$} & \scalea{$0.414_{\pm .158}$} & \scalea{$0.198_{\pm .382}$} \\
        & \scalea{256} & 
        \scalea{$0.227_{\pm .041}$} & \scalea{$0.216_{\pm .119}$} & \scalea{$0.237_{\pm .000}$} & \scalea{$0.222_{\pm .008}$} & 
        \subbst{\scalea{$0.199_{\pm .130}$}} & \scalea{$0.437_{\pm .073}$} & \bst{\scalea{$0.084_{\pm .210}$}} \\
        \cmidrule(lr){2-9}
        & \scalea{\emph{Avg}} & 
        \bst{\scalea{0.074}} & \subbst{\scalea{0.087}} & \scalea{0.089} & \scalea{0.136} & 
        \scalea{0.176} & \scalea{0.438} & \scalea{0.138} \\
        \midrule

        \multirow{5}{*}{\rotatebox{90}{\scalebox{0.9}{Predictive}}}
        & \scalea{24} & 
        \bst{\scalea{$0.100_{\pm .004}$}} & \scalea{$0.119_{\pm .004}$} & \scalea{$0.119_{\pm .000}$} & \subbst{\scalea{$0.118_{\pm .005}$}} & 
        \scalea{$0.128_{\pm .003}$} & \scalea{$0.199_{\pm .031}$} & \scalea{$0.119_{\pm .001}$} \\
        & \scalea{64} & 
        \bst{\scalea{$0.114_{\pm .005}$}} & \subbst{\scalea{$0.116_{\pm .008}$}} & \subbst{\scalea{$0.116_{\pm .000}$}} & \scalea{$0.117_{\pm .003}$} & 
        \scalea{$0.118_{\pm .002}$} & \scalea{$0.151_{\pm .007}$} & \scalea{$0.120_{\pm .011}$} \\
        & \scalea{128} & 
        \subbst{\scalea{$0.113_{\pm .009}$}} & \bst{\scalea{$0.108_{\pm .013}$}} & \subbst{\scalea{$0.113_{\pm .000}$}} & \scalea{$0.115_{\pm .007}$} & 
        \scalea{$0.119_{\pm .006}$} & \scalea{$0.144_{\pm .008}$} & \bst{\scalea{$0.108_{\pm .004}$}} \\
        & \scalea{256} & 
        \bst{\scalea{$0.102_{\pm .015}$}} & \subbst{\scalea{$0.105_{\pm .008}$}} & \scalea{$0.129_{\pm .000}$} & \scalea{$0.119_{\pm .009}$} & 
        \scalea{$0.112_{\pm .004}$} & \scalea{$0.149_{\pm .009}$} & \scalea{$0.115_{\pm .016}$} \\
        \cmidrule(lr){2-9}
        & \scalea{\emph{Avg}} & 
        \bst{\scalea{0.107}} & \subbst{\scalea{0.112}} & \scalea{0.119} & \scalea{0.117} & 
        \scalea{0.119} & \scalea{0.161} & \scalea{0.116} \\
        \bottomrule
    \end{tabular}
    \end{minipage}
    \hfill
    \begin{minipage}[t]{0.48\textwidth}
    \vspace{0pt}
    \centering
    \caption{
        \textbf{Unconditional Generation on Stocks Dataset}.
        \bst{Bold} and \subbst{underline} denote best and second best results.
    }
    \label{tab:uncond_stocks}
    \renewcommand{\arraystretch}{0.8}
    \setlength{\tabcolsep}{2.4pt}
    \begin{tabular}{c|c|ccccccc}
        \toprule
        \multicolumn{2}{c|}{\scaleb{Models}} & 
        \textbf{\scaleb{Cont.}} & \textbf{\scaleb{Disc.}} & \scaleb{FreqDiff} & \scaleb{Diff-TS} & 
        \scaleb{TimeVAE} & \scaleb{TimeGAN} & \scaleb{DDPM}    \\
        \midrule

        \multirow{5}{*}{\rotatebox{90}{\scalebox{0.9}{Context-FID}}}
        & \scalea{24} & 
        \subbst{\scalea{$0.022_{\pm .006}$}} & \bst{\scalea{$0.006_{\pm .001}$}} & \scalea{$0.038_{\pm .000}$} & \scalea{$0.187_{\pm .021}$} & 
        \scalea{$0.190_{\pm .043}$} & \scalea{$1.739_{\pm .558}$} & \scalea{$0.025_{\pm .004}$} \\
        & \scalea{64} & 
        \scalea{$0.096_{\pm .014}$} & \bst{\scalea{$0.032_{\pm .012}$}} & \subbst{\scalea{$0.090_{\pm .000}$}} & \scalea{$0.424_{\pm .026}$} & 
        \scalea{$0.371_{\pm .076}$} & \scalea{$2.143_{\pm .448}$} & \scalea{$0.093_{\pm .042}$} \\
        & \scalea{128} & 
        \scalea{$0.413_{\pm .045}$} & \subbst{\scalea{$0.338_{\pm .072}$}} & \bst{\scalea{$0.277_{\pm .000}$}} & \scalea{$0.599_{\pm .122}$} & 
        \scalea{$0.523_{\pm .076}$} & \scalea{$3.496_{\pm .548}$} & \scalea{$0.378_{\pm .140}$} \\
        & \scalea{256} & 
        \scalea{$0.556_{\pm .141}$} & \scalea{$0.459_{\pm .146}$} & \bst{\scalea{$0.268_{\pm .000}$}} & \scalea{$0.532_{\pm .071}$} & 
        \subbst{\scalea{$0.286_{\pm .066}$}} & \scalea{$4.578_{\pm 1.583}$} & \scalea{$2.046_{\pm .556}$} \\
        \cmidrule(lr){2-9}
        & \scalea{\emph{Avg}} & 
        \scalea{0.272} & \subbst{\scalea{0.209}} & \bst{\scalea{0.168}} & \scalea{0.436} & 
        \scalea{0.343} & \scalea{2.989} & \scalea{0.636} \\
        \midrule

        \multirow{5}{*}{\rotatebox{90}{\scalebox{0.9}{Correlational}}}
        & \scalea{24} & 
        \bst{\scalea{$0.004_{\pm .003}$}} & \subbst{\scalea{$0.006_{\pm .007}$}} & \subbst{\scalea{$0.006_{\pm .000}$}} & \scalea{$0.007_{\pm .006}$} & 
        \scalea{$0.107_{\pm .006}$} & \scalea{$0.223_{\pm .003}$} & \bst{\scalea{$0.004_{\pm .004}$}} \\
        & \scalea{64} & 
        \scalea{$0.009_{\pm .005}$} & \scalea{$0.011_{\pm .007}$} & \subbst{\scalea{$0.006_{\pm .000}$}} & \scalea{$0.014_{\pm .004}$} & 
        \scalea{$0.093_{\pm .006}$} & \scalea{$0.218_{\pm .005}$} & \bst{\scalea{$0.003_{\pm .004}$}} \\
        & \scalea{128} & 
        \scalea{$0.014_{\pm .004}$} & \bst{\scalea{$0.007_{\pm .003}$}} & \subbst{\scalea{$0.008_{\pm .000}$}} & \scalea{$0.020_{\pm .006}$} & 
        \scalea{$0.087_{\pm .001}$} & \scalea{$0.217_{\pm .005}$} & \scalea{$0.015_{\pm .004}$} \\
        & \scalea{256} & 
        \scalea{$0.018_{\pm .004}$} & \subbst{\scalea{$0.008_{\pm .008}$}} & \bst{\scalea{$0.007_{\pm .000}$}} & \scalea{$0.009_{\pm .005}$} & 
        \scalea{$0.062_{\pm .007}$} & \scalea{$0.760_{\pm .004}$} & \scalea{$0.021_{\pm .015}$} \\
        \cmidrule(lr){2-9}
        & \scalea{\emph{Avg}} & 
        \scalea{0.011} & \subbst{\scalea{0.008}} & \bst{\scalea{0.007}} & \scalea{0.013} & 
        \scalea{0.087} & \scalea{0.355} & \scalea{0.011} \\
        \midrule

        \multirow{5}{*}{\rotatebox{90}{\scalebox{0.9}{Discriminative}}}
        & \scalea{24} & 
        \bst{\scalea{$0.014_{\pm .011}$}} & \subbst{\scalea{$0.015_{\pm .021}$}} & \scalea{$0.024_{\pm .000}$} & \scalea{$0.086_{\pm .007}$} & 
        \scalea{$0.100_{\pm .075}$} & \scalea{$0.459_{\pm .021}$} & \scalea{$0.108_{\pm .179}$} \\
        & \scalea{64} & 
        \bst{\scalea{$0.016_{\pm .013}$}} & \scalea{$0.046_{\pm .010}$} & \subbst{\scalea{$0.023_{\pm .000}$}} & \scalea{$0.081_{\pm .029}$} & 
        \scalea{$0.149_{\pm .099}$} & \scalea{$0.315_{\pm .059}$} & \scalea{$0.222_{\pm .003}$} \\
        & \scalea{128} & 
        \subbst{\scalea{$0.061_{\pm .017}$}} & \scalea{$0.118_{\pm .051}$} & \bst{\scalea{$0.047_{\pm .000}$}} & \scalea{$0.122_{\pm .006}$} & 
        \scalea{$0.127_{\pm .144}$} & \scalea{$0.196_{\pm .141}$} & \scalea{$0.204_{\pm .134}$} \\
        & \scalea{256} & 
        \scalea{$0.173_{\pm .008}$} & \bst{\scalea{$0.064_{\pm .010}$}} & \subbst{\scalea{$0.083_{\pm .000}$}} & \scalea{$0.134_{\pm .020}$} & 
        \scalea{$0.139_{\pm .159}$} & \scalea{$0.397_{\pm .151}$} & \scalea{$0.244_{\pm .274}$} \\
        \cmidrule(lr){2-9}
        & \scalea{\emph{Avg}} & 
        \scalea{0.066} & \subbst{\scalea{0.061}} & \bst{\scalea{0.044}} & \scalea{0.106} & 
        \scalea{0.129} & \scalea{0.342} & \scalea{0.195} \\
        \midrule

        \multirow{5}{*}{\rotatebox{90}{\scalebox{0.9}{Predictive}}}
        & \scalea{24} & 
        \bst{\scalea{$0.037_{\pm .000}$}} & \bst{\scalea{$0.037_{\pm .000}$}} & \bst{\scalea{$0.037_{\pm .000}$}} & \bst{\scalea{$0.037_{\pm .000}$}} & 
        \scalea{$0.040_{\pm .000}$} & \scalea{$0.061_{\pm .001}$} & \subbst{\scalea{$0.039_{\pm .001}$}} \\
        & \scalea{64} & 
        \subbst{\scalea{$0.037_{\pm .000}$}} & \bst{\scalea{$0.036_{\pm .000}$}} & \subbst{\scalea{$0.037_{\pm .000}$}} & \subbst{\scalea{$0.037_{\pm .000}$}} & 
        \scalea{$0.038_{\pm .000}$} & \scalea{$0.043_{\pm .001}$} & \scalea{$0.042_{\pm .001}$} \\
        & \scalea{128} & 
        \subbst{\scalea{$0.037_{\pm .000}$}} & \scalea{$0.041_{\pm .000}$} & \bst{\scalea{$0.036_{\pm .000}$}} & \bst{\scalea{$0.036_{\pm .000}$}} & 
        \subbst{\scalea{$0.037_{\pm .000}$}} & \scalea{$0.050_{\pm .002}$} & \scalea{$0.041_{\pm .001}$} \\
        & \scalea{256} & 
        \scalea{$0.037_{\pm .000}$} & \scalea{$0.038_{\pm .000}$} & \bst{\scalea{$0.035_{\pm .000}$}} & \subbst{\scalea{$0.036_{\pm .000}$}} & 
        \bst{\scalea{$0.035_{\pm .002}$}} & \scalea{$0.099_{\pm .035}$} & \scalea{$0.044_{\pm .000}$} \\
        \cmidrule(lr){2-9}
        & \scalea{\emph{Avg}} & 
        \subbst{\scalea{0.037}} & \scalea{0.038} & \bst{\scalea{0.036}} & \subbst{\scalea{0.037}} & 
        \scalea{0.038} & \scalea{0.063} & \scalea{0.042} \\
        \bottomrule
    \end{tabular}
    \end{minipage}
\end{table}

\begin{table}[H]
    \centering
    \begin{minipage}[t]{0.48\textwidth}
    \vspace{0pt}
    \centering
    \caption{
        \textbf{Unconditional Generation on Sines Dataset}.
        \bst{Bold} and \subbst{underline} denote best and second best results.
    }
    \label{tab:uncond_sines}
    \renewcommand{\arraystretch}{0.8}
    \setlength{\tabcolsep}{2.4pt}
    \begin{tabular}{c|c|ccccccc}
        \toprule
        \multicolumn{2}{c|}{\scaleb{Models}} & 
        \textbf{\scaleb{Cont.}} & \textbf{\scaleb{Disc.}} & \scaleb{FreqDiff} & \scaleb{Diff-TS} & 
        \scaleb{TimeVAE} & \scaleb{TimeGAN} & \scaleb{DDPM}    \\
        \midrule

        \multirow{5}{*}{\rotatebox{90}{\scalebox{0.9}{Context-FID}}}
        & \scalea{24} & 
        \subbst{\scalea{$0.007_{\pm .001}$}} & \bst{\scalea{$0.003_{\pm .001}$}} & \scalea{$0.013_{\pm .000}$} & \scalea{$0.009_{\pm .002}$} & 
        \scalea{$0.291_{\pm .004}$} & \scalea{$0.063_{\pm .005}$} & \scalea{$0.011_{\pm .004}$} \\
        & \scalea{64} & 
        \bst{\scalea{$0.012_{\pm .002}$}} & \scalea{$0.018_{\pm .003}$} & \subbst{\scalea{$0.017_{\pm .000}$}} & \scalea{$0.029_{\pm .004}$} & 
        \scalea{$1.263_{\pm .069}$} & \scalea{$1.392_{\pm .065}$} & \scalea{$0.038_{\pm .010}$} \\
        & \scalea{128} & 
        \subbst{\scalea{$0.043_{\pm .002}$}} & \scalea{$0.062_{\pm .005}$} & \bst{\scalea{$0.038_{\pm .000}$}} & \scalea{$0.113_{\pm .025}$} & 
        \scalea{$0.810_{\pm .120}$} & \scalea{$15.224_{\pm .698}$} & \scalea{$0.070_{\pm .023}$} \\
        & \scalea{256} & 
        \scalea{$0.213_{\pm .022}$} & \scalea{$0.317_{\pm .014}$} & \subbst{\scalea{$0.201_{\pm .000}$}} & \scalea{$0.442_{\pm .030}$} & 
        \scalea{$0.556_{\pm .078}$} & \scalea{$26.385_{\pm 1.266}$} & \bst{\scalea{$0.163_{\pm .046}$}} \\
        \cmidrule(lr){2-9}
        & \scalea{\emph{Avg}} & 
        \subbst{\scalea{0.069}} & \scalea{0.100} & \bst{\scalea{0.067}} & \scalea{0.148} & 
        \scalea{0.730} & \scalea{10.766} & \scalea{0.071} \\
        \midrule

        \multirow{5}{*}{\rotatebox{90}{\scalebox{0.9}{Correlational}}}
        & \scalea{24} & 
        \scalea{$0.026_{\pm .011}$} & \bst{\scalea{$0.019_{\pm .006}$}} & \scalea{$0.027_{\pm .000}$} & \scalea{$0.035_{\pm .012}$} & 
        \scalea{$0.035_{\pm .008}$} & \scalea{$0.041_{\pm .009}$} & \subbst{\scalea{$0.021_{\pm .015}$}} \\
        & \scalea{64} & 
        \subbst{\scalea{$0.023_{\pm .007}$}} & \scalea{$0.030_{\pm .010}$} & \bst{\scalea{$0.022_{\pm .000}$}} & \subbst{\scalea{$0.023_{\pm .009}$}} & 
        \scalea{$0.165_{\pm .011}$} & \scalea{$0.137_{\pm .010}$} & \scalea{$0.030_{\pm .004}$} \\
        & \scalea{128} & 
        \bst{\scalea{$0.017_{\pm .004}$}} & \scalea{$0.021_{\pm .003}$} & \scalea{$0.025_{\pm .000}$} & \scalea{$0.040_{\pm .009}$} & 
        \scalea{$0.093_{\pm .009}$} & \scalea{$0.858_{\pm .009}$} & \subbst{\scalea{$0.019_{\pm .004}$}} \\
        & \scalea{256} & 
        \bst{\scalea{$0.011_{\pm .005}$}} & \scalea{$0.015_{\pm .004}$} & \scalea{$0.013_{\pm .000}$} & \scalea{$0.021_{\pm .005}$} & 
        \scalea{$0.050_{\pm .007}$} & \scalea{$0.918_{\pm .005}$} & \subbst{\scalea{$0.012_{\pm .007}$}} \\
        \cmidrule(lr){2-9}
        & \scalea{\emph{Avg}} & 
        \bst{\scalea{0.019}} & \subbst{\scalea{0.021}} & \scalea{0.022} & \scalea{0.030} & 
        \scalea{0.086} & \scalea{0.489} & \subbst{\scalea{0.021}} \\
        \midrule

        \multirow{5}{*}{\rotatebox{90}{\scalebox{0.9}{Discriminative}}}
        & \scalea{24} & 
        \scalea{$0.008_{\pm .004}$} & \subbst{\scalea{$0.007_{\pm .008}$}} & \scalea{$0.022_{\pm .000}$} & \bst{\scalea{$0.006_{\pm .006}$}} & 
        \scalea{$0.021_{\pm .020}$} & \scalea{$0.020_{\pm .013}$} & \scalea{$0.012_{\pm .009}$} \\
        & \scalea{64} & 
        \bst{\scalea{$0.008_{\pm .005}$}} & \subbst{\scalea{$0.010_{\pm .003}$}} & \scalea{$0.019_{\pm .000}$} & \scalea{$0.023_{\pm .019}$} & 
        \scalea{$0.101_{\pm .202}$} & \scalea{$0.162_{\pm .064}$} & \scalea{$0.019_{\pm .008}$} \\
        & \scalea{128} & 
        \bst{\scalea{$0.013_{\pm .016}$}} & \scalea{$0.022_{\pm .007}$} & \scalea{$0.027_{\pm .000}$} & \scalea{$0.116_{\pm .019}$} & 
        \subbst{\scalea{$0.019_{\pm .009}$}} & \scalea{$0.477_{\pm .019}$} & \scalea{$0.027_{\pm .041}$} \\
        & \scalea{256} & 
        \scalea{$0.027_{\pm .006}$} & \subbst{\scalea{$0.023_{\pm .019}$}} & \scalea{$0.064_{\pm .000}$} & \scalea{$0.163_{\pm .105}$} & 
        \scalea{$0.130_{\pm .214}$} & \scalea{$0.378_{\pm .263}$} & \bst{\scalea{$0.011_{\pm .013}$}} \\
        \cmidrule(lr){2-9}
        & \scalea{\emph{Avg}} & 
        \bst{\scalea{0.014}} & \subbst{\scalea{0.016}} & \scalea{0.033} & \scalea{0.077} & 
        \scalea{0.068} & \scalea{0.259} & \scalea{0.017} \\
        \midrule

        \multirow{5}{*}{\rotatebox{90}{\scalebox{0.9}{Predictive}}}
        & \scalea{24} & 
        \bst{\scalea{$0.093_{\pm .000}$}} & \bst{\scalea{$0.093_{\pm .000}$}} & \bst{\scalea{$0.093_{\pm .000}$}} & \bst{\scalea{$0.093_{\pm .001}$}} & 
        \scalea{$0.227_{\pm .000}$} & \subbst{\scalea{$0.095_{\pm .001}$}} & \bst{\scalea{$0.093_{\pm .000}$}} \\
        & \scalea{64} & 
        \bst{\scalea{$0.188_{\pm .003}$}} & \subbst{\scalea{$0.191_{\pm .002}$}} & \scalea{$0.192_{\pm .000}$} & \scalea{$0.193_{\pm .001}$} & 
        \scalea{$0.213_{\pm .001}$} & \scalea{$0.217_{\pm .005}$} & \bst{\scalea{$0.188_{\pm .007}$}} \\
        & \scalea{128} & 
        \subbst{\scalea{$0.252_{\pm .003}$}} & \subbst{\scalea{$0.252_{\pm .003}$}} & \bst{\scalea{$0.250_{\pm .000}$}} & \scalea{$0.257_{\pm .003}$} & 
        \scalea{$0.264_{\pm .002}$} & \scalea{$0.279_{\pm .008}$} & \scalea{$0.255_{\pm .001}$} \\
        & \scalea{256} & 
        \bst{\scalea{$0.286_{\pm .003}$}} & \subbst{\scalea{$0.287_{\pm .003}$}} & \bst{\scalea{$0.286_{\pm .000}$}} & \scalea{$0.288_{\pm .003}$} & 
        \scalea{$0.292_{\pm .003}$} & \scalea{$0.328_{\pm .014}$} & \scalea{$0.288_{\pm .003}$} \\
        \cmidrule(lr){2-9}
        & \scalea{\emph{Avg}} & 
        \bst{\scalea{0.205}} & \subbst{\scalea{0.206}} & \bst{\scalea{0.205}} & \scalea{0.208} & 
        \scalea{0.249} & \scalea{0.230} & \subbst{\scalea{0.206}} \\
        \bottomrule
    \end{tabular}
    \end{minipage}
    \hfill
    \begin{minipage}[t]{0.48\textwidth}
    \vspace{0pt}
    \centering
    \caption{
        \textbf{Unconditional Generation on Air Quality Dataset}.
        \bst{Bold} and \subbst{underline} denote best and second best results.
    }
    \label{tab:uncond_air}
    \renewcommand{\arraystretch}{0.8}
    \setlength{\tabcolsep}{2.4pt}
    \begin{tabular}{c|c|ccccccc}
        \toprule
        \multicolumn{2}{c|}{\scaleb{Models}} & 
        \scaleb{Cont.}   & \scaleb{Disc.}   & \scaleb{FreqDiff} & \scaleb{Diff-TS} & 
        \scaleb{TimeVAE} & \scaleb{TimeGAN} & \scaleb{DDPM}    \\
        \midrule

        \multirow{5}{*}{\rotatebox{90}{\scalebox{0.9}{Context-FID}}}
        & \scalea{24} & 
        \subbst{\scalea{$0.026_{\pm .001}$}} & \bst{\scalea{$0.019_{\pm .001}$}} & \scalea{$0.050_{\pm .000}$} & \scalea{$0.138_{\pm .014}$} & 
        \scalea{$0.358_{\pm .023}$} & \scalea{$4.864_{\pm .989}$} & \scalea{$0.043_{\pm .011}$} \\
        & \scalea{64} & 
        \bst{\scalea{$0.024_{\pm .002}$}} & \subbst{\scalea{$0.065_{\pm .002}$}} & \scalea{$0.185_{\pm .000}$} & \scalea{$0.387_{\pm .023}$} & 
        \scalea{$0.460_{\pm .021}$} & \scalea{$10.320_{\pm .487}$} & \scalea{$0.284_{\pm .053}$} \\
        & \scalea{128} & 
        \bst{\scalea{$0.122_{\pm .007}$}} & \scalea{$0.421_{\pm .034}$} & \scalea{$0.583_{\pm .000}$} & \subbst{\scalea{$0.340_{\pm .031}$}} & 
        \scalea{$1.634_{\pm .117}$} & \scalea{$14.022_{\pm 1.376}$} & \scalea{$0.853_{\pm .136}$} \\
        & \scalea{256} & 
        \subbst{\scalea{$1.183_{\pm .052}$}} & \scalea{$1.575_{\pm .076}$} & \scalea{$1.586_{\pm .000}$} & \bst{\scalea{$0.723_{\pm .057}$}} & 
        \scalea{$3.200_{\pm .134}$} & \scalea{$24.603_{\pm 1.898}$} & \scalea{$1.230_{\pm .369}$} \\
        \cmidrule(lr){2-9}
        & \scalea{\emph{Avg}} & 
        \bst{\scalea{0.339}} & \scalea{0.520} & \scalea{0.601} & \subbst{\scalea{0.397}} & 
        \scalea{1.413} & \scalea{13.452} & \scalea{0.603} \\
        \midrule

        \multirow{5}{*}{\rotatebox{90}{\scalebox{0.9}{Correlational}}}
        & \scalea{24} & 
        \subbst{\scalea{$0.123_{\pm .058}$}} & \bst{\scalea{$0.115_{\pm .036}$}} & \scalea{$0.155_{\pm .000}$} & \scalea{$0.163_{\pm .035}$} & 
        \scalea{$0.358_{\pm .026}$} & \scalea{$0.494_{\pm .008}$} & \scalea{$0.170_{\pm .077}$} \\
        & \scalea{64} & 
        \bst{\scalea{$0.068_{\pm .022}$}} & \scalea{$0.111_{\pm .032}$} & \subbst{\scalea{$0.099_{\pm .000}$}} & \scalea{$0.202_{\pm .047}$} & 
        \scalea{$0.298_{\pm .059}$} & \scalea{$0.776_{\pm .003}$} & \scalea{$0.184_{\pm .044}$} \\
        & \scalea{128} & 
        \bst{\scalea{$0.091_{\pm .027}$}} & \bst{\scalea{$0.091_{\pm .033}$}} & \subbst{\scalea{$0.106_{\pm .000}$}} & \scalea{$0.179_{\pm .028}$} & 
        \scalea{$0.330_{\pm .020}$} & \scalea{$0.862_{\pm .005}$} & \scalea{$0.250_{\pm .054}$} \\
        & \scalea{256} & 
        \scalea{$0.256_{\pm .038}$} & \subbst{\scalea{$0.194_{\pm .046}$}} & \bst{\scalea{$0.168_{\pm .000}$}} & \scalea{$0.235_{\pm .030}$} & 
        \scalea{$0.229_{\pm .047}$} & \scalea{$0.916_{\pm .005}$} & \scalea{$0.332_{\pm .054}$} \\
        \cmidrule(lr){2-9}
        & \scalea{\emph{Avg}} & 
        \scalea{0.135} & \bst{\scalea{0.128}} & \subbst{\scalea{0.132}} & \scalea{0.195} & 
        \scalea{0.304} & \scalea{0.762} & \scalea{0.234} \\
        \midrule

        \multirow{5}{*}{\rotatebox{90}{\scalebox{0.9}{Discriminative}}}
        & \scalea{24} & 
        \scalea{$0.065_{\pm .014}$} & \bst{\scalea{$0.036_{\pm .018}$}} & \subbst{\scalea{$0.043_{\pm .000}$}} & \scalea{$0.072_{\pm .012}$} & 
        \scalea{$0.391_{\pm .034}$} & \scalea{$0.432_{\pm .144}$} & \scalea{$0.064_{\pm .015}$} \\
        & \scalea{64} & 
        \bst{\scalea{$0.014_{\pm .012}$}} & \scalea{$0.182_{\pm .054}$} & \scalea{$0.268_{\pm .000}$} & \subbst{\scalea{$0.125_{\pm .015}$}} & 
        \scalea{$0.389_{\pm .026}$} & \scalea{$0.355_{\pm .135}$} & \scalea{$0.142_{\pm .064}$} \\
        & \scalea{128} & 
        \scalea{$0.256_{\pm .151}$} & \scalea{$0.455_{\pm .011}$} & \scalea{$0.489_{\pm .000}$} & \bst{\scalea{$0.160_{\pm .013}$}} & 
        \scalea{$0.442_{\pm .010}$} & \scalea{$0.473_{\pm .011}$} & \subbst{\scalea{$0.238_{\pm .058}$}} \\
        & \scalea{256} & 
        \scalea{$0.484_{\pm .010}$} & \scalea{$0.448_{\pm .026}$} & \scalea{$0.485_{\pm .000}$} & \bst{\scalea{$0.274_{\pm .098}$}} & 
        \scalea{$0.462_{\pm .016}$} & \scalea{$0.434_{\pm .151}$} & \subbst{\scalea{$0.364_{\pm .095}$}} \\
        \cmidrule(lr){2-9}
        & \scalea{\emph{Avg}} & 
        \scalea{0.205} & \scalea{0.280} & \scalea{0.321} & \bst{\scalea{0.158}} & 
        \scalea{0.421} & \scalea{0.424} & \subbst{\scalea{0.202}} \\
        \midrule

        \multirow{5}{*}{\rotatebox{90}{\scalebox{0.9}{Predictive}}}
        & \scalea{24} & 
        \bst{\scalea{$0.020_{\pm .001}$}} & \subbst{\scalea{$0.021_{\pm .001}$}} & \scalea{$0.022_{\pm .000}$} & \scalea{$0.022_{\pm .001}$} & 
        \scalea{$0.030_{\pm .002}$} & \scalea{$0.101_{\pm .008}$} & \subbst{\scalea{$0.021_{\pm .002}$}} \\
        & \scalea{64} & 
        \bst{\scalea{$0.019_{\pm .001}$}} & \subbst{\scalea{$0.020_{\pm .001}$}} & \scalea{$0.023_{\pm .000}$} & \scalea{$0.021_{\pm .001}$} & 
        \scalea{$0.026_{\pm .001}$} & \scalea{$0.216_{\pm .016}$} & \scalea{$0.022_{\pm .006}$} \\
        & \scalea{128} & 
        \bst{\scalea{$0.019_{\pm .000}$}} & \subbst{\scalea{$0.021_{\pm .001}$}} & \scalea{$0.024_{\pm .000}$} & \subbst{\scalea{$0.021_{\pm .001}$}} & 
        \scalea{$0.026_{\pm .001}$} & \scalea{$0.293_{\pm .034}$} & \scalea{$0.022_{\pm .003}$} \\
        & \scalea{256} & 
        \bst{\scalea{$0.021_{\pm .000}$}} & \scalea{$0.024_{\pm .000}$} & \scalea{$0.025_{\pm .000}$} & \bst{\scalea{$0.021_{\pm .002}$}} & 
        \scalea{$0.029_{\pm .001}$} & \scalea{$0.357_{\pm .008}$} & \subbst{\scalea{$0.022_{\pm .006}$}} \\
        \cmidrule(lr){2-9}
        & \scalea{\emph{Avg}} & 
        \bst{\scalea{0.020}} & \scalea{0.022} & \scalea{0.024} & \subbst{\scalea{0.021}} & 
        \scalea{0.028} & \scalea{0.242} & \scalea{0.022} \\
        \bottomrule
    \end{tabular}
    \end{minipage}
\end{table}

\end{document}